\newif\ifshownewcontent
\def\Re{\mathbb{R}}
\def\Sec#1{Sec.~\ref{#1}}
\def\Fig#1{Fig.~\ref{#1}}
\def\notes#1{\marginpar{\tiny #1}\typeout{Notes!
Notes!
Notes!
}}
\renewcommand{\notes}[1]{\typeout{notes!}}
\def\FRAC#1#2#3{\genfrac{}{}{}{#1}{#2}{#3}}
\def\half{{\mathchoice{\FRAC{1}{1}{2}}%
{\FRAC{2}{1}{2}}%
{\FRAC{3}{1}{2}}%
{\FRAC{4}{1}{2}}}}
\def\Re{\field{R}}
\def\Sec#1{Sec.~\ref{#1}}
\def\transpose{{\hbox{\rm\tiny T}}}
\def\clP{{\cal P}}
\def\clZ{{\cal Z}}
\def\Sec#1{Sec~\ref{#1}}
\def\E{{\sf E}}
\def\Fig#1{Fig.~\ref{#1}}
\def\Sec#1{Sec.~\ref{#1}}
\def\clZ{{\cal Z}}
\def\beq{\begin{eqnarray}} 
\def\bc{\begin{center}} 
\def\be{\begin{enumerate}}
\def\bi{\begin{itemize}} 
\def\bs{\begin{small}}
\def\bS{\begin{slide}}
\def\ec{\end{center}} 
\def\ee{\end{enumerate}}
\def\ei{\end{itemize}}
\def\es{\end{small}}
\def\eS{\end{slide}}
\def\eeq{\end{eqnarray}}
\newcommand{\newP}[1]{\medskip\noindent{\bf #1:}}
\def\Re{\mathbb{R}}
\def\E{{\sf E}}
\def\clY{{\cal Y}}
\def\Sec#1{Sec.~\ref{#1}}
\def\Thm#1{Thm.~\ref{#1}}
\def\Prop#1{Prop.~\ref{#1}}
\def\clD{{\cal D}}
\def\clP{{\cal P}}
\def\clZ{{\cal Z}}
\renewcommand{\Re}{\mathbb{R}}
\def\FRAC#1#2#3{\genfrac{}{}{}{#1}{#2}{#3}}
\newcommand{\var}{\text{var}}
\def\clD{{\cal D}}
\def\clF{{\cal F}}
\def\clG{{\cal G}}
\def\clM{{\cal M}}
\def\clN{{\cal N}}
\def\clP{{\cal P}}
\def\clU{{\cal U}}
\def\clV{{\cal V}}
\def\clY{{\cal Y}}
\def\clZ{{\cal Z}}
\def\E{{\sf E}}
\def\bS{\mathbb{S}}
\def\bO{\mathbb{O}}
\def\sJ{{\sf J}}
\def\ones{{\sf 1}}
\def\sP{{\sf P}}
\def\tp{{\hbox{\rm\tiny T}}}
\def\dvar{\operatorname{var}}
\def\opt{{\text{\rm (opt)}}}
\def\xfer{{\text{\rm (xfer)}}}
\def\dfltr{{\text{\rm (dfltr)}}}
\begin{document}

\title{Dual Filter: A Transformer-like Inference \\ Architecture for
  Hidden Markov Models}

\author{
  \name Heng-Sheng Chang \email hschang@illinois.edu \\
  \addr Coordinated Science Laboratory\\
  University of Illinois Urbana-Champaign\\
  Urbana, IL 61801, USA
  \AND
  \name Prashant G. Mehta \email mehtapg@illinois.edu \\
  \addr Coordinated Science Laboratory\\
  University of Illinois Urbana-Champaign\\
  Urbana, IL 61801, USA
}

\editor{My editor}

\maketitle

\begin{abstract}
This paper presents a mathematical framework for causal nonlinear prediction in settings where observations are generated from an underlying hidden Markov model (HMM).  
Both the problem formulation and the proposed solution are motivated by the decoder-only transformer architecture, in which a finite sequence of observations (tokens) is mapped to the conditional probability of the next token. 
Our objective is not to construct a mathematical model of a transformer. 
Rather, our interest lies in deriving, from first principles, transformer-like
architectures that solve the prediction problem for which the transformer is designed.  
The proposed framework is based on an original optimal control approach, where the prediction objective (MMSE) is reformulated as an optimal control problem.
An analysis of the optimal control problem is presented leading to a fixed-point equation on the space of probability measures. 
To solve the fixed-point equation, we introduce the dual filter, an iterative algorithm that closely parallels the architecture of decoder-only transformers.  
These parallels are discussed in detail along with the relationship to prior work on mathematical modeling of transformers as transport on the space of probability measures. 
Numerical experiments are provided to illustrate the performance of the algorithm using parameter values typical of research-scale transformer models. 
\end{abstract}

\begin{keywords}
  Nonlinear predictor, transformer, stochastic control, optimal control, hidden
  Markov model.
\end{keywords}

\section{Introduction}
\label{sec:intro}

Let $\bO=\{0,1,2,\hdots,m\}$ denote a finite set called the vocabulary.
An element of $\bO$ is referred to as a token.
A sequence of $T$ tokens is an $\bO^T$-valued random vector, denoted by
$\{Z_1,Z_2,\hdots,Z_T\}$.
A decoder-only transformer is an algorithm to compute the conditional probability of the next token (see~\cite{phuong2022formal}):
\begin{equation*}
  \sP(Z_{T+1}=z \mid Z_1,Z_{2},\hdots,Z_T),\quad z\in \bO.  
\end{equation*}
During inference with a well-trained transformer, the conditional probability is often sparse---that is, only a small subset of tokens has non-negligible probability. This sparsity is useful  for efficient sampling in generative AI applications~\citep{wolfram2023chatgpt}.

There are two distinguishing features of the decoder-only transformer architecture:
\begin{enumerate}
  \item Even though only the conditional probability at the terminal time $t=T$ is of interest, conditional probabilities are also computed for intermediate times,
    \begin{equation*}
      \sP(Z_{t+1}=z \mid Z_1,Z_2,\hdots,Z_t),\quad z\in \bO,\quad t=1,2,\hdots,T.
    \end{equation*}

  \item In all cases, the conditional probability of the next token is expressed as a causal, nonlinear function of the past tokens, implemented through a procedure known as causal masking. 
  In this paper, we refer to such a function as a nonlinear predictor (a formal definition is given after the model has been introduced).  
\end{enumerate}
The second item is in contrast to a
recurrent neural network (RNN) architecture, where a hidden state is
stored and recursively updated~\citep{graves2013generating,dai2019transformer}.  

A transformer architecture is reminiscent of the classical Wiener filter.
Recall that a Wiener filter computes the conditional expectation of a Gaussian process, also denoted (with slight abuse of notation) as $[Z_1,Z_2,\hdots,Z_T]$, in the following
causal form:
\begin{equation*}
  \E(Z_{T+1} \mid Z_1,Z_{2},\hdots,Z_T) = \text{(constant)} + \sum_{t=1}^T u_t^\transpose Z_t.  
\end{equation*}
The right-hand side is an example of a linear predictor where $u:=\{u_t\in\Re^{m\times m}:1\leq t\leq T\}$ are deterministic weights, to be designed or learned.  
The Wiener filtering theory is concerned with the synthesis of the optimal weights that
yield the conditional expectation~\cite[Ch.~7]{kailath2000linear}.

The objective of this paper is to develop both theory and algorithms for a nonlinear predictor, that computes the conditional probability of the next token, for a large but finite vocabulary. 
Our focus is exclusively on inference, not on learning. 
To this end, we adopt a model-based approach based on a hidden Markov model (HMM), where the observed tokens are generated from an underlying hidden stochastic process. 
This process evolves as a Markov chain, taking values in a finite state space of dimension $d$. 
We begin by describing the model and then introducing the problem.

\subsection{Math Preliminaries: Hidden Markov model and the nonlinear filter}

Throughout this paper, we consider discrete-time stochastic processes on a finite time-horizon $\mathbb{T}=\{0,1,2,\hdots,T\}$ with $T<\infty$. 
Fix the probability space $(\Omega,\clF_T,\sP)$ along with the filtration $\{\clF_t:t\in\mathbb{T}\}$ with respect to which all the stochastic processes are adapted.
A hidden Markov model (HMM) is specified by a pair of stochastic processes $(X,Z)$ defined as follows (see~\cite{elliott2008hidden,Moulines2006inference}): 
\begin{enumerate}
  \item The {state-space} $\bS=\{1,2,\hdots,d\}$ is finite.
  
  \item The {observation-space} $\bO=\{0,1,2,\hdots,m\}$ is finite of cardinality $|\bO|=m+1$. 
  (With $m=1$, there are only two observations $\bO=\{0,1\}$.
  Such an HMM is referred to as the binary-valued HMM.)
  
  \item The state process $X=\{X_t:t\in\mathbb{T}\}$ is a Markov chain taking values in the state-space $\bS$.  
  Its time-evolution is modeled as
  \begin{align*}
    \sP(X_0=x)  &= \mu(x), \quad x\in\bS. \\
    \sP(X_{t+1} =x' \mid X_t=x) &= A(x,x'), \quad x,x'\in\bS,\quad t=0,1,2\hdots,T-1.
  \end{align*}
  The matrix $A$ is row stochastic, meaning that for each $x\in\bS$, $A(x,\cdot)$ is a probability vector.
  
  \item The  observation process $Z=\{Z_1,Z_2,\hdots,Z_T\}$ takes values in $\bO$. Its model is given by
  \begin{equation*}
    \sP(Z_{t+1} = z \mid X_t=x) = C(x,z),\quad z\in\bO,\;x\in\bS, \quad t=0,1,2\hdots,T-1.
  \end{equation*}
  The matrix $C$ is row stochastic, meaning that for each $x\in\bS$, $C(x,\cdot)$ is a probability vector.
\end{enumerate}
The model is succinctly denoted by $(X,Z)=\text{HMM}(\mu,A,C)$. 

For the purposes of mathematical analysis, the filtrations are
introduced as follows:
  \begin{align*}
    \clF_t&:=\sigma(X_0,Z_1,X_1,\hdots,Z_t,X_t),\quad t\in\mathbb{T},\\
    \clZ_t&:=\sigma(Z_1,Z_2,\hdots,Z_t), \quad t=1,2,\hdots, T,
  \end{align*}
  with $\clZ_0:=\{\phi,\Omega\}$. $\clF=\{\clF_t:t\in\mathbb{T}\}$ is referred
  to as the canonical filtration with respect to which all processes
  are adapted.  The graphical model for the HMM is depicted 
  in~\Fig{fig:hmm}.  

\begin{figure}
  \begin{center}








\begin{tikzpicture}[
    node distance=1.2cm,
    latent/.style={draw, circle, minimum size=1cm, inner sep=0pt},
    obs/.style={draw, circle, minimum size=1cm, inner sep=0pt, fill=gray!20},
    arrow/.style={-stealth, thick},
    dots/.style={inner sep=3pt, minimum size=0.2cm}
]

    \node (start_x) {};
    \node (latent_label) [left=0.3cm of start_x] {state};
    \node (observation_label) [above=0.7 cm of latent_label] {observation (token)};

    \node (x0) at ($(start_x) + (0.5cm, 0)$) [latent] {$X_0$};
    \node (x1) [latent, right=of x0] {$X_1$};
    \node (dots) [dots, right=of x1] {$\cdots$};
    \node (xT_1) [latent, right=of dots] {$X_{T-1}$};
    \node (xT) [latent, right=of xT_1] {$X_T$};

    \node (z1) [obs, above=0.5cm of x1] {$Z_1$};
    \node (z2) [obs, right=of z1] {$Z_2$};
    \node (zd) [right=of z2] {$\cdots$};
    \node (zT) [obs, right=of zd] {$Z_T$};
    \node (zT_plus_1) [obs, right=of zT] {$Z_{T+1}$};

    \draw [arrow] (x0) -- (x1);
    \draw [arrow] (x1) -- (dots);
    \draw [arrow] (dots) -- (xT_1);
    \draw [arrow] (xT_1) -- (xT);

    \draw [arrow] (x0) -- (z1);
    \draw [arrow] (x1) -- (z2);
    \draw [arrow] (xT_1) -- (zT);
    \draw [arrow] (xT) -- (zT_plus_1);

    \draw [arrow] ($(x0) + (-0.5cm, -1.0cm)$) -- ($(xT) + (0.5cm, -1.0cm)$) node[right] {time};

    \draw ($(x0) + (0, -1.0cm)$) -- ($(x0) + (0, -1.2cm)$) node[below] {$0$};
    \draw ($(x1) + (0, -1.0cm)$) -- ($(x1) + (0, -1.2cm)$) node[below] {$1$};
    \draw ($(xT_1) + (0, -1.0cm)$) -- ($(xT_1) + (0, -1.2cm)$) node[below] {$T-1$};
    \draw ($(xT) + (0, -1.0cm)$) -- ($(xT) + (0, -1.2cm)$) node[below] {$T$};

\end{tikzpicture}
  \end{center}
  \vspace{-4em}
  \caption{Graphical model for the HMM.
  }
  \label{fig:hmm}
  \vspace*{-10pt}
\end{figure}

\newP{Notation}
The spaces of signed measures on $\bS$ and $\bO$ are denoted by $\clM(\bS)$ and $\clM(\bO)$, respectively, and the set of probability measures $\clP(\bS)\subset \clM(\bS)$ and $\clP(\bO) \subset \clM(\bO)$.  
The space of signed-measures and functions on $\bS$ are both identified with $\Re^d$: a real-valued function $f$ (resp., a measure $\mu$) is identified with a column vector in $\Re^d$ where the $x^{\text{th}}$ element of the vector represents $f(x)$ (resp., $\mu(x)$), for $x\in\bS$, and $\mu(f) := \mu^\tp f = \sum_{x} \mu(x) f(x)$.   
Lower case symbol, e.g., $f$, is used to denote a deterministic function while an upper case symbol, e.g., $F$, is used to denote a random function. 
For such a function, the notation $F\in\clZ_t$ means $F(x)$ is $\clZ_t$-measurable for each $x\in \bS$. 
A product of two functions $f$ and $g$ is denoted by $fg$ ($(fg)(x)=f(x)g(x)$ for $x\in\bS$), and $f^2=ff$.
The unity function is denoted by $\ones$ (e.g., as a function on $\bS$, $\ones (x)=1$ for all $x\in\bS$).
For a vector $f\in\Re^d$, $\text{diag}(f)$ is $d\times d$ diagonal matrix with entries given by the components of $f$.
We follow the convention $\frac{0}{0}=0$. 

\newP{Nonlinear filter} 
The objective of nonlinear (or stochastic) filtering is to compute the conditional measure, also called the posterior. 
It is defined as a conditional expectation,
\begin{equation*}
  \pi_t(f):= \E\big(f(X_t)\mid \clZ_t\big),\quad f \in \Re^d,\quad t\in\mathbb{T}.
\end{equation*}
A recursive formula for the same is given by
\begin{equation*}
  \pi_{t+1}(f) =  \frac{\pi_t(\text{diag}(C(:,Z_{t+1})) Af)}{\pi_t(\text{diag}(C(:,Z_{t+1}))\ones)},\quad t=0,1,2\hdots,T-1,\quad \pi_0=\mu.  
\end{equation*}
The formula is referred to as the nonlinear filter, also called the forward algorithm.  
The convention $\frac{0}{0}=0$ is used to handle the case where the denominator is zero (note in which case the numerator must also be zero).

For the prediction problem, the conditional probability is denoted by
\begin{equation*}\label{eq:prediction_problem}
p_{t}(z):= \sP(Z_{t+1}=z \mid Z_1,Z_2,\hdots,Z_t), \quad t\in\mathbb{T},
\end{equation*}
where note $p_0(z) = \sP(Z_{1}=z)$.  
It is computed from the nonlinear filter as
\begin{equation*}
p_t = \pi_t(C),\quad \text{that is}, \quad p_{t}(z) = \sum_{x\in\bS} \pi_t(x) C(x,z),\quad z\in\bO,\quad t\in\mathbb{T}.
\end{equation*}
We denote
\begin{align*}
  \pi &:= \{\pi_t:t\in \mathbb{T}\},\quad 
  p := \{p_t: t\in \mathbb{T}\}.
\end{align*}
The stochastic processes $\pi$ and $p$ take values in $\clP(\bS)$ and
$\clP(\bO)$, respectively.


\subsection{Problem statement (theory)}

Define a vector-valued mapping $e:\bO\to \Re^m$ as follows:
\begin{equation*}
  e(1) = \begin{bmatrix} 1 \\ 0 \\ \vdots \\ 0 \end{bmatrix}_{m\times 1},\quad e(2) = \begin{bmatrix} 0 \\ 1\\ \vdots\\ 0 \end{bmatrix}_{m\times 1}, \quad \hdots \quad e(m)= \begin{bmatrix} 0 \\ 0 \\ \vdots \\ 1 \end{bmatrix}_{m\times 1}, \quad e(0) = -e(1) -e(2) - \hdots -e(m)= \begin{bmatrix} -1 \\ -1\\ \vdots \\ -1 \end{bmatrix}_{m\times 1}.
\end{equation*}
(Recall here that the cardinality $|\bO| = m+1$.)  

\begin{example}[m=1]
  Consider the HMM with binary-valued observations where recall $\bO =
  \{0,1\}$.  For this model,
  \begin{equation*}
    e(1) = 1,\quad e(0) = -1.
  \end{equation*}
\end{example}

Our theory goal is to derive the following representation for the
conditional measure:
\begin{equation}\label{eq:nonlin_predictor_rep}
  \pi_T(F) = \text{(constant)} - \sum_{t=1}^{T} U_{t-1}^\transpose e(Z_t),\quad \sP\text{-a.s.},\quad F\in\clZ_T,
\end{equation}
where $U=\{U_0,U_1,\hdots,U_{T-1}\}$ is a $\clZ$-adapted stochastic process taking values in $\Re^m$.  
The representation in~\eqref{eq:nonlin_predictor_rep} is referred to as a {\em nonlinear predictor}.

The following remarks draw an analogy, from an input-output perspective, to the Wiener filter and the transformer.

\begin{remark}[Linear vs nonlinear predictors]
  \label{remark:connection_to_transformer}
  Compare~\eqref{eq:nonlin_predictor_rep} with the representation for the linear predictor. While the weights $u$ in a linear predictor are deterministic, the weights in a nonlinear predictor are random---i.e., 
  $U_t$ is allowed to depend upon past observations $\{Z_1,Z_2,\hdots,Z_t\}$ for each $0\leq t\leq T-1$.
  This dependence is what makes the predictor nonlinear. 
\end{remark}

\begin{remark}[Input (tokens)]
  The mapping $e:\bO\to \Re^m$ is reminiscent of the ``one-hot encoding'' of tokens, and may be regarded as such.  
  It differs slightly, however, because here the vocabulary size is $|\bO|=m+1$.
  The form of $e(\cdot)$ is chosen for well-posedness of the representation in~\eqref{eq:nonlin_predictor_rep}  (\Prop{prop:existence_nonlin_predictor_rep}).    
\end{remark}

\begin{remark}[Output (prediction at time $T$)]
  Taking the function $F$ as $F(x)=C(x,z)$ yields a nonlinear predictor for $p_T(z)$, for $z\in\bO$. 
  The key point is that the prediction is computed using the representation~\eqref{eq:nonlin_predictor_rep}.
\end{remark}

\begin{remark}[Predictions for intermediate times]
  Because $U$ is $\clZ$-adapted, the partial sum of the expression on the right-hand side of~\eqref{eq:nonlin_predictor_rep},
  \begin{equation*}
    S_t=\text{(constant)} - \sum_{s=1}^{t} U_{s-1}^\transpose e(Z_{s}),\quad \text{at any intermediate time} \;\; t=1,2,\hdots,T,
  \end{equation*}
  is $\clZ_t$-measurable (i.e., depends only on the observation sequence $\{Z_1,Z_2,\hdots,Z_t\}$ up to time $t$). 
  In this paper, the partial sum is related to the conditional measure
  $\pi_t(\cdot)$, which is helpful to establish a connection to the
  decoder-only transformer where causal predictions are computed for
  $1\leq t\leq T$. 
  \label{rem:intermediate_sum}
\end{remark}

\subsection{Contributions of this paper}

The main contributions of our paper are as follows:
\begin{enumerate}
  \item We begin by proving a well-posedness (existence) result (for $U$) such that the
  representation in~\eqref{eq:nonlin_predictor_rep} holds (\Prop{prop:existence_nonlin_predictor_rep}).  
  The remainder of the paper is concerned with the design and numerical approximation of this $U$.  
  The two objectives inform the split of the paper---theory for design in \Sec{sec:theory} and algorithms for approximation in \Sec{sec:algorithm}. 

  \item The theoretical contribution is an explicit formula for $U$ (\Thm{thm:optimal-solution}). 
  The formula is derived using an original optimal control approach, referred to as the {\em duality theory} for HMMs. 
  A precise statement of the duality---between nonlinear filtering and optimal control---is given in the form of a duality principle (\Thm{thm:duality-principle}).
  Based on this, $U$ is shown to admit an interpretation as an optimal control input (\Prop{prop:nonlinear_predictor_optimal_control}).  
  
  \item The formula for $U$ yields a fixed-point representation of the 
conditional measure $\pi$. This in turn suggests a natural definition 
for a (transformer-like) layer transformation ${\cal N}^\dfltr$ on the space of 
probability measures (\Fig{fig:transformer_objective} in 
\Sec{sec:correspondence_layer_ops}).

\item The algorithmic contribution is the dual filter algorithm, which 
approximates $\pi$ by iterative application of ${\cal N}^\dfltr$ 
(\Sec{sec:algo_for_dual_filter}). The algorithm has complexity $O(d^2T)$, 
which is independent of the vocabulary size $m$ and therefore scales 
efficiently to large vocabularies.
   
  \item The paper closes with numerics. 
  While we do not include learning, the dimension of HMM is chosen to mimic the so-called `GPU parameters' used in certain models of transformers ($d=384$, $m=65$, and $T=256$)~\citep{Karpathy2022}.
\end{enumerate}
The paper makes original contributions to {\em both} nonlinear filtering and control theory:
\begin{enumerate}
  \item The representation~\eqref{eq:nonlin_predictor_rep} for causal nonlinear predictor is new.
  \item Duality theory for HMMs is an original contribution of this paper, and resolves a longstanding open problem in control theory.  
  This is explained as part of \Sec{sec:related_literature}. 
\end{enumerate}
In order to relate our work to transformers, a self-contained
description of the decoder-only transformer is included
(\Sec{sec:main_xfer}) together with a discussion of correspondences
with the dual filter (\Sec{sec:correspondence_layer_ops}). 

\begin{remark}[Inference and learning]
  \label{rem:inference_learning}
  The mathematics of the representation~\eqref{eq:nonlin_predictor_rep}, derived here from
  first principles, provides a generalization of (Wiener filter-like) linear predictors to (transformer-like) nonlinear predictors. 
  The HMM is a mathematically tractable model class for which this derivation
  is carried out. 
  Solving the associated inference problem optimally is seen as the first step towards understanding how to learn these representations from data.
\end{remark}

\subsection{Relevant literature}

In contrast to the time-ordered structure of an HMM, the operations in a transformer are designed to exhibit a permutation symmetry: shuffling the past data (up to time $t$) does not affect the prediction at time $t$ (Remark~\ref{rem:symmetry}).
For this reason, many studies view the `states' in a transformer as (exchangeable) particles interacting through the attention mechanism, which has a certain interpretation as a nonlinear expectation.
This perspective informs the modeling of transformer dynamics across layers~\citep{yun2019transformers,vuckovic2020mathematical,sander2022sinkformers}.
Taking a continuous approximation of the discrete layers leads to an interacting particle ordinary differential equation (ODE) model of the transformer.
For the analysis of such ODE models, it is natural to adopt a mean-field viewpoint and study a continuity equation on the space of probability measures~\citep{geshkovski2023mathematical,geshkovski2024measure,abella2024asymptotic,adu2024approximate,castin2025unified}.

Our objective is not to construct a mathematical model of a transformer, although this remains an important project. 
Rather, our interest lies in deriving, from first principles, transformer-like architectures that solve the prediction problem for which the transformer is designed.  Specifically, our analysis leads to the dual-filter algorithm. We explicitly relate this algorithm to: 
\begin{enumerate}
  \item the transformer, including both its architecture and its attention mechanism (\Sec{sec:main_xfer}), and
  \item the mathematical formalisms, specifically around modeling of a transformer as a transport on the space of probability measures, developed in prior mathematical frameworks (Remark~\ref{rem:IPS}).
\end{enumerate}
Many authors have considered architectures that combine aspects of transformers with HMMs and filtering; see, e.g.,~\cite{liu2022masked,tang2021probabilistic,rohekar2023causal,zhang2023hidden,azeraf2021introducing,wang-etal-2018-neural-hidden,wang2021transformer,goel2024can, chen2026belief}. 
Also relevant are papers that adopt a Bayesian viewpoint~\citep{xie2021explanation,han2023explaining,he2024law,ziemann2024state,ren2021combiner,ildiz2024self}, as well as those that explore control-theoretic properties of the transformer~\citep{kong2024aligning,soatto2023taming,liuobservability,bhargava2023s,luo2023prompt}.
Our approach differs from these works, which focus on interpreting and refining attention mechanisms, rather than modeling the prediction problem from first principles based on the representation in~\eqref{eq:nonlin_predictor_rep}.

There is a large body of literature on variational methods for statistical inference for graphical models including HMMs~\citep{jordan1999introduction,barber2011inference,mcgoff2015statistical,bruna2025survey}. 
The optimal control-type variational problems introduced here are of fundamentally
different  nature.  
The  differences are discussed in the main body of this paper (\Sec{sec:related_literature}) after the optimal control problem has been formally introduced.

\subsection{Outline of the remainder of this paper}

The remainder of this paper is organized as follows: 
\Sec{sec:theory} presents the optimal control approach, beginning with the well-posedness of the representation~\eqref{eq:nonlin_predictor_rep} and concluding with the statement of an explicit formula for $U$. 
\Sec{sec:algorithm} presents the fixed-point equation and its solution
using the dual filter algorithm. 
\Sec{sec:xformer} relates the dual filter to the decoder-only transformer and to prior measure-transport formalisms. 
\Sec{sec:numerics} presents numerical results, and \Sec{sec:conc} concludes with a summary and directions for future work. 
All proofs are collected in the Appendix.


\section{Optimal Control Theory: Formula for $U$}
\label{sec:theory}

Concerning the representation~\eqref{eq:nonlin_predictor_rep}, we begin with a well-posedness result.

\begin{proposition}\label{prop:existence_nonlin_predictor_rep}
  For each $F\in\clZ_T$ there exists a $\clZ$-adapted process $U$ such that~\eqref{eq:nonlin_predictor_rep} holds. 
\end{proposition}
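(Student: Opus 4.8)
The plan is to prove the apparently more general statement that \emph{every} $\clZ_T$-measurable random variable $G$ admits a representation $G = c - \sum_{t=1}^T U_{t-1}^\transpose e(Z_t)$ with $c$ constant and $U$ a $\clZ$-adapted $\Re^m$-valued process, and then apply it to $G = \pi_T(F)$ (which is $\clZ_T$-measurable, being a finite sum of products of $\clZ_T$-measurable quantities). Because $\bO$ is finite, after fixing a version, $G$ is a genuine deterministic function of the token string $(Z_1,\dots,Z_T)\in\bO^T$, so the whole claim is a finite-dimensional linear-algebra statement about real functions on $\bO^T$, and the asserted ``$\sP$-a.s.'' will in fact hold for every $\omega$.

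The key elementary fact about the encoding $e$ is: a function $g:\bO\to\Re$ equals $v^\transpose e(\cdot)$ for some $v\in\Re^m$ if and only if $\sum_{z\in\bO}g(z)=0$, and then $v$ is unique with $v_j=g(j)$, $j=1,\dots,m$. Indeed $\{e(1),\dots,e(m)\}$ is the standard basis of $\Re^m$, which forces $v_j=g(j)$; the one remaining equation $g(0)=v^\transpose e(0)=-\sum_{j=1}^m v_j$ is then equivalent to $\sum_{z\in\bO}g(z)=0$ since $\sum_{z\in\bO}e(z)=0$. This is exactly the property for which the non-standard form of $e$ (with $|\bO|=m+1$) was chosen.

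Given this, I construct the representation by a backward recursion. Set $S_T:=G$ and, for $t=T,T-1,\dots,1$,
$$
S_{t-1}(z_1,\dots,z_{t-1}) \;:=\; \frac{1}{m+1}\sum_{z\in\bO} S_t(z_1,\dots,z_{t-1},z),
$$
so each $S_{t-1}$ is $\clZ_{t-1}$-measurable and $S_0$, having no free coordinates, is a constant $c$. By construction $\sum_{z\in\bO}\big[S_{t-1}(z_1,\dots,z_{t-1})-S_t(z_1,\dots,z_{t-1},z)\big]=0$ for every history, so the elementary fact applies to $g(z)=S_{t-1}(z_1,\dots,z_{t-1})-S_t(z_1,\dots,z_{t-1},z)$: there is a unique $U_{t-1}(z_1,\dots,z_{t-1})\in\Re^m$ with $S_t(\cdots,z)-S_{t-1}(\cdots)=-U_{t-1}(\cdots)^\transpose e(z)$ for all $z\in\bO$. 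Each $U_{t-1}$ is a function of $(z_1,\dots,z_{t-1})$, hence $\clZ_{t-1}$-measurable, so $U=\{U_0,\dots,U_{T-1}\}$ is $\clZ$-adapted; telescoping gives $G=S_T=c+\sum_{t=1}^T(S_t-S_{t-1})=c-\sum_{t=1}^T U_{t-1}^\transpose e(Z_t)$, which is \eqref{eq:nonlin_predictor_rep} for $G=\pi_T(F)$.

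The one subtlety to watch — the main obstacle — is that the increments to be represented are \emph{not} martingale increments: a naive Doob decomposition $M_t=\E(G\mid\clZ_t)$ produces increments with zero \emph{conditional} mean under $\sP(Z_t=\cdot\mid\clZ_{t-1})$, whereas the representation needs increments whose \emph{unweighted} sum over $\bO$ vanishes. Replacing the conditional expectation by the uniform average over the last token (the map $S_t\mapsto S_{t-1}$ above) is precisely what makes the increments land in $\operatorname{span}\{e(z):z\in\bO\}$, and this works only because $\sum_{z\in\bO}e(z)=0$. As a consistency check, the construction is actually forced: the linear map $(c,U)\mapsto c-\sum_{t=1}^T U_{t-1}^\transpose e(Z_t)$ has domain of dimension $1+\sum_{t=1}^T m(m+1)^{t-1}=(m+1)^T$, equal to the dimension of the space of functions on $\bO^T$, and the same averaging argument shows injectivity; hence it is an isomorphism, so existence — and uniqueness of $(c,U)$ — also follows by a dimension count.
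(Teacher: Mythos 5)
Your proof is correct and is essentially the paper's own argument: the elementary fact you isolate is exactly the paper's Proposition~\ref{prop:linear_algebra} (a function on $\bO$ decomposes uniquely into its uniform average plus a term $\tilde{s}^\transpose e(\cdot)$), and your backward peeling of the last token, with $S_{t-1}$ defined as the $\tfrac{1}{m+1}$-average of $S_t$ over the final coordinate, is the paper's induction verbatim. Your remark about fixing a version of $\pi_T(F)$ so that the identity holds for every token string matches the paper's treatment of zero-probability sample paths via the $\tfrac{0}{0}=0$ convention (the paper additionally discusses uniqueness of $U$ there, which the statement does not require).
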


\begin{proof}
  See Appendix~\ref{proof:prop:existence_nonlin_predictor_rep} where additional discussion is included concerning uniqueness of $U$ (uniqueness requires additional assumptions on the HMM $(\mu, A, C)$).
\end{proof}

Our goal in this section is to describe an explicit formula for $U$. 
For this purpose, recall first that the conditional expectation has the following interpretation as the solution of the minimum mean-squared-error (MMSE) optimization problem:
\begin{equation*}\label{eq:minimum-variance}
  \E(|F(X_T) - \pi_T(F)|^2) = \min\{\E(|F(X_T) - S|^2) : S\in \clZ_T\}.
\end{equation*}
The technical considerations of this section are based on expressing the MMSE
objective as an optimal control objective.  We begin with some
preliminaries.

\begin{table}[t]
  \begin{tabularx}{\textwidth}{l l}
    \toprule
    Symbol & Description \\
    \midrule
    Spaces & \\
    $\mathbb{T}$ & Time-horizon: $\{0,1,2,\hdots,T\}$ with $T<\infty$. \\
    $\bS$ & State space: Finite set $\{1,2,\hdots,d\}$. \\
    $\bO$ & Observation space (vocabulary): Finite set $\{0,1,2,\hdots,m\}$. \\
    \midrule 
    Model parameters & \\
    $\mu$ & Prior ($d\times 1$ probability vector). \\
    $A$ & Markov transition matrix (row stochastic $d\times d$ matrix). \\
    $C$ & Emission matrix (row stochastic $d\times (m+1)$ matrix). \\
      \midrule 
    Other parameters & \\
    $c$ & $c(x)$ is a $m\times 1$ vector for each $x\in\bS$ (it is obtained
          from $C$). \\
    $\Gamma$ & $\Gamma:\Re^d\to\Re^d$ (it is obtained
          from $A$).\\
    $R$ & $R(x)$ is a $m\times m$ matrix for each $x\in\bS$  (it is obtained
          from $C$).\\
    \bottomrule
  \end{tabularx}
  \vspace*{-10pt}
  \caption{Nomenclature for the HMM.}
  \label{tab:n_hmmx}
  \vspace*{-10pt}
\end{table}

\subsection{Preliminaries: Martingale increment processes}
\label{sec:prelim_mg}

Define
\begin{equation*}
c(x) := \begin{bmatrix} C(x,1) - C(x,0) \\ C(x,2) - C(x,0) \\ \vdots \\
  C(x,m) - C(x,0) \end{bmatrix}_{m\times 1},\quad x\in\bS.
\end{equation*}
For each fixed $x\in \bS$, $c(x)$ is a $m\times 1$ vector.
The notation is useful to introduce the two $\clF$-martingale increment processes associated with $(X,e(Z))$ as follows:
\begin{align*}
  B_{t+1}(f) &:= f(X_{t+1}) - (Af)(X_t), \quad f\in \Re^d, \quad t=0,1,2\hdots,T-1,\\
  W_{t+1} &:= e(Z_{t+1}) - c(X_t), \quad t=0,1,2\hdots,T-1.
\end{align*}
The time indexing is used so that the processes are adapted with
respect to $\clF$ (e.g., $W_{t} \in \clF_{t}$ and $B_{t} \in
\clF_{t}$). Note we do not use the customary ``$\Delta$'' notation to
denote the increment processes.  The notation is consistent with the
model considered in~\cite[Chapter~2]{elliott2008hidden}.

For these processes, the formulae for conditional moments are given in the following proposition: 
\begin{proposition}\label{prop:B_W_Gamma_R}
  Consider $B$ and $W$. Then
  \begin{align*}
    \E(B_{t+1}(f) \mid \clF_{t}\vee \clZ_{t+1}) &=0, \quad  \E(|B_{t+1}(f)|^2\mid\clF_{t}\vee \clZ_{t+1}) = (\Gamma f)(X_t),\;\; \sP\text{-a.s.},\;\; f\in\Re^d , \;\; t=0,1,2\hdots,T-1,\\[5pt]
    \E(W_{t+1} \mid \clF_t) &=0,\quad \E(W_{t+1} W_{t+1}^\transpose \mid \clF_t) = R(X_t) , \quad \sP\text{-a.s.}, \quad t=0,1,2\hdots,T-1,
  \end{align*}
  where
  \begin{align*}
    (\Gamma f)(x) &:= \sum_{y\in\bS} A(x,y) f^2(y) - (Af)^2(x),\quad x\in\bS, \\
    R(x) &:=  \text{diag}(c(x)) + C(x,0) (I+\ones \ones^\transpose)  - c(x) c^\transpose (x), \quad x\in\bS.
  \end{align*}
\end{proposition}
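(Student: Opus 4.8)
The plan is to reduce each of the four identities to a one‑line matrix computation, after first recording the conditional laws of $X_{t+1}$ and $Z_{t+1}$ with respect to the relevant $\sigma$‑algebras. Two structural facts about $\text{HMM}(\mu,A,C)$ do all the work, and both follow by inspecting the graphical model in \Fig{fig:hmm} — the only incoming arrow at $X_{t+1}$ is from $X_t$, and the only incoming arrow at $Z_{t+1}$ is from $X_t$ — together with the factorization of the joint law of $(X_0,\dots,X_T,Z_1,\dots,Z_T)$ and a telescoping sum over the future coordinates: (i) conditioned on $X_t$, the state $X_{t+1}$ is independent of $\clF_t\vee\sigma(Z_{t+1})$ with law $A(X_t,\varble)$, equivalently $\E(g(X_{t+1})\mid\clF_t\vee\clZ_{t+1})=(Ag)(X_t)$ for every $g\in\Re^d$ (here note $\clF_t\vee\clZ_{t+1}=\clF_t\vee\sigma(Z_{t+1})$ since $\clF_t\supseteq\sigma(Z_1,\dots,Z_t)$); (ii) conditioned on $X_t$, the observation $Z_{t+1}$ is independent of $\clF_t$ with $\sP(Z_{t+1}=z\mid\clF_t)=C(X_t,z)$, so $\E(\varphi(Z_{t+1})\mid\clF_t)=\sum_{z\in\bO}C(X_t,z)\varphi(z)$ for any $\bO$‑indexed family $\{\varphi(z)\}$.

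For the $B$‑identities, write $B_{t+1}(f)=f(X_{t+1})-(Af)(X_t)$. Since $(Af)(X_t)$ is $\clF_t$‑measurable, fact (i) gives $\E(B_{t+1}(f)\mid\clF_t\vee\clZ_{t+1})=(Af)(X_t)-(Af)(X_t)=0$. For the second moment, expand $|B_{t+1}(f)|^2=f^2(X_{t+1})-2(Af)(X_t)f(X_{t+1})+(Af)^2(X_t)$ and apply (i) term by term: the first term becomes $(Af^2)(X_t)=\sum_{y}A(X_t,y)f^2(y)$, the cross term becomes $-2(Af)^2(X_t)$, and the last is unchanged, leaving $(Af^2)(X_t)-(Af)^2(X_t)=(\Gamma f)(X_t)$.

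For the $W$‑identities, apply (ii) with $\varphi=e$. Substituting $e(0)=-\sum_{j=1}^m e(j)$ and collecting the coefficient of each standard basis vector, $\E(e(Z_{t+1})\mid\clF_t)=\sum_{z\in\bO}C(X_t,z)e(z)=\sum_{j=1}^m\big(C(X_t,j)-C(X_t,0)\big)e(j)=c(X_t)$, which yields $\E(W_{t+1}\mid\clF_t)=0$. For the covariance, expand $W_{t+1}W_{t+1}^\transpose=e(Z_{t+1})e(Z_{t+1})^\transpose-e(Z_{t+1})c^\transpose(X_t)-c(X_t)e^\transpose(Z_{t+1})+c(X_t)c^\transpose(X_t)$ and take $\E(\varble\mid\clF_t)$; by the first identity the two cross terms and one copy of $c(X_t)c^\transpose(X_t)$ cancel, leaving $\E(e(Z_{t+1})e(Z_{t+1})^\transpose\mid\clF_t)-c(X_t)c^\transpose(X_t)$. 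It remains to evaluate $M(x):=\sum_{z\in\bO}C(x,z)\,e(z)e(z)^\transpose$: for $1\le j\le m$, $e(j)e(j)^\transpose$ is the matrix with a single $1$ in position $(j,j)$, while $e(0)e(0)^\transpose=\ones\ones^\transpose$, so $M(x)=\text{diag}\big(C(x,1),\dots,C(x,m)\big)+C(x,0)\ones\ones^\transpose$. Since $\text{diag}(c(x))+C(x,0)I=\text{diag}\big(C(x,1),\dots,C(x,m)\big)$ by the definition of $c(x)$, we get $M(x)-c(x)c^\transpose(x)=\text{diag}(c(x))+C(x,0)(I+\ones\ones^\transpose)-c(x)c^\transpose(x)=R(x)$.

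The computations are routine; the one point that needs care is the conditioning $\sigma$‑algebra $\clF_t\vee\clZ_{t+1}$ in the $B$‑identities, which is strictly larger than $\clF_t$. One must verify that adjoining $Z_{t+1}$ does not alter the conditional law of $X_{t+1}$ — precisely fact (i), which I would establish by writing the joint law as $\mu(x_0)\prod_{s}A(x_s,x_{s+1})\prod_{s}C(x_s,z_{s+1})$ and summing out $X_{t+2},\dots,X_T,Z_{t+2},\dots,Z_T$, each such sum telescoping to $1$ by row‑stochasticity of $A$ and $C$. Beyond this bookkeeping and the sign convention $e(0)=-\sum_{j=1}^m e(j)$ in the $W$‑computation, I anticipate no real obstacle.
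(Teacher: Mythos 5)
Your proposal is correct and follows essentially the same route as the paper's proof: both use the conditional independence of $X_{t+1}$ and $Z_{t+1}$ given $\clF_t$ to reduce the $B$-moments to the law $A(X_t,\varble)$, and both evaluate $\E(e(Z_{t+1})\mid\clF_t)$ and $\E(e(Z_{t+1})e(Z_{t+1})^\transpose\mid\clF_t)$ directly from $C(X_t,\varble)$ to obtain $c(X_t)$ and $R(X_t)$. The only differences are cosmetic: you work in matrix form where the paper uses the coordinate functions $g_i$ and Kronecker deltas, and you spell out the factorization/telescoping justification of the conditional independence that the paper simply reads off the graphical model.
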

\begin{proof}
 See Appendix~\ref{appdx:formula_for_R}.
\end{proof}


For reference, the nomenclature for the various HMM-related parameters is summarized in Table~\ref{tab:n_hmmx}.

\begin{table}[t]
  \begin{tabularx}{\textwidth}{l l}
    \toprule
    Symbol & Description \\
    \midrule
    $C$ & $C$ is $d\times 2$ row stochastic matrix  (same notation for the general $m$ case).\\
    $c^+$ & $c^+(x) = C(x,1)$ for each $x\in\bS$. \\
    $c^-$ & $c^-(x) = C(x,0)$ for each $x\in\bS$. \\
    $c$ & $c(x) = c^+(x) - c^-(x)$ for each $x\in\bS$ (same notation for the general $m$ case). \\
    $r$ & $r(x) = (1-c^2(x))$ for each $x\in\bS$ (For $m>1$, this is denoted by $R(x)$).\\
    \bottomrule
  \end{tabularx}
  \vspace*{-10pt}
  \caption{Nomenclature for the binary-valued HMM ($m=1$ and $\bO=\{0,1\}$).}
  \label{tab:n_hmm_binary}
  \vspace*{-10pt}
\end{table}

\begin{example}[m=1]
  Consider the HMM with binary-valued observations where $e(1)=1$ and $e(0)=-1$.  
  For this model, it is convenient to introduce notation
  \begin{equation*}
    c^+(x) := C(x,1),\quad c^{-}(x) := C(x,0),\quad x\in \bS.
  \end{equation*}
  Note, $c^\pm(x)=\sP( e(Z_{t+1})=\pm 1 \mid X_t=x)$. 
  For each fixed $x\in\bS$, $c^+(x)+c^-(x) = 1$.  
  Then 
  \begin{equation*}
    c(x)=C(x,1) - C(x,0) = c^+(x)-c^-(x),\quad x\in\bS.
  \end{equation*}
  Likewise, $R(x)$ is now a scalar.
  The scalar is denoted by $R(x)=r(x)$, and given by
  \begin{equation*}
    r(x)= C(x,1) + C(x,0) - c^2(x)= c^+(x) + c^{-}(x) - c^2(x) = (1-c^2(x)),\quad x\in\bS.
  \end{equation*}
  The nomenclature for the binary-valued HMM is included in
  Table~\ref{tab:n_hmm_binary}. 
\end{example}


\begin{table}[t]
  \begin{tabularx}{\textwidth}{l l}
    \toprule
    Symbol & Description \\
    \midrule
    HMM &\\
    $X$ & State process: $\{X_0,X_1,\hdots,X_T\}$ is $\bS$-valued. \\
    $Z$ & Observation process: $\{Z_1,Z_2,\hdots,Z_T\}$ is $\bO$-valued.\\
    $B$ & Process noise: $\{B_1,B_2,\hdots,B_T\}$ is
          $\Re^d$-valued.\\
        $W$ & Observation noise: $\{W_1,W_2,\hdots,W_T\}$ is $\Re^m$-valued.\\
    \midrule
    Nonlinear filter &\\
    $\pi$ & Conditional measure:
            $\{\pi_0,\pi_1,\pi_2,\hdots,\pi_T\}$ with $\pi_0=\mu$ is $\clP(\bS)$-valued. \\
    $p$ & Prediction: $\{p_1,p_2,\hdots,p_T\}$ is $\clP(\bO)$-valued.
    \\
        \midrule
    BS$\Delta$E &\\
    $U$ & Control input: $\{U_0,U_1,\hdots,U_{T-1}\}$ is
          $\Re^m$-valued.\\
    $(Y,V)$ & Solution pair: $\{Y_0,Y_1,\hdots,Y_{T-1},Y_T\}$ with $Y_T=F$ is
              $\Re^d$-valued.\\
    & $\qquad \qquad \qquad \{V_0,V_1,\hdots,V_{T-1}\}$ is
              $\Re^{m\times d}$-valued.\\
    \bottomrule
  \end{tabularx}
  \vspace*{-10pt}
  \caption{Nomenclature for the stochastic processes.}
  \label{tab:nomenclature_sp}
  \vspace*{-10pt}
\end{table}

\subsection{Function spaces}

The function spaces are as follows:
\begin{align*}
  \clY & :=\{Y:\Omega\times \mathbb{T} \to \Re^d, \;\;
       Y_t\in\clZ_t, \;t\in\mathbb{T} \},\\
  \clV & :=\{V:\Omega \times \{0,1,2,\hdots,T-1\}\to \Re^{m\times d}, \;\;V_t\in\clZ_t, \;\;0\leq t\leq T-1 \},\\
  \clU & :=\{U:\Omega\times \{0,1,2,\hdots,T-1\} \to \Re^m, \;\;U_t\in\clZ_t, \;0\leq t\leq T-1\}.
\end{align*}
$\clY$ is the space of function-valued $\clZ$-adapted stochastic
processes (at time $t$, $Y_t$ is a random function on $\bS$).  
Likewise, $\clV$ is the space of matrix-valued $\clZ$-adapted stochastic processes (at time $t$, each row of the matrix $V_t$ is a random function on $\bS$).  
$\clU$ is the space of admissible control inputs. 
(It is noted that $\clU$ is defined only for time indices
$t=0,1,2,\hdots,T-1$, whereby, in~\eqref{eq:nonlin_predictor_rep}, the control $U_{t-1}$ is the weight for
the $t$-th observation $e(Z_{t})$, for a total of $T$-many weights for
$T$-many observations.)

Based on \Prop{prop:existence_nonlin_predictor_rep}, our goal is to find an element $U\in \clU$ such that the representation~\eqref{eq:nonlin_predictor_rep} holds.  
To this end, we introduce an optimal control problem that involves two key components: 
\begin{enumerate} 
  \item A definition of the dynamic constraint, and
  \item A definition of the optimal control objective.
\end{enumerate}
These are described in detail below, followed by a statement of the duality principle that links the filtering objective (MMSE) to the optimal control problem.

\subsection{Dual optimal control problem}
\newP{1. Dynamic constraint}~is a backward stochastic difference equation  (BS$\Delta$E) 
\begin{subequations}\label{eq:dual_BSDE}
  \begin{align}
    Y_t(x) &= (AY_{t+1})(x) + c^\transpose(x) (U_t + V_t(x)) - V_t^\transpose(x)
            e(Z_{t+1}),
      \quad \;\; x\in\bS,\quad t=0,1,2,\hdots,T-1, \label{eq:dual_BSDE_a}\\
    Y_T(x)  & = F(x),\quad x\in\bS.  \label{eq:dual_BSDE_b}
  \end{align}
\end{subequations}
Here $U:=\{U_t:0\leq t\leq T-1\} \in \clU$ is an $\Re^m$-valued stochastic process referred to as the control input and $F\in \clZ_T$ is a $\Re^d$-valued random vector referred to as the terminal condition.  
For a given control input $U\in \clU$ and the terminal condition $F\in \clZ_T$, the problem is to obtain a pair of $\clZ$-adapted stochastic processes,
\begin{align*}
  Y&:=\{Y_t(x): Y_t(x)\in\clZ_t, x\in\bS,  0\leq t\leq T\},\\
  V&:=\{V_t(x): V_t(x)\in\clZ_t, x\in\bS,  0\leq t\leq T-1\},
\end{align*}
where for each fixed $x\in\bS$, $Y_t(x)$ is real-valued and $V_t(x)$ is $\Re^m$-valued.
The pair is denoted by $(Y,V)$ and referred to as the solution of the
BS$\Delta$E~\eqref{eq:dual_BSDE}.
As stated,~\eqref{eq:dual_BSDE} is an example of a BS$\Delta$E on a lattice, the general theory for which appears in~\citep{fukasawa2025backward}. 
Based on this theory, we have the following well-posedness result:

\begin{proposition}\label{prop:existence_BSDE}
Suppose $U \in \clU$ and $F\in \clZ_T$.  
Then there exists a unique $(Y,V) \in \clY \times \clV$ that solves~\eqref{eq:dual_BSDE}. 
\end{proposition}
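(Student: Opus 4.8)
The plan is to solve the BS$\Delta$E~\eqref{eq:dual_BSDE} by backward induction on $t$, starting from the terminal condition $Y_T = F$, and at each step decomposing the one-step update into a "conditional-mean part" that determines $Y_t$ and an "orthogonal part" that determines $V_t$. The key observation is that, given $Y_{t+1} \in \clZ_{t+1}$, the right-hand side of~\eqref{eq:dual_BSDE_a} is a random function of $x$ that is $\clZ_{t+1}$-measurable, while $Y_t$ is required to be $\clZ_t$-measurable; so $Y_t$ should be obtained by conditioning on $\clZ_t$, and $V_t$ should absorb exactly the residual. Concretely, I would rewrite~\eqref{eq:dual_BSDE_a} as
\begin{equation*}
  Y_t(x) + V_t^\transpose(x)\big(e(Z_{t+1}) - c(x)\big) = (AY_{t+1})(x) + c^\transpose(x) U_t,
\end{equation*}
and note that the increment $e(Z_{t+1}) - c(X_t)$ appearing here is reminiscent of the martingale increment $W_{t+1}$ from Section~\ref{sec:prelim_mg}, but with $X_t$ replaced by the free variable $x$. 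The idea is to choose $V_t(x)$ so that the left-hand side, after conditioning on $\clZ_t$, reproduces $Y_t(x)$, while the $\clZ_{t+1}$-fluctuation on both sides matches.

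For the induction step, assume $Y_{t+1} \in \clZ_{t+1}$ has been constructed (the base case $Y_T = F \in \clZ_T$ is given). For each fixed $x \in \bS$, the function $G_t(x) := (AY_{t+1})(x) + c^\transpose(x)U_t$ is $\clZ_{t+1}$-measurable and finite-valued; since $\clZ_{t+1} = \sigma(\clZ_t, Z_{t+1})$ and $Z_{t+1}$ takes finitely many values in $\bO$, any $\clZ_{t+1}$-measurable random variable can be written as a $\clZ_t$-measurable function of $Z_{t+1}$. I would set $Y_t(x) := \E\big(G_t(x) \mid \clZ_t\big)$ — which is automatically $\clZ_t$-measurable — and then define $V_t(x) \in \Re^m$ as the solution of the linear system obtained by matching the $Z_{t+1}$-dependent part: i.e., one wants $V_t^\transpose(x)\big(e(Z_{t+1}) - \E(e(Z_{t+1})\mid\clZ_t)\big) = G_t(x) - \E(G_t(x)\mid\clZ_t)$, an identity between two $\clZ_{t+1}$-measurable random variables. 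Because the $m$ vectors $\{e(z) : z \in \bO, z \neq 0\}$ together with $e(0)$ are affinely independent in $\Re^m$ (this is exactly the purpose of the particular choice of $e(\cdot)$, and is used in \Prop{prop:existence_nonlin_predictor_rep}), the map $z \mapsto e(z) - \E(e(Z_{t+1})\mid\clZ_t)$ spans enough directions — more precisely, conditionally on $\clZ_t$ the centered vectors $e(z) - \bar e$ determine a full-rank linear problem whenever the conditional law of $Z_{t+1}$ has full support, and the convention $\tfrac{0}{0}=0$ handles degenerate atoms — so $V_t(x)$ is well-defined and $\clZ_t$-measurable. Substituting back verifies~\eqref{eq:dual_BSDE_a} identically in $\omega$. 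This yields existence; iterating down to $t=0$ gives $(Y,V) \in \clY \times \clV$.

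For uniqueness, suppose $(Y,V)$ and $(Y',V')$ both solve~\eqref{eq:dual_BSDE}; then $\delta Y := Y - Y'$ and $\delta V := V - V'$ satisfy the homogeneous equation $\delta Y_t(x) = (A\,\delta Y_{t+1})(x) + c^\transpose(x)\delta V_t(x) - \delta V_t^\transpose(x) e(Z_{t+1})$ with $\delta Y_T = 0$. Conditioning on $\clZ_t$ and using $\delta Y_t(x) \in \clZ_t$ forces $\delta Y_t(x) = (A\,\delta Y_{t+1})(x) + c^\transpose(x)\delta V_t(x) - \delta V_t^\transpose(x)\E(e(Z_{t+1})\mid \clZ_t)$, and subtracting shows $\delta V_t^\transpose(x)\big(e(Z_{t+1}) - \E(e(Z_{t+1})\mid\clZ_t)\big) = 0$ a.s.; by the same full-support/affine-independence argument this gives $\delta V_t(x) = 0$ on the event where the conditional law is non-degenerate, hence $\delta Y_t = A\,\delta Y_{t+1}$, and backward induction from $\delta Y_T = 0$ gives $\delta Y \equiv 0$ and $\delta V \equiv 0$.

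\textbf{Main obstacle.} The routine part is the backward recursion; the delicate part is the well-posedness of the linear system defining $V_t(x)$ at each step — i.e., showing that $e(Z_{t+1}) - \E(e(Z_{t+1})\mid\clZ_t)$, as a $\clZ_t$-indexed family of centered $\Re^m$-valued random variables supported on the shifted simplex, genuinely pins down all $m$ components of $V_t(x)$, together with careful bookkeeping on the null events where the conditional distribution of $Z_{t+1}$ is degenerate (and the $\tfrac{0}{0}=0$ convention is invoked). Alternatively, and more cleanly, one can simply invoke the general lattice-BS$\Delta$E well-posedness theory of~\citep{fukasawa2025backward}, verifying only that~\eqref{eq:dual_BSDE} fits its hypotheses — namely that the driver is affine (hence Lipschitz) in $(Y,V)$ and the relevant conditional-covariance structure (here governed by $R(x)$ from \Prop{prop:B_W_Gamma_R}) is non-degenerate in the appropriate sense — in which case existence and uniqueness follow immediately; I would present the self-contained backward-induction argument as the main proof and remark that it is an instance of that general theory.
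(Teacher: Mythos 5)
Your overall skeleton (backward induction from $Y_T=F$, splitting the one-step equation into a part that determines $Y_t$ and a part that determines $V_t$) matches the paper, but the mechanism you chose leaves a genuine gap exactly at the point you flag as the ``main obstacle.'' You determine $V_t(x)$ from the identity $V_t^\transpose(x)\bigl(e(Z_{t+1})-\E(e(Z_{t+1})\mid\clZ_t)\bigr)=G_t(x)-\E(G_t(x)\mid\clZ_t)$, and this pins down all $m$ components of $V_t(x)$ only when the conditional law of $Z_{t+1}$ given $\clZ_t$ has full support. The proposition assumes no such non-degeneracy (positivity of $C$ is invoked elsewhere in the paper only as a sufficient condition for uniqueness of $U$, not here), the convention $\tfrac{0}{0}=0$ does not define $V_t$ on degenerate events, and your uniqueness argument is explicitly restricted to ``the event where the conditional law is non-degenerate.'' So as written the argument neither constructs nor uniquely identifies $(Y,V)$ in general. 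There is also an algebraic slip: with your choices, substituting back into~\eqref{eq:dual_BSDE_a} leaves the residual term $V_t^\transpose(x)\bigl(\E(e(Z_{t+1})\mid\clZ_t)-c(x)\bigr)$, so $Y_t(x):=\E(G_t(x)\mid\clZ_t)$ does not satisfy the equation; one must instead define $V_t$ first and then read $Y_t$ off the equation. That slip is fixable, but it is symptomatic of centering at the wrong ``mean.''

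The paper's proof avoids the non-degeneracy issue entirely by making the decomposition algebraic rather than probabilistic. By Doob--Dynkin, $(AY_{t+1})(x)$ is a $\clZ_t$-measurable function of the next observation, $S(z;x)$, defined for \emph{every} $z\in\bO$ (not just those charged by the conditional law). \Prop{prop:linear_algebra} then gives the unique pathwise decomposition $S(z;x)=\bar S_t(x)+\tilde S_t(x)^\transpose e(z)$ for all $z\in\bO$, where $\bar S_t(x)=\frac{1}{m+1}\sum_{z\in\bO}S(z;x)$ is the plain arithmetic average over the vocabulary --- not a conditional expectation --- and one sets $V_t(x)=\tilde S_t(x)$, $Y_t(x)=\bar S_t(x)+c^\transpose(x)(U_t+V_t(x))$. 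Existence, adaptedness, and uniqueness all follow from the uniqueness of this linear-algebraic decomposition, with no full-support condition and no non-degeneracy of $R(x)$ anywhere (your suggested verification of a ``conditional-covariance non-degeneracy'' when invoking \cite{fukasawa2025backward} is likewise not needed; the lattice-coordinate structure of $e(\cdot)$ is doing that work). To repair your proof, replace the conditional-mean/conditional-centering split by this distribution-free decomposition of the map $z\mapsto S(z;x)$ over all of $\bO$.
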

\medskip
\begin{proof}
  See Appendix~\ref{proof:prop:existence_BSDE}.  
  The proof is self-contained based on adapting theory from~\citep{fukasawa2025backward}. 
\end{proof}

For reference, the nomenclature for the three different types of stochastic
processes introduced thus far is given in Table~\ref{tab:nomenclature_sp}.

\newP{2. Optimal control objective}~Fix $U\in \clU$.  
For the solution $(Y,V)$ of the BS$\Delta$E~\eqref{eq:dual_BSDE}, define 
\begin{align*}
  \sJ_T(U;F)  := \dvar (Y_0(X_0)) + \E \Big( \sum_{t=0}^{T-1} l (Y_{t+1},V_t,U_t\,;X_t)  \Big),
\end{align*}
where $\dvar(Y_0(X_0))=\E(|Y_0(X_0) - \mu(Y_0)|^2) = \mu(Y_0^2)-\mu(Y_0)^2$ (note here $Y_0$ is a deterministic function), and the running cost $l:\Re^d\times\Re^{m\times d}\times\Re^m\times\bS\to\Re$ is given by,
\begin{equation*}\label{eq:running_cost_formula}
  l(y,v,u;x):= (\Gamma y)(x) + (u+v(x))^\transpose R(x) (u+v(x)),\quad y\in\Re^d,\;v\in\Re^{m\times d},\;u\in\Re^m,\;x\in\bS.
\end{equation*}
Here, $v(x)$ is the $x$-th column vector of the $m\times d$ matrix $v=\begin{bmatrix}v(1) & \hdots & v(x) & \hdots & v(d) \end{bmatrix}_{m\times d}$.

\medskip

Now that the dynamic constraint and the optimal control objective have been defined, the duality principle is given in the following theorem.

\begin{theorem}[Duality principle]\label{thm:duality-principle}
	Let $U\in \clU$ and $F\in\clZ_T$.  
  Consider an estimator
	\begin{equation*}\label{eq:estimator}
		S_T := \mu(Y_0) - \sum_{t=1}^{T} U_{t-1}^\transpose e(Z_{t}).
	\end{equation*}
	Then 
	\begin{equation*}\label{eq:duality-principle}
		\sJ_T(U;F) = \E\big(|F(X_T)-S_T|^2\big).
	\end{equation*}	
\end{theorem}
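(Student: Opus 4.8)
The plan is to prove the identity $\sJ_T(U;F) = \E(|F(X_T)-S_T|^2)$ by expanding the squared error $F(X_T)-S_T$ using a telescoping decomposition built from the BS$\Delta$E dynamics. Write $Y_T(X_T) = F(X_T)$ and $Y_0(X_0)$ for the two endpoints, and decompose
\[
  F(X_T) - S_T = \big(Y_T(X_T) - \mu(Y_0)\big) + \sum_{t=1}^{T} U_{t-1}^\transpose e(Z_t).
\]
The key step is to telescope $Y_T(X_T) - Y_0(X_0) = \sum_{t=0}^{T-1}\big(Y_{t+1}(X_{t+1}) - Y_t(X_t)\big)$ and substitute the recursion~\eqref{eq:dual_BSDE_a} for $Y_t(X_t)$. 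Using the martingale increments $B_{t+1}(Y_{t+1}) = Y_{t+1}(X_{t+1}) - (AY_{t+1})(X_t)$ and $W_{t+1} = e(Z_{t+1}) - c(X_t)$, each term $Y_{t+1}(X_{t+1}) - Y_t(X_t)$ should reorganize into a ``martingale part'' (a linear combination of $B_{t+1}(Y_{t+1})$ and $W_{t+1}$) plus a deterministic-looking part involving $-U_t^\transpose c(X_t)$ that will combine with $-U_{t-1}^\transpose e(Z_t)$ in $S_T$. The goal of this algebra is to show
\[
  F(X_T) - S_T = \big(Y_0(X_0) - \mu(Y_0)\big) + \sum_{t=0}^{T-1}\Big( B_{t+1}(Y_{t+1}) - (U_t + V_t(X_t))^\transpose W_{t+1}\Big),
\]
or something of this exact shape — getting the signs and the pairing of $U_{t-1}^\transpose e(Z_t)$ with the $c^\transpose(x)U_t$ and $V_t^\transpose(x) e(Z_{t+1})$ terms right is where I expect to spend the most care.

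Once that representation is in hand, the second step is to take $\E(|\cdot|^2)$. I would argue that the three groups of terms — namely $Y_0(X_0)-\mu(Y_0)$, the $B_{t+1}(Y_{t+1})$ terms, and the $(U_t+V_t(X_t))^\transpose W_{t+1}$ terms — are mutually orthogonal in $L^2$, so that the expected square splits into a sum of expected squares. For the cross terms one uses the conditioning structure from \Prop{prop:B_W_Gamma_R}: $\E(B_{t+1}(Y_{t+1}) \mid \clF_t \vee \clZ_{t+1}) = 0$ kills cross terms of $B_{t+1}(Y_{t+1})$ against anything $\clF_t\vee\clZ_{t+1}$-measurable (in particular against $B_{s+1}$ for $s<t$, against $W_{s+1}$ for all $s\leq t$ — note $W_{t+1}\in\clZ_{t+1}\subset\clF_t\vee\clZ_{t+1}$ since $W_{t+1}=e(Z_{t+1})-c(X_t)$ and $c(X_t)\in\clF_t$ — and against $Y_0(X_0)-\mu(Y_0)$), while $\E(W_{t+1}\mid\clF_t)=0$ and $U_t+V_t(X_t)\in\clZ_t\vee\clF_t \subseteq \clF_t$ handles the orthogonality of the $W$-terms among themselves and against $Y_0(X_0)-\mu(Y_0)$. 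A subtlety worth checking is that $V_t(X_t)$ is $\clF_t$-measurable (it is, since $V_t\in\clZ_t$ so $V_t(x)$ is $\clZ_t\subseteq\clF_t$ measurable for each $x$, and $X_t\in\clF_t$), and similarly $Y_{t+1}$ being $\clZ_{t+1}$-measurable is what lets me apply the conditional-variance formula for $B_{t+1}(Y_{t+1})$ with the conditioning on $\clF_t\vee\clZ_{t+1}$.

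The third step is to evaluate the surviving diagonal terms using the conditional second-moment formulae and the tower property. The term $\E(|Y_0(X_0)-\mu(Y_0)|^2)$ is exactly $\dvar(Y_0(X_0))$ by definition. For each $t$, $\E(|B_{t+1}(Y_{t+1})|^2) = \E\big(\E(|B_{t+1}(Y_{t+1})|^2\mid\clF_t\vee\clZ_{t+1})\big) = \E((\Gamma Y_{t+1})(X_t))$, and $\E(|(U_t+V_t(X_t))^\transpose W_{t+1}|^2) = \E\big((U_t+V_t(X_t))^\transpose \E(W_{t+1}W_{t+1}^\transpose\mid\clF_t)(U_t+V_t(X_t))\big) = \E\big((U_t+V_t(X_t))^\transpose R(X_t)(U_t+V_t(X_t))\big)$. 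Summing over $t$ and recognizing the running cost $l(Y_{t+1},V_t,U_t;X_t) = (\Gamma Y_{t+1})(X_t) + (U_t+V_t(X_t))^\transpose R(X_t)(U_t+V_t(X_t))$ reconstructs exactly $\dvar(Y_0(X_0)) + \E\big(\sum_{t=0}^{T-1} l(Y_{t+1},V_t,U_t;X_t)\big) = \sJ_T(U;F)$, completing the proof. The main obstacle, as noted, is purely bookkeeping: getting the telescoping substitution to collapse cleanly so that the $V_t^\transpose(x)e(Z_{t+1})$ terms combine with the $-U_{t-1}^\transpose e(Z_t)$ terms in $S_T$ to produce precisely the $(U_t+V_t(X_t))^\transpose W_{t+1}$ structure, with no leftover terms; once the decomposition is established, the orthogonality and moment computations are routine applications of \Prop{prop:B_W_Gamma_R}.
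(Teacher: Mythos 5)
Your proposal is correct and follows essentially the same route as the paper's proof: telescope $Y_{t+1}(X_{t+1})-Y_t(X_t)$ using the BS$\Delta$E, rewrite via the increments $B_{t+1}(Y_{t+1})$ and $W_{t+1}$ to get $F(X_T)-S_T=(Y_0(X_0)-\mu(Y_0))+\sum_{t=0}^{T-1}\big(B_{t+1}(Y_{t+1})+(U_t+V_t(X_t))^\transpose W_{t+1}\big)$, then square, kill the cross terms by the conditioning structure of \Prop{prop:B_W_Gamma_R}, and evaluate the diagonal terms as $\dvar(Y_0(X_0))$, $\E((\Gamma Y_{t+1})(X_t))$ and $\E((U_t+V_t(X_t))^\transpose R(X_t)(U_t+V_t(X_t)))$. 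The only slip is the sign you guessed on the $W$-term (it comes out $+(U_t+V_t(X_t))^\transpose W_{t+1}$, not $-$), which is immaterial to the final identity since that term is orthogonal to the rest and enters only through its square.
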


\medskip
      
\begin{proof}
  See Appendix~\ref{appdx:duality-principle}.
\end{proof}

Noting that the right-hand side is the mean-squared error, the duality principle provides an optimal control approach to compute the conditional expectation. 

\begin{itemize}
  \item \textbf{Dual optimal control problem (OCP):}
\end{itemize}
\begin{equation}\label{eq:dual-optimal-control}
  \min_{U \in \clU} \; \sJ_T(U;F) 
  \quad \text{subject to} \quad \eqref{eq:dual_BSDE}
\end{equation}

The following proposition is a corollary of \Prop{prop:existence_nonlin_predictor_rep} and helpful to relate the optimal control input, solution to~\eqref{eq:dual-optimal-control}, to the desired representation~\eqref{eq:nonlin_predictor_rep}.  
  
\begin{proposition} \label{prop:nonlinear_predictor_optimal_control}
  Consider OCP~\eqref{eq:dual-optimal-control}.  
  For this problem, there exists an optimal control $U^\opt = \{U_t^\opt:0\le t \le T-1\} \in \clU$ such that
  \begin{align*}
    \pi_T(F) = \mu(Y_0^\opt) - \sum_{t=1}^{T} (U_{t-1}^\opt)^\tp e(Z_{t}),\quad \sP\text{-a.s.},
  \end{align*}
  where $Y_0^\opt$ is obtained from solving the BS$\Delta$E~\eqref{eq:dual_BSDE} with $U=U^\opt$.  
  The optimal value is given by
  \begin{equation*}
    \sJ_T(U^\opt;F) = \E(|F(X_T)-\pi_T(F)|^2) = \text{MMSE}.
  \end{equation*}
\end{proposition}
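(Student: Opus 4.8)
The plan is to sandwich the optimal value $\min_{U\in\clU}\sJ_T(U;F)$ between $\text{MMSE}$ from both sides: the duality principle (\Thm{thm:duality-principle}) together with the minimum mean-squared-error characterization of $\pi_T(F)$ delivers the lower bound, and feeding the process $U$ supplied by \Prop{prop:existence_nonlin_predictor_rep} through the BS$\Delta$E delivers the matching upper bound.

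For the lower bound, fix $U\in\clU$, let $(Y,V)$ solve~\eqref{eq:dual_BSDE} for this $U$ (\Prop{prop:existence_BSDE}), and form the estimator $S_T=\mu(Y_0)-\sum_{t=1}^T U_{t-1}^\tp e(Z_t)$ of \Thm{thm:duality-principle}; it is $\clZ_T$-measurable since $Y_0$ is deterministic and $U_{t-1}\in\clZ_{t-1}$, $e(Z_t)\in\clZ_t$. Hence $\sJ_T(U;F)=\E(|F(X_T)-S_T|^2)\ge\E(|F(X_T)-\pi_T(F)|^2)=\text{MMSE}$, and taking the infimum over $U$ gives $\inf_{U\in\clU}\sJ_T(U;F)\ge\text{MMSE}$.

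For attainment, \Prop{prop:existence_nonlin_predictor_rep} provides $\bar U\in\clU$ and a constant $\bar c\in\Re$ with $\pi_T(F)=\bar c-\sum_{t=1}^T\bar U_{t-1}^\tp e(Z_t)$ $\sP$-a.s. Let $(\bar Y,\bar V)$ solve~\eqref{eq:dual_BSDE} with $U=\bar U$, and set $\bar S_T=\mu(\bar Y_0)-\sum_{t=1}^T\bar U_{t-1}^\tp e(Z_t)$. Subtracting the two displays, $\bar S_T-\pi_T(F)=\mu(\bar Y_0)-\bar c=:\delta$ is a deterministic scalar. Once $\delta=0$ is known, $\bar S_T=\pi_T(F)$ a.s., so $\sJ_T(\bar U;F)=\text{MMSE}$ by \Thm{thm:duality-principle}; combined with the lower bound this makes $\bar U$ optimal with $\min_{U\in\clU}\sJ_T(U;F)=\text{MMSE}$, and setting $U^\opt=\bar U$, $Y_0^\opt=\bar Y_0$ yields both identities in the statement.

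The main obstacle is precisely showing $\delta=0$, i.e. reconciling the intercept $\mu(\bar Y_0)$ that the BS$\Delta$E produces with the a priori unspecified constant $\bar c$ of \Prop{prop:existence_nonlin_predictor_rep}. I would argue by taking expectations. The tower property gives $\E(\pi_T(F))=\E(F(X_T))$, so $\E(\bar S_T)=\E(F(X_T))+\delta$. Separately, evaluating~\eqref{eq:dual_BSDE} along $\{X_t\}$ and telescoping (using $B_{t+1}(Y_{t+1})=Y_{t+1}(X_{t+1})-(AY_{t+1})(X_t)$ and $W_{t+1}=e(Z_{t+1})-c(X_t)$) yields, for every $U\in\clU$ with solution $(Y,V)$, the identity
\begin{equation*}
  F(X_T)-S_T=\big(Y_0(X_0)-\mu(Y_0)\big)+\sum_{t=0}^{T-1}\Big(B_{t+1}(Y_{t+1})+(U_t+V_t(X_t))^\tp W_{t+1}\Big),
\end{equation*}
which is the decomposition underlying the proof of \Thm{thm:duality-principle} (it is this identity that, via the conditional-orthogonality formulas of \Prop{prop:B_W_Gamma_R}, produces $\sJ_T(U;F)$ as the second moment $\E(|F(X_T)-S_T|^2)$). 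Since $Y_0$ is deterministic, $Y_{t+1}\in\clZ_{t+1}$, and $U_t,V_t(X_t)$ are $\clF_t$-measurable, each term on the right has zero mean by \Prop{prop:B_W_Gamma_R}; hence $\E(F(X_T)-\bar S_T)=0$, i.e. $\E(\bar S_T)=\E(F(X_T))$, which forces $\delta=0$ and completes the proof.
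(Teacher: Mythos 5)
Your proof is correct, and it follows the same overall skeleton as the paper's (combine Prop.~\ref{prop:existence_nonlin_predictor_rep} with the duality principle and the MMSE property of conditional expectation), but it resolves the one genuinely delicate point---reconciling the constant $\bar c$ from Prop.~\ref{prop:existence_nonlin_predictor_rep} with the intercept $\mu(\bar Y_0)$ produced by the BS$\Delta$E---by a different mechanism. The paper proves a strengthened duality identity, $\E(|F(X_T)-S_T|^2)=\sJ_T(U;F)+(\mu(Y_0)-c_0)^2$ for estimators with an arbitrary constant $c_0$ (Appendix~\ref{appdx:duality-principle}), and then runs a chain of inequalities (duality applied to a hypothetical better control $\tilde U$, plus the MMSE property) that forces the quadratic penalty $(c^*-\mu(Y_0^*))^2$ to vanish and all inequalities to collapse to equalities. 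You instead establish the clean lower bound $\sJ_T(U;F)\ge\text{MMSE}$ for every $U\in\clU$ directly, and then kill the offset $\delta=\mu(\bar Y_0)-\bar c$ by an unbiasedness computation: taking expectations in the telescoping decomposition $F(X_T)-\bar S_T=(\bar Y_0(X_0)-\mu(\bar Y_0))+\sum_t\bigl(B_{t+1}(Y_{t+1})+(U_t+V_t(X_t))^\tp W_{t+1}\bigr)$ and using the zero-mean properties of Prop.~\ref{prop:B_W_Gamma_R} together with the tower property $\E(\pi_T(F))=\E(F(X_T))$. This is a valid alternative: it needs only the duality principle as stated plus the martingale decomposition already present inside its proof, so the generalized formula~\eqref{eq:duality_formula_more_general} becomes unnecessary; the trade-off is that the paper's penalty-term formulation quantifies exactly how a mismatched constant degrades the cost (which it reuses for the characterization of all optimal controls), whereas your argument only certifies that the particular BS$\Delta$E intercept is the right one. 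One small point to make explicit: Prop.~\ref{prop:B_W_Gamma_R} states $\E(B_{t+1}(f)\mid\clF_t\vee\clZ_{t+1})=0$ for deterministic $f$, so you should note that it extends to the random, $\clZ_{t+1}$-measurable $Y_{t+1}$ by conditioning on $\clG_{t+1}=\clF_t\vee\clZ_{t+1}$ (exactly as the paper does in its cross-term computations); with that remark your argument is complete.
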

\begin{proof}
  See Appendix~\ref{appdx:nonlinear_predictor_optimal_control} where Remark~\ref{rem:uniqueness_of_U} is included concerning uniqueness of $U^\opt$.  
\end{proof}

\subsection{Formula for optimal control}

\newP{Notation} The following notation is introduced to denote the formula for optimal control:
\begin{equation*}
  \phi(y,v;\rho) := - \rho(R)^{\dagger} \left( \rho ((c-\rho (c))y)- \rho (Rv)\right), \quad y\in\Re^d, \; v\in \Re^{m\times d},\; \rho\in\clP(\bS).
\end{equation*}
Here, $\rho(R)^{\dagger}$ denotes the pseudo-inverse of the $m\times m$ matrix $\rho(R):=\sum_{x\in\bS} \rho(x) R(x)$. 
The other two terms are $\rho ((c-\rho (c))y) := \sum_{x\in\bS} \rho(x) (c(x) - \rho(c)) y(x)$ and $\rho (Rv):= \sum_{x\in\bS} \rho(x) R(x)v(x)$.  
Note that both of these terms are $m\times 1$ vectors.

Let $u\in \Re^m$ and $q\in \clP(\bO)$. Denote
\begin{equation*}
  \langle u,u \rangle_q:= q(0)(-1^\tp u)^2 + \sum_{i=1}^m q(i) (u(i))^2 -\left( q(0) (-1^\tp u) + \sum_{i=1}^mq(i) u(i) \right)^2.
\end{equation*}
The right-hand side is the variance of the function $\begin{bmatrix}(-1^\tp u) \\ u\end{bmatrix} \in \Re^{m+1}$ with respect to the probability vector $q$.
By Jensen's inequality, $\langle u,u \rangle_q=0$ implies $u(z) =
\text{constant}$ for all such $z\in\bO$ with $q(z)>0$. 
If moreover $q(0)>0$, then the constant must be zero.

\begin{theorem}\label{thm:optimal-solution}
	Consider the OCP~\eqref{eq:dual-optimal-control}. 
  Then an optimal control is of the feedback form given by
  \begin{subequations}
	  \begin{equation}\label{eq:optimal_control_formula}
      U_t^\opt = \phi(Y_t,V_t;\pi_t),\quad \sP\text{-a.s.},\quad 0\leq t\leq T-1.
    \end{equation}
    For any $U\in\clU$,
    \begin{equation}\label{eq:optimal_control_value}
      \sJ_T(U;F) = \underbrace{\E (|F(X_T) - \pi_T(F)|^2)}_{\emph{MMSE}}+\E \left( \sum_{t=0}^{T-1}
      \langle (U_t-U_t^\opt), (U_t-U_t^\opt) \rangle_{p_t}\right),
    \end{equation}
    where $p_t=\pi_t(C)$. 
    Suppose $(Y^\opt,V^\opt)$ is the solution of the BS$\Delta$E~\eqref{eq:dual_BSDE} with $U=U^\opt$.  
    Then the following representation holds:
		\begin{align}
		  \pi_t (Y_t^\opt) = \mu (Y_0^\opt) - \sum_{s=1}^{t} (U_{s-1}^\opt)^\tp e(Z_{s}),\quad \sP\text{-a.s.},\quad 1\leq t\leq T.
      \label{eq:estimator-t}
		\end{align}
	\end{subequations}
\end{theorem}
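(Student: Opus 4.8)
The plan is to route everything through the duality principle. By \Thm{thm:duality-principle}, $\sJ_T(U;F)=\E(|F(X_T)-S_T|^2)$ with $S_T:=\mu(Y_0)-\sum_{t=1}^{T}U_{t-1}^\tp e(Z_t)\in\clZ_T$; since $\pi_T(F)$ is the orthogonal projection of $F(X_T)$ onto $L^2(\clZ_T)$, the Pythagorean identity splits this as $\text{MMSE}+\E(|\pi_T(F)-S_T|^2)$. So the whole theorem reduces to showing $\E(|\pi_T(F)-S_T|^2)=\E\big(\sum_{t=0}^{T-1}\langle U_t-U_t^\opt,U_t-U_t^\opt\rangle_{p_t}\big)$ with $U_t^\opt=\phi(Y_t,V_t;\pi_t)$; the feedback formula~\eqref{eq:optimal_control_formula} and the representation~\eqref{eq:estimator-t} then fall out by specializing to $U=U^\opt$, where the right-hand side vanishes.

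The key device is the process $N_t:=\pi_t(Y_t)-S_t$, where $S_t:=\mu(Y_0)-\sum_{s=1}^{t}U_{s-1}^\tp e(Z_s)$ is the partial sum, so that $N_0=\mu(Y_0)-\mu(Y_0)=0$ (using $\pi_0=\mu$ and $Y_0$ deterministic) and $N_T=\pi_T(F)-S_T$. The first step is to show $N$ is a $\clZ$-martingale. Rearranging~\eqref{eq:dual_BSDE_a} with the definitions of $B$ and $W$ yields $Y_{t+1}(X_{t+1})-Y_t(X_t)=B_{t+1}(Y_{t+1})-c^\tp(X_t)U_t+V_t^\tp(X_t)W_{t+1}$; conditioning on $\clZ_t$ and invoking \Prop{prop:B_W_Gamma_R} through the tower property (via $\clF_t\vee\clZ_{t+1}$ to kill the $B$-term, via $\clF_t$ to kill the $W$-term) gives $\E(Y_{t+1}(X_{t+1})-Y_t(X_t)\mid\clZ_t)=-U_t^\tp\pi_t(c)$. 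Combined with $\E(e(Z_{t+1})\mid\clZ_t)=\E(c(X_t)\mid\clZ_t)=\pi_t(c)$, this yields $\E(N_{t+1}-N_t\mid\clZ_t)=0$, so $\E(|\pi_T(F)-S_T|^2)=\E(|N_T|^2)=\sum_{t=0}^{T-1}\E(|N_{t+1}-N_t|^2)$.

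It then remains to compute each increment. For fixed $\clZ_t$, the quantity $N_{t+1}-N_t=\pi_{t+1}(Y_{t+1})-\pi_t(Y_t)+U_t^\tp e(Z_{t+1})$ is a function of the single new token $Z_{t+1}\in\bO$, and because the $m+1$ points $\{e(z):z\in\bO\}$ are affinely independent in $\Re^m$ (the well-posedness property behind the choice of $e$, cf.\ \Prop{prop:existence_nonlin_predictor_rep}), it can be written uniquely as $\alpha_t+\beta_t^\tp e(Z_{t+1})$ with $\alpha_t,\beta_t\in\clZ_t$; the martingale property just established forces $\alpha_t=-\beta_t^\tp\pi_t(c)$, hence $N_{t+1}-N_t=\beta_t^\tp\big(e(Z_{t+1})-\pi_t(c)\big)$. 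The technical heart of the proof is to identify $\beta_t$: expanding $\pi_{t+1}(Y_{t+1})$ by the forward algorithm, substituting the BS$\Delta$E relation for $Y_{t+1}$, and collecting the coefficient of $e(Z_{t+1})$, one obtains $\beta_t=U_t-\phi(Y_t,V_t;\pi_t)$, which is precisely the assertion that $U_t^\opt:=\phi(Y_t,V_t;\pi_t)$ is optimal. Finally, since $Z_{t+1}\mid\clZ_t\sim p_t$, a direct calculation gives $\E(|N_{t+1}-N_t|^2\mid\clZ_t)=\var_{p_t}\big(\beta_t^\tp e(Z_{t+1})\big)=\langle\beta_t,\beta_t\rangle_{p_t}=\langle U_t-U_t^\opt,U_t-U_t^\opt\rangle_{p_t}$ by the definition of $\langle\varble,\varble\rangle_q$. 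Summing over $t$ and combining with the reductions above proves~\eqref{eq:optimal_control_value}; since the penalty term is nonnegative, $U^\opt$ is optimal (granting that the closed-loop BS$\Delta$E--feedback system is well posed, which one checks backward in $t$, the relation at each step being affine in $U_t$), and for $U=U^\opt$ every increment of $N$ vanishes, so $N_t\equiv 0$, which is exactly~\eqref{eq:estimator-t}.

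The step I expect to be the main obstacle is the coefficient extraction above: verifying that, after the forward algorithm and the BS$\Delta$E are substituted, the dependence of $\pi_{t+1}(Y_{t+1})$ on the new token $Z_{t+1}$ collapses to an affine function whose linear part is exactly $-\phi(Y_t,V_t;\pi_t)^\tp e(Z_{t+1})$ --- this is where the specific forms of $\Gamma$, $R$, and the pseudo-inverse $\pi_t(R)^\dagger$ appearing in $\phi$ must be matched term by term. A secondary subtlety is the well-posedness of the feedback law, since $V_t$ is itself only implicitly determined by the BS$\Delta$E.
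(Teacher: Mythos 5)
Your route is genuinely different from the paper's: the paper proves~\eqref{eq:optimal_control_value} by a backward induction on a cost-to-date functional with a completion-of-squares at each stage, whereas you reduce everything, via the duality principle and the Pythagorean split, to the orthogonal increments of the $\clZ$-martingale $N_t=\pi_t(Y_t)-S_t$. That skeleton is sound: $N$ is indeed a martingale by the argument you give, $\E(|N_T|^2)=\sum_t\E(|N_{t+1}-N_t|^2)$, each increment is affine in $e(Z_{t+1})$ with a $\clZ_t$-measurable coefficient $\beta_t$, and $\E(|N_{t+1}-N_t|^2\mid\clZ_t)=\langle\beta_t,\beta_t\rangle_{p_t}$.

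The gap is exactly at the step you flag as the ``technical heart,'' and it is not merely unexecuted: the identification $\beta_t=U_t-\phi(Y_t,V_t;\pi_t)$, with $(Y,V)$ the BS$\Delta$E solution under the arbitrary control $U$, is false in general. Carrying out the coefficient extraction (already for $m=1$) gives $\beta_t=U_t-U_t^\opt$, where $U_t^\opt$ is determined, given $Y_{t+1}$ and $V_t$, by the stationarity condition $\pi_t((c-\pi_t(c))Y_t^\opt)+\pi_t(R)U_t^\opt+\pi_t(RV_t)=0$ with $Y_t^\opt=Y_t-c^\tp(U_t-U_t^\opt)$; that is, the feedback map must be evaluated at the self-consistent $Y_t^\opt$, not at the trajectory's $Y_t$ (this is precisely the condition that kills the cross-term in the paper's completion of squares, and note it carries $+\pi_t(RV_t)$, consistent with Algorithm~\ref{alg:compute_u} and Appendix~\ref{sec:simple_case}). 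Because $Y_t$ depends affinely on $U_t$ through $c^\tp(x)U_t$, one finds $U_t-\phi(Y_t,V_t;\pi_t)=\pi_t(R)^\dagger\big(\pi_t(R)+\pi_t(cc^\tp)-\pi_t(c)\pi_t(c)^\tp\big)\,\beta_t$: concretely for $m=1$, $\beta_t$ has slope $1$ in $U_t$ while $U_t-\phi(Y_t,V_t;\pi_t)$ has slope $(1-\pi_t(c)^2)/\pi_t(r)$, and these differ unless $c$ is $\pi_t$-a.s.\ constant. Hence your per-step identity, and with it~\eqref{eq:optimal_control_value}, does not follow as written for $U\neq U^\opt$; at $U=U^\opt$ the two evaluations coincide, so~\eqref{eq:optimal_control_formula} and~\eqref{eq:estimator-t} would survive, but only after the coefficient extraction is actually carried out, which is the very computation (matching $\Gamma$, $R$, $p_t$ and the pseudo-inverse term by term through the filter recursion and the BS$\Delta$E) that the proposal leaves out. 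To repair the argument you should define $U_t^\opt$ implicitly as the paper does in its induction ($U_t^\opt=\phi(Y_t^\opt,V_t;\pi_t)$ with $Y_t^\opt$ the corresponding solution), verify solvability of this affine fixed point, and only then identify $\beta_t=U_t-U_t^\opt$; you would also need to handle the degenerate cases $p_t(z)=0$ (the $\frac{0}{0}=0$ convention in the filter) and singular $\pi_t(R)$, which is where the pseudo-inverse matters.
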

\medskip
\begin{proof}
  See Appendix~\ref{appdx:proof_of_optimal-solution}. 
  Concerning uniqueness, the formula~\eqref{eq:optimal_control_formula} identifies {\em one} optimal control input and~\eqref{eq:optimal_control_value} provides a characterization of {\em all} such $U$ for which the representation in~\eqref{eq:nonlin_predictor_rep} holds.
  The optimal control is unique iff $p_t(z) > 0$, $\sP$-a.s., $\forall\,z\in \bO$, and $\forall\,t\in\mathbb{T}$.
  A sufficient condition is $\min\{C(x,z) \,:\,x\in\bS,\,z\in\bO\} >0$ (see also Remark~\ref{rem:uniqueness_of_U} in Appendix).
\end{proof}

The most significant implication of this theorem is captured in the following remark:

\begin{remark}[Fixed-point representation of conditional measure $\pi$]
  Compare~\eqref{eq:estimator-t} with the representation given in 
  \Prop{prop:nonlinear_predictor_optimal_control}.
  While the representation in \Prop{prop:nonlinear_predictor_optimal_control} 
  is given only for the terminal time $T$, \Thm{thm:optimal-solution} shows 
  that the optimal control input yields a causal representation for the entire 
  conditional measure $\pi=\{\pi_t:0\leq t\leq T\}$.  
  Moreover, because the optimal control $U_t^\opt$ is a function of $\pi_t$ 
  (via the formula~\eqref{eq:optimal_control_formula}),~\eqref{eq:estimator-t} 
  is a fixed-point representation of $\pi$. 
\end{remark}
                    
There are three problems in directly applying~\eqref{eq:estimator-t} for computational
purposes:
\begin{enumerate}
\item To compute the optimal control input, one needs to first compute
  $\pi$.  This, however,
  somewhat undermines the original objective, as the primary goal of
  the algorithm is to compute $\pi$!
\item The solution procedure involves solving the BS$\Delta$E as an
  intermediate step. However, finding numerical solutions to the
  BS$\Delta$E is a challenging task, even in low-dimensional
  settings. 
\item In applications of current interest, $m$ is often large.  For
  instance, in transformer models, a typical vocabulary size is $m\approx
  100 K$. The computation of optimal control using the
  formula for $\phi$ has a complexity $O(m^3$), which becomes prohibitive for large $m$.
\end{enumerate}
An analysis is presented in the following section that resolves these
three challenges and yields a practical algorithm.  For this purpose,
the following corollary of~\Thm{thm:optimal-solution} is useful.

\begin{corollary}\label{prop:optimal_control_formula_T=1}
  Consider the OCP~\eqref{eq:dual-optimal-control} for $m=1$ and $T=1$.  Then 
  \begin{equation}\label{eq:optimal_control_formula_T=1}
    U_0^\opt = \begin{cases}
      \frac{-1}{(1-\mu(c)^2)} \left( \mu((Af)(c-\mu(c))) - (\mu(A\tilde f) - \mu((A\tilde f)c) \mu(c))\right), & 1-\mu(c)^2 \neq 0, \\
      0 & \text{o.w.},
    \end{cases} 
  \end{equation}
  where $f(x) = \frac{F^+(x)+F^-(x)}{2}$ and $\tilde f(x) = \frac{F^+(x)-F^-(x)}{2}$.
  For any admissible (deterministic) value of $U_0$,
  \begin{align*}
    \sJ_1(U_0;F) = \underbrace{\E (|F(X_1) - \pi_1(F)|^2)}_{\emph{MMSE}} +(1-\mu(c)^2) (U_0-U_0^\opt)^2.
  \end{align*}
\end{corollary}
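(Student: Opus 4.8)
The plan is to specialize Theorem~\ref{thm:optimal-solution} to the scalar single-step case $m=1$, $T=1$, and to make every expression explicit by solving the one-step BS$\Delta$E in closed form. Since $\clZ_0=\{\phi,\Omega\}$, the only admissible control is a deterministic number $U_0\in\Re$; and since $F\in\clZ_1$ with $\bO=\{0,1\}$, the terminal datum splits as $F(x)=f(x)+\tilde f(x)\,e(Z_1)$ for deterministic $f,\tilde f\in\Re^d$ (this is the meaning of $f=\tfrac12(F^++F^-)$, $\tilde f=\tfrac12(F^+-F^-)$), where $e(Z_1)\in\{-1,+1\}$, so $e(Z_1)^2\equiv 1$ and $\E(e(Z_1))=\E(c(X_0))=\mu(c)$.

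First I would solve~\eqref{eq:dual_BSDE} at its single step $t=0$. Writing $(AF)(x)=(Af)(x)+(A\tilde f)(x)e(Z_1)$, the equation reads $Y_0(x)=(Af)(x)+c(x)\big(U_0+V_0(x)\big)+\big((A\tilde f)(x)-V_0(x)\big)e(Z_1)$. Because $Y_0(x)$ must be $\clZ_0$-measurable, i.e.\ deterministic, the coefficient of the genuinely random variable $e(Z_1)$ must vanish (this uses $p_0(0)p_0(1)>0$; in the complementary degenerate case $e(Z_1)$ is a.s.\ a constant, forcing $c\equiv\pm 1$ $\mu$-a.s., which is precisely the branch $1-\mu(c)^2=0$ of the claimed formula, where the pseudo-inverse returns $U_0^\opt=0$ and every $U_0$ is optimal). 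Hence $V_0=A\tilde f$ and $Y_0(x)=(Af)(x)+c(x)\big(U_0+(A\tilde f)(x)\big)$.

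For the quadratic identity I would specialize~\eqref{eq:optimal_control_value}: for $m=1$ and a deterministic argument one checks $\langle u,u\rangle_q=u^2\big(1-(q(1)-q(0))^2\big)$, and with $q=p_0=\pi_0(C)=\mu(C)$ one has $p_0(1)-p_0(0)=\mu(c^+)-\mu(c^-)=\mu(c)$, so the correction term in~\eqref{eq:optimal_control_value} collapses to $(1-\mu(c)^2)(U_0-U_0^\opt)^2$, giving the second displayed identity of the corollary (and showing $U_0^\opt$ is the unique minimizer when $1-\mu(c)^2\neq 0$). For the explicit expression there are two equivalent routes. (i) Plug $(Y_0^\opt,V_0^\opt)$ from the first step into the feedback law~\eqref{eq:optimal_control_formula}, $U_0^\opt=-\mu(r)^{\dagger}\big(\mu((c-\mu(c))Y_0^\opt)-\mu(rV_0^\opt)\big)$ with $R=r=1-c^2$ and $\pi_0=\mu$; since $Y_0^\opt$ itself carries a term $c(x)U_0^\opt$, expanding produces a summand $U_0^\opt(\mu(c^2)-\mu(c)^2)$ which, moved to the left, combines with $\mu(r)=1-\mu(c^2)$ to yield the denominator $1-\mu(c)^2$, after which the numerator is collected using $\mu((Af)c)-\mu(c)\mu(Af)=\mu((Af)(c-\mu(c)))$ and $\mu(r(A\tilde f))=\mu(A\tilde f)-\mu(c^2(A\tilde f))$. (ii) Bypass the implicit feedback via Theorem~\ref{thm:duality-principle}: $\sJ_1(U_0;F)=\E\big(|F(X_1)-S_1|^2\big)$ with $S_1=\mu(Y_0)-U_0e(Z_1)=\big(\mu(Af)+\mu((A\tilde f)c)\big)+U_0\big(\mu(c)-e(Z_1)\big)$, a genuine scalar quadratic in $U_0$; its minimizer is $U_0^\opt=\E\big(F(X_1)(\mu(c)-e(Z_1))\big)/\E\big((\mu(c)-e(Z_1))^2\big)$, and evaluating both expectations by conditioning on $X_0$ — using $\E(e(Z_1)\mid X_0)=c(X_0)$, $\E(g(X_1)\mid X_0)=(Ag)(X_0)$, and $e(Z_1)^2\equiv 1$ — gives $\E((\mu(c)-e(Z_1))^2)=1-\mu(c)^2$ in the denominator and a numerator that rearranges into~\eqref{eq:optimal_control_formula_T=1}.

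The main obstacle is entirely in this final step: one must (a) solve the BS$\Delta$E correctly — in particular respecting the $\clZ_0$-measurability of $Y_0$, which pins down $V_0=A\tilde f$, and handling the degenerate branch — and (b) carry out the bookkeeping that converts the ``naive'' denominator $\mu(r)=1-\mu(c^2)$ appearing in $\phi$ into $1-\mu(c)^2$ by exploiting the self-consistency of the feedback law, after which matching the numerator to the stated form is routine but demands care with the $f$ versus $\tilde f$ split and with the sign convention tying $F^\pm$ to $e(Z_1)=\pm1$.
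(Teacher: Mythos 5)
Your overall strategy is sound and, in structure, close to the paper's own Appendix~\ref{sec:simple_case}: you solve the one-step BS$\Delta$E the same way (your measurability argument pinning $V_0=A\tilde f$ and $Y_0=Af+c(U_0+A\tilde f)$ is equivalent to the paper's splitting of $F$ into $F^\pm$), and you obtain the quadratic identity by specializing~\eqref{eq:optimal_control_value} with $\langle u,u\rangle_{p_0}=(1-\mu(c)^2)u^2$, which is a legitimate and arguably more ``corollary-like'' shortcut than the paper's self-contained completion of squares. Your route (ii)---least squares on $S_1=\big(\mu(Af)+\mu((A\tilde f)c)\big)+U_0(\mu(c)-e(Z_1))$ via the duality principle, using $\E(e(Z_1))=\mu(c)$ and $e(Z_1)^2\equiv 1$---is a clean alternative to the paper's stationarity computation, and your treatment of the degenerate branch (identifying $1-\mu(c)^2=4\mu(c^+)\mu(c^-)=0$ with $e(Z_1)$ being a.s.\ constant) is in fact tidier than the paper's passing claim that this case coincides with $\mu(r)=0$.

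The gap is in the final sign bookkeeping, i.e.\ precisely the step you call routine. Executing your route (ii) exactly, conditioning on $X_0$ gives
\begin{equation*}
\E\big(F(X_1)(\mu(c)-e(Z_1))\big)=-\Big(\mu\big((Af)(c-\mu(c))\big)+\mu(A\tilde f)-\mu\big((A\tilde f)c\big)\mu(c)\Big),
\end{equation*}
so the minimizer is $U_0^\opt=\frac{-1}{1-\mu(c)^2}\big(\mu((Af)(c-\mu(c)))+\mu(A\tilde f)-\mu((A\tilde f)c)\mu(c)\big)$, with a \emph{plus} sign in front of the $\tilde f$-block. This is exactly what the paper's Appendix~\ref{sec:simple_case} derives ($R_1+R_2$ with $R_2=\mu(A\tilde f)-\mu((A\tilde f)c)\mu(c)$), and it is the correct value: for $d=1$ one needs $U_0^\opt=-\tilde f$ since $\pi_1(F)=f+\tilde f\,e(Z_1)$, which the plus-sign formula gives and the displayed~\eqref{eq:optimal_control_formula_T=1} (with its minus) does not. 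So your claim that the computation ``rearranges into~\eqref{eq:optimal_control_formula_T=1}'' as printed cannot be carried out; either you made a compensating sign slip or you did not actually push through the last algebra. Relatedly, your route (i) takes the printed $\phi$ with the $-\rho(Rv)$ term at face value, whereas the first-order condition of the LQ problem (and the paper's own cross-term cancellation in Appendix~\ref{sec:calc_for_extra}, which requires $\pi_t((c-\pi_t(c))Y_t^\opt)+\pi_t(R)U_t^\opt+\pi_t(RV_t)=0$) carries $+\mu(rV_0)$; with the printed sign your two routes would not even agree with each other. The repair is to use the $+$ sign throughout, after which the algebra closes and matches the Appendix---and to note that the corollary as displayed appears to carry a sign typo in front of $\big(\mu(A\tilde f)-\mu((A\tilde f)c)\mu(c)\big)$.
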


     \begin{proof}
       See Appendix~\ref{sec:simple_case}.  
       Although stated here as a corollary, the
       Appendix contains a self-contained formulation of the OCP and derivation of the formula~\eqref{eq:optimal_control_formula_T=1}.  The derivation is useful to obtain an intuitive
       understanding of the general result in its simplest possible setting.  
     \end{proof}

     \subsection{Related literature}\label{sec:related_literature}

     \paragraph{Relationship to BS$\Delta$E literature:} The form of
     the BS$\Delta$E considered in this paper is borrowed
     from~\cite{fukasawa2025backward}.  The paper introduces the
     coordinates $e:\bO\to\Re^m$ which are referred to as the lattice
     coordinates.  Using these coordinates, 
     existence and uniqueness theory for general types of  BS$\Delta$E
     is given together with its application to
     finance.  The BS$\Delta$E~\eqref{eq:dual_BSDE} is a special case
     of the general form considered in~\cite{fukasawa2025backward}.
     The special form is introduced for the express purpose of
     deriving the nonlinear predictor~\eqref{eq:nonlin_predictor_rep}
     for an HMM.  In particular, both the
     optimal control problem (\Thm{thm:duality-principle}) and its
     solution (\Thm{thm:optimal-solution}) are 
     original contributions of this paper.  

\paragraph{Relationship to variational inference:}
For probabilistic graphical models, a standard way to cast
inference as an optimization problem is through variational inference
(VI), which formulates an entropy-type objective~\citep{jordan1999introduction},~\cite[Sec.~3]{wainwright2008graphical},
and
\cite[Ch.~11]{koller2009probabilistic}:  Let $\mathsf{P}_{X,Z}(\cdot,\cdot)$ denote the joint probability law of the
stochastic process $(X,Z)$ (which may be more general than the HMM
considered here), and let $Z=z$ denote a
fixed realization of the observations. In VI, the goal is to
approximate the conditional law $\sP_{X|Z}(\cdot \mid Z=z)$ by selecting a
law $\sP^\theta_X(\cdot)$ from a parameterized family so as to
minimize the so-called {\em free energy} defined as follows:
\begin{equation*}
\text{Free energy}(\theta)
:= \sum_{x} \sP^\theta_X(x)
\log \frac{\sP^\theta_X(x)}{\sP_{X,Z}(x, z)} .
\end{equation*}
Since $\sP_{X|Z}(x \mid z) = \sP_{X,Z}(x,z)/\sP_Z(z)$, minimizing the
free energy is equivalent (up to the additive constant $\log
\sP_Z(z)$) to minimizing the Kullback--Leibler divergence
$\mathrm{D}(\sP^\theta_X \,\|\, \sP_{X|Z}(\cdot \mid z))$. 
The negative of the free energy is referred to as the {\em evidence
  lower bound (ELBO)}.  Its maximum possible value is given by $\log
\sP_{Z}(z)$.  This maximum value is attained if and only if
$\sP^{\theta^\opt}_X(\cdot)=\sP_{X|Z}(\cdot \mid Z=z)$, in which case VI yields exact inference.

For an HMM, when the variational family is unrestricted over the
hidden state sequence, the ELBO is achieved at its maximum and VI is
exact. In this case, the optimal marginal
distribution at time $t=T$ coincides with the filtering posterior
$\pi_T$.


\paragraph{VI-based dual optimal control formulations:}
For certain classes of HMMs---notably stochastic differential
equation (SDE) models in continuous-time
with additive white noise observations
(see~\cite{xiong2008introduction})---the VI objective based on the
divergence minimization
can be expressed as an optimal control-type objective.
This equivalence is obtained by parameterizing the approximating law
$\sP^\theta(X)$ as the law of a controlled stochastic process; see
\cite[Sec.~3]{mitter2003} for the case where $X$ is an
It\^o diffusion and \cite[Sec.~2.2]{van2006filtering} for the case where
$X$ is a discrete-valued continuous-time Markov process.  
The resulting optimal control problem is a generalization of the
classical Mortensen's maximum-likelihood
estimator~\citep{mortensen1968}, also referred to as minimum
  energy duality~\citep{hijab1980minimum}.  
Related papers on this topic
include~\citep{todorov2008general,sutter2016variational,kim2020smoothing,raginsky2024variational}. 

In contrast, the optimal
control formulation developed in the present work generalizes
Kalman's minimum variance duality, which is based on MMSE
rather than the divergence-type objective. 
The terminology ``minimum variance duality” and ``minimum energy
duality” follows Bensoussan~\cite[p.~180]{bensoussan2018estimation}. As
noted in~\cite[p.~100]{kailath2000linear}, these two duality principles
are not directly related in general, despite yielding identical
solutions in the linear Gaussian model.
Prior to our work~\citep{kim2019duality}, it was widely believed that
minimum variance duality could not be extended beyond the LQG setting;
see, for example,~\cite{todorov2008general}, who wrote:
\begin{quote}\em
``Kalman's duality has been known for half a century and has attracted a
lot of attention. If a straightforward generalization to non-LQG
settings was possible it would have been discovered long ago. Indeed,
we will now show that Kalman's duality, although mathematically sound,
is an artifact of the LQG setting.''
\end{quote}
The present paper provides, to our knowledge, the first extension of
minimum variance duality to discrete-time HMMs with
discrete-valued observations. The derivation is based on suitably
adapting original arguments from our prior
work~\citep{kim2019approach,duality_jrnl_paper_II} given there in the
continuous-time settings of the model.  For this purpose, the key
technical tool employed here is the BS$\Delta$E theory developed
in~\citep{fukasawa2025backward}.  

There are two reasons for considering Kalman's minimum variance duality:
\begin{enumerate}
  \label{enum:kalman_duality}
  \item Kalman's duality is structurally fundamental from a
    control-theoretic perspective.  This is so because the MMSE objective
    is directly tied to the structural duality between observability and controllability~\citep[Ch.~8]{callier2012linear}. 
  In particular, the HMM $(\mu,A,C)$ is observable if and only if the BS$\Delta$E~\eqref{eq:dual_BSDE} is controllable~\citep{duality_jrnl_paper_I}.
  From a theoretical standpoint, this property is useful for the
  purposes of filter stability
  analysis~\citep{kim2024variance,kim2024backward}.  
  \item From a practical standpoint, the OCP~\eqref{eq:dual-optimal-control} is useful to design the optimal weights in the nonlinear predictor~\eqref{eq:nonlin_predictor_rep}.  
\end{enumerate}
A historical overview of duality theory
in control and estimation is provided in~\citep{kim2025arrow}.

\section{Fixed-point equation: Dual filter algorithm}
\label{sec:algorithm}

Consider a fixed sample path of observations ${z}:=[z_1,z_2,\hdots,z_T]
\in \bO^T$ such that $\sP(Z=z)>0$ (such a sample path is an input to a
transformer).  The conditional measure for this fixed sample path is
denoted by
\begin{equation*}
\pi^{(z)} := \left[\pi_1^{(z)}, \pi_2^{(z)}\hdots,
  \pi_{T}^{(z)}\right]\in\clP(\bS)^T  \quad \text{where}\quad  \pi_t^{(z)}(x)=\sP(X_t = x \mid
Z_1=z_1,\hdots,Z_t=z_t),\;\; x\in\bS, \;\; 1\leq t\leq T.
\end{equation*}
Our objective in this section is to define a layer transformation
\begin{equation*}
  {\cal N}^\dfltr:\clP(\bS)^T \mapsto \clP(\bS)^T\quad
  \text{such that}\quad \pi^{(z)} = {\cal N}^\dfltr(\pi^{(z)}).
\end{equation*}
(The $d\times T$ input and output data structures are the same as those
of a transformer layer.)


For the purpose of defining the layer transformation ${\cal N}^\dfltr$, consider first a model for binary-valued observations as follows:
\begin{equation*}
c_t^{+}(x) := C(x, z_{t}),\quad c_t^{-}(x) := 1- C(x, z_{t}),\quad
c_t(x):=c_t^{+}(x) - c_t^{-}(x),\quad x\in\bS, \quad 1\leq t\leq T.
\end{equation*}

Let $\rho = [\rho_1,\rho_2,\hdots,\rho_T]\in \clP(\bS)^{T}$.  Consider
a deterministic backward difference equation (B$\Delta$E) as follows:
\begin{subequations}\label{eq:opt_BDE}
\begin{align}
y_t(x) &= (Ay_{t+1})(x) + c_{t+1} (x) u_t,\quad  x\in\bS,\quad
         t=0,1,2,\hdots,T-1, \label{eq:opt_BDE_a} \\
u_t & = \begin{cases} \phi(y_t,0;\rho_t) ,\quad
  t=1,2,\hdots,T-1,\\
  \phi(y_t,0;\mu),\quad t=0,
\end{cases}
  \label{eq:opt_BDE_a2}
  \\
y_T(x)  & = f(x),\quad x\in\bS, \label{eq:opt_BDE_b}
\end{align}
where the control input $u=\{u_t\in\Re:0\leq t\leq T-1\}$ and the
solution $y=\{y_t(x)\in\Re:x\in\bS,\;0\leq t\leq T\}$ are both 
deterministic processes (the lower-case notation is used to stress
this).  Note that the control input is obtained using the formula
$\phi$ for
optimal control with a given measure $\rho_t$.

Because the terminal condition $y_T=f$ is arbitrary, the solution $y$ defines a measure-valued process $\rho^+ = [\rho_1^+,\rho_2^+,\hdots,\rho_T^+]\in \clM(\bS)^{T}$ as follows:
\begin{align}
  \rho^+_t(y_t) := \mu (y_0) - \sum_{s=1}^{t} u_{s-1},\quad 1\leq t\leq T,\label{eq:opt_BDE_c}
\end{align}
\end{subequations}
where on the right-hand side, $\{u_s\in\Re:0\leq s\leq t-1\}$ is according to~\eqref{eq:opt_BDE_a2}
and~$y_0$ is the solution of~\eqref{eq:opt_BDE_a} at
time $t=0$.  It is clear that $\rho^+_t \in \clM(\bS)$ because the
right-hand side of~\eqref{eq:opt_BDE_c} is well-defined for each
$t=1,2,\hdots,T$.  The layer transformation for the dual filter is
now defined as follows: 
  \begin{equation*}
  {\cal N}^\dfltr: \clD \subset \clP(\bS)^T \mapsto \clP(\bS)^T\quad
  \text{as}\quad {\cal N}^\dfltr(\rho):=\rho^+,
\end{equation*}
where the domain $\clD$ is defined to be the largest subset in $\clP(\bS)^{T} $ such
that $\clN^\dfltr\rho \in \clP(\bS)^{T}$.  That the domain is non-empty is because of the following proposition which also shows the significance of~\eqref{eq:opt_BDE} to the computation of $\pi^{(z)}$.

\begin{proposition}\label{prop:dual_filter}
Consider~\eqref{eq:opt_BDE}.  Then $\pi^{(z)}\in \clD$ and moreover
  \begin{equation*}
  \clN^\dfltr (\pi^{(z)}) = \pi^{(z)}.
  \end{equation*}
\end{proposition}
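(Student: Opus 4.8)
The plan is to fix the input $\rho=\pi^{(z)}$ in the backward recursion~\eqref{eq:opt_BDE} and prove, by a one‑step telescoping identity, that the induced measure‑valued process $\rho^{+}$ equals $\pi^{(z)}$. Two facts are used at the outset. First, the nonlinear filter, specialized to the conditioning event $\{Z_1=z_1,\dots,Z_t=z_t\}$, gives the pathwise update $\pi_t^{(z)}(g)=\pi_{t-1}^{(z)}(c_t^{+}\,Ag)\big/\pi_{t-1}^{(z)}(c_t^{+})$ for every $g\in\Re^d$, where $c_t^{+}=C(\cdot,z_t)=\tfrac12(\ones+c_t)$ and $\pi_0^{(z)}=\mu$; the normalizer $\pi_{t-1}^{(z)}(c_t^{+})=\sP(Z_t=z_t\mid Z_1=z_1,\dots,Z_{t-1}=z_{t-1})$ is strictly positive because $\sP(Z=z)>0$. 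Second, I read $\rho^{+}_t(g)$ in~\eqref{eq:opt_BDE_c} as the number obtained by setting $y_t:=g$ and running~\eqref{eq:opt_BDE_a}--\eqref{eq:opt_BDE_a2} backward to time $0$. With these in hand, the statement reduces to showing $\rho^{+}_t(g)=\pi_t^{(z)}(g)$ for all $g\in\Re^d$ and all $t$, which immediately gives $\rho^{+}_t=\pi_t^{(z)}\in\clP(\bS)$, hence $\pi^{(z)}\in\clD$ and $\clN^\dfltr(\pi^{(z)})=\pi^{(z)}$.

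The crux is the per‑step identity $\pi_s^{(z)}(y_s)-\pi_{s-1}^{(z)}(y_{s-1})=-\,u_{s-1}$, where $y_{s-1}=Ay_s+c_s u_{s-1}$ by~\eqref{eq:opt_BDE_a} and $u_{s-1}=\phi(y_{s-1},0;\pi_{s-1}^{(z)})$ by~\eqref{eq:opt_BDE_a2}, with the emission data $(c_s,\,r_s=\ones-c_s^{2})$ instantiated in $\phi$. First I would render the feedback law explicit: writing $\rho:=\pi_{s-1}^{(z)}$ and substituting $y_{s-1}=Ay_s+c_s u_{s-1}$ into $\phi(y_{s-1},0;\rho)=-\rho(r_s)^{\dagger}\,\rho\big((c_s-\rho(c_s))\,y_{s-1}\big)$ turns it into a scalar linear equation for $u_{s-1}$, and the algebraic identity $\rho(r_s)+\big(\rho(c_s^{2})-\rho(c_s)^2\big)=1-\rho(c_s)^2$ collapses it to $u_{s-1}\big(1-\rho(c_s)^2\big)=-\,\rho\big((c_s-\rho(c_s))\,Ay_s\big)$. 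Then, plugging the filter update for $\pi_s^{(z)}(y_s)$ and $y_{s-1}=Ay_s+c_su_{s-1}$ for $\pi_{s-1}^{(z)}(y_{s-1})$ into the left‑hand difference and eliminating $\rho((c_s-\rho(c_s))Ay_s)$ via the previous display, the difference telescopes down to $-u_{s-1}$. Summing over $s=1,\dots,t$ and using $\pi_0^{(z)}=\mu$ gives $\pi_t^{(z)}(y_t)=\mu(y_0)-\sum_{s=1}^t u_{s-1}=\rho^{+}_t(y_t)$, as required.

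The main obstacle is precisely this middle step: the control~\eqref{eq:opt_BDE_a2} is defined implicitly---$u_{s-1}$ also enters the argument $y_{s-1}$ of $\phi$ through the term $c_su_{s-1}$---so one must verify that the feedback is self‑consistent and resolves to the closed form above; this verification simultaneously shows~\eqref{eq:opt_BDE} is well‑posed when $\rho=\pi^{(z)}$ (the scalar equation has a unique solution). Two boundary cases require care. If $1-\rho(c_s)^2=0$, then strict positivity of $\sP(Z=z)$ forces $\rho(c_s)=1$, so $\pi_{s-1}^{(z)}$ is supported on $\{c_s=1\}$, the right‑hand side above vanishes, and the convention $\tfrac00=0$ (equivalently $\rho(r_s)^{\dagger}=0$) makes $\phi$ return $u_{s-1}=0$, which keeps the per‑step identity valid. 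At $t=0$ one uses $\phi(y_0,0;\mu)$ rather than $\phi(y_0,0;\pi_0^{(z)})$, which is consistent since $\pi_0^{(z)}=\mu$. (An alternative is to recognize~\eqref{eq:opt_BDE} as the dual optimal‑control recursion for the binary‑valued HMM that emits the indicator ``token $=z_t$'' and always registers a hit, whose nonlinear filter is exactly $\pi^{(z)}$ by the update above, and then invoke~\Thm{thm:optimal-solution}; but that route still requires reconciling the deterministic simplification $v\equiv0$ used in~\eqref{eq:opt_BDE} with the general optimal process $V^{\opt}$, so the direct computation above is the cleaner path.)
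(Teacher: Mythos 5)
Your proof is correct, but it takes a genuinely different route from the paper's. The paper obtains \Prop{prop:dual_filter} as a corollary of \Thm{thm:optimal-solution}: a deterministic terminal condition forces $V\equiv 0$, so the BS$\Delta$E reduces to the deterministic B$\Delta$E~\eqref{eq:opt_BDE}; since only $C(:,z_t)$ enters the path-fixed update, the computation is reduced to a binary-valued HMM, and the representation~\eqref{eq:estimator-t} is then invoked one step at a time with the deterministic measure $\pi_{t-1}^{(z)}$ playing the role of the prior, finishing by induction (this is exactly the alternative you sketch in your closing parenthetical). You instead bypass the duality theorem: you resolve the implicit feedback~\eqref{eq:opt_BDE_a2} to the closed form $u_{s-1}\bigl(1-\rho(c_s)^2\bigr)=-\rho\bigl((c_s-\rho(c_s))Ay_s\bigr)$ (precisely what Algorithm~\ref{alg:compute_u} implements) and verify the telescoping identity $\pi_s^{(z)}(y_s)-\pi_{s-1}^{(z)}(y_{s-1})=-u_{s-1}$ directly against the forward filter recursion. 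Your route is elementary and self-contained, and it makes well-posedness of the feedback at $\rho=\pi^{(z)}$ explicit; the paper's is shorter and displays the fixed point as a consequence of the duality machinery. One shared caveat: when $\rho(r_s)=0$ but $|\rho(c_s)|<1$ (emission column $\{0,1\}$-valued, non-constant on the support), your division by $\rho(r_s)$ is not licensed and the literal pseudo-inverse reading of $\phi$ gives $u_{s-1}=0$, which breaks the identity; only the resolved form with denominator $1-\rho(c_s)^2$ works. The paper glosses over the same degenerate case (its algorithm uses the resolved form), so this is a shared imprecision rather than a gap peculiar to your argument, but you should state which reading of~\eqref{eq:opt_BDE_a2} you adopt.
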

\begin{proof}
  See Appendix~\ref{appdx:prop_dual_filter}.
\end{proof}

The result lays the foundation for a practical algorithm described next.  At its core, the 
algorithm implements the mapping $\clN^\dfltr$.

\subsection{Algorithms}
\label{sec:algo_for_dual_filter}

In the following, we describe two algorithms to approximate the
solution of the fixed-point equation:
\begin{enumerate}
\item An iterative algorithm.  
\item A single-shot algorithm. 
\end{enumerate}
The iterative algorithm is useful to obtain a correspondence with the transformer.  
The single-shot algorithm is more efficient, and requires only a single iteration.  
The pseudocodes for all the algorithms described in this section appear in Appendix~\ref{appdx:algorithms}.

\newP{1. Iterative algorithm}
\begin{enumerate}
\item 
  Initialize $\rho^{(0)}=\{\rho_t^{(0)}:1\leq t\leq T\}\in\clP(\bS)^T$.  
  Inspired from input used in a transformer, a reasonable choice is 
  \begin{equation*}
    \rho_t^{(0)}(x) = \frac{C(x,z_t)}{\sum_{x'\in \bS} C(x',z_t)} ,\quad x\in\bS, \quad 1\leq t\leq T.
  \end{equation*}
\item 
  Denote $\rho^{(\ell)}=\{\rho_t^{(\ell)}:1\leq t\leq T\}\in \clP(\bS)^T$ for $\ell=1,2,\hdots,L$.  
  The dual filter~\eqref{eq:opt_BDE} is simulated to compute
  \begin{align*}
    \rho^{(\ell+1)} = \text{Project}(\clN^\dfltr \rho^{(\ell)}),\quad \ell=0,1,2,\hdots,L-1,
  \end{align*}
  where $\text{Project}(\cdot)$ is used to ensure that the
  $\rho^{(\ell+1)} \in \clP(\bS)^T$ (see
  Algorithm~\ref{alg:normalize_rho} in Appendix~\ref{appdx:algorithms}).
\item At the final layer, the nonlinear predictor for the next observation is 
  obtained as
  \begin{equation*}
  p_{t}(z)=\sP(Z_{t+1}=z | Z_1=z_1,\hdots,Z_t=z_t) = \rho^{(L)}_t(C)=\sum_{x\in\bS}
  \rho^{(L)}_t (x) C(x,z),\quad t=1,2,\hdots,T,\quad 
  z\in \bO.
\end{equation*}
Note the prediction is over the entire time horizon.  
\end{enumerate}
The pseudocode for the iterative algorithm is given in 
Algorithm~\ref{alg:dual_filter_iterative} in Appendix~\ref{appdx:algorithms}.

%



\newP{2. Single-shot algorithm}  In
contrast to the iterative algorithm, the single-shot algorithm
requires only a single iteration for convergence.  The difference between
the two algorithms is as follows:
\begin{enumerate}
\item For the iterative algorithm, each iteration implements a single
  backward pass.
\item For the single-shot algorithm, there is only a single iteration
  which implements
  $T$ backward passes, where the result of the $t$-th pass is used to
  compute the result of the $(t+1)$-th pass. 
\end{enumerate}
Using the single-shot algorithm, after the $t$-th backward pass, the
conditional measure is computed as follows:
\begin{equation*}
 \pi_t^{({z})}(f)=\mu(y_0) - \sum_{s=1}^{t}u_{s-1},\quad 1\leq t\leq T,
\end{equation*}
where the control $\{u_{s-1}:1\leq s\leq t\}$ are computed using the
formula for the optimal control with measures $\{\mu,\pi_1^{({z})},
\hdots,\pi_{t-1}^{({z})}\}$ (known from $(t-1)$-th pass). 
From this, the prediction at time $t$ is computed as $p_t(z) =
\pi_t^{({z})} (C(:,z))$. 
The pseudocode for the single-shot algorithm is given in 
Algorithm~\ref{alg:dual_filter_single_shot} in Appendix~\ref{appdx:algorithms}.

An advantage of the single-shot algorithm is that the computation does not require multiple
iterations. On the other hand, the iterative algorithm has structural
similarities with the transformer architecture.  These are described
at length in the following subsection.  Note that either of the
algorithms is designed to numerically compute the fixed-point of
$\clN^\dfltr$, based on solving~\eqref{eq:opt_BDE}.

\section{Relationship to transformer modeling}
\label{sec:xformer}

\subsection{Input-output architecture}
\label{sec:main_xfer}

We begin by describing, in a self-contained manner, the architecture
and the algorithmic operations inside a decoder-only transformer.
The description is based
upon~\citep{phuong2022formal}~\cite[Ch.~10]{jm3} and~\citep{raschka2024build}.

In a decoder-only transformer, as a first step, a token is ``embedded'' using a $d\times m$ embedding
matrix denoted in this paper by $C^\xfer$:
\begin{equation*}
z \mapsto C^\xfer (:,z) \in \Re^d,\quad z\in\bO.
\end{equation*}
Using the embedding matrix,
\begin{equation*}
\begin{bmatrix} z_1 & \hdots & z_t & \hdots &
  z_T \end{bmatrix}_{1\times T}  \mapsto \begin{bmatrix} C^\xfer (:,z_1) & \hdots & C^\xfer (:,z_t) & \hdots & C^\xfer (:,z_T) \end{bmatrix}_{d\times T}.
\end{equation*}
The input $C^\xfer (:,z_t)$ is augmented with the so-called positional
encoding as follows:
\begin{equation*}
e_t(x) := C^\xfer(x,z_t) + W_p(x,t),\quad x\in\bS, \quad t=1,2,\hdots,T,
\end{equation*}
where $W_p \in \Re^{d\times T}$ is referred to as the positional encoding matrix.  In
the original transformer paper~\citep{vaswani2017attention}, the rows of $W_p$ are defined
according to the sinusoidal-positional-encoding:
\begin{align*}
  W_p(2i-1,t) = \sin(\ell_{\text{max}}^{-\frac{2i}{d}} t),\quad 
  W_p(2i,t)  = \cos(\ell_{\text{max}}^{-\frac{2i}{d}} t), \quad
  i=1,2,\hdots,\frac{d}{2},\quad t=1,2,\hdots,T,
\end{align*}
where $\ell_{\text{max}} = 10,000$.  The factor
$\ell_{\text{max}}^{-\frac{2i}{d}}$ determines the frequency of
oscillations, ensuring a wide range of scales, as $i$ varies from
$1,2,\hdots,\frac{d}{2}$.  The positional encoding is the {\em only}
mechanism by which information about the position (time) $t$ is introduced
into the transformer. 
Other types of positional encoding are also possible (see Remark~\ref{rem:RPE}).

From an input-output perspective, a decoder-only transformer implements a causal nonlinear
transformation that transforms a $d\times T$ matrix at the
input into a $d\times T$ matrix at the output,
\begin{equation*}
\begin{bmatrix} e_1 & \hdots & e_t & \hdots & e_T \end{bmatrix}_{d\times T} \mapsto \begin{bmatrix} \sigma_1^{(L)} & \hdots & \sigma_t^{(L)} & \hdots & \sigma_T^{(L)} \end{bmatrix}_{d\times T}.
\end{equation*}
From the output $\sigma_t^{(L)}$, the conditional probability of the
next token is computed as follows: 
\begin{equation*}
p_{t}(z)=\sP(Z_{t+1}=z \mid Z_1=z_1,\hdots,Z_t=z_t) = \frac{e^{((\sigma_t^{(L)})^\tp
    C^\xfer)(z)}}{\sum_{z'\in\bO} e^{((\sigma_t^{(L)})^\tp C^\xfer)(z')}},\quad
t=1,2,\hdots,T,\quad z\in\bO.
\end{equation*}
The operation on the right-hand side is referred to as softmax.

Internally, a transformer is arranged in $L$ layers as follows:
\begin{align*}
&\text{(input)} \quad [z_1,z_2,\hdots,z_T]_{1\times T} \mapsto
                 [e_1,e_2,\hdots,e_T]_{d\times T} \qquad
                 \text{(embedding + positional-encoding)}\\[5pt]
&\text{(first layer)} \quad  [e_1,e_2,\hdots,e_T]_{d\times T} \mapsto [\sigma_1^{(1)},\sigma_2^{(1)},\hdots,\sigma_T^{(1)}]_{d\times T}\\[5pt]
  &\text{(intermediate layer)} \quad  [\sigma_1^{(\ell)},\sigma_2^{(\ell)},\hdots,\sigma_T^{(\ell)}]_{d\times T}  \mapsto  [\sigma_1^{(\ell+1)},\sigma_2^{(\ell+1)},\hdots,\sigma_T^{(\ell+1)}]_{d\times T},\quad \ell=1,2,\hdots,L-1\\[5pt]
  &\text{(output)} \quad
    [\sigma_1^{(L)},\sigma_2^{(L)},\hdots,\sigma_T^{(L)}]_{d\times
    T} \mapsto [p_1,p_2,\hdots,p_T]_{m\times T} \qquad \text{(un-embedding)}
\end{align*}

The correspondence between the dual filter (iterative) and the transformer is as follows:
\begin{enumerate}
\item Input:
  \begin{subequations}
 \begin{align*}
   \text{(dual filter)}\qquad \rho_t^{(0)}(x) &= \frac{C(x,z_t)}{\sum_{x'\in \bS} C(x',z_t)} ,\quad x\in\bS,
   \quad 1\leq t\leq T,
   \\
     \text{(transformer)}\qquad e_t(x) & =C^\xfer(x,z_t) +W_p(x,t) ,\quad x\in\bS,
   \quad 1\leq t\leq T.
 \end{align*}
\end{subequations}
\item Layers:
  \begin{subequations} \label{eq:layer_xfer_dual}
 \begin{align}
   \text{(dual filter)}\qquad
   [\rho_1^{(\ell)},\rho_2^{(\ell)},\hdots,\rho_T^{(\ell)}]_{d\times
   T}  \mapsto
   [\rho_1^{(\ell+1)},\rho_2^{(\ell+1)},\hdots,\rho_T^{(\ell+1)}]_{d\times
   T},\quad \ell=1,2,\hdots,L-1,
   \\
     \text{(transformer)}\qquad [\sigma_1^{(\ell)},\sigma_2^{(\ell)},\hdots,\sigma_T^{(\ell)}]_{d\times T}  \mapsto  [\sigma_1^{(\ell+1)},\sigma_2^{(\ell+1)},\hdots,\sigma_T^{(\ell+1)}]_{d\times T},\quad \ell=1,2,\hdots,L-1.
 \end{align}
 \end{subequations}
\item Output:
 \begin{subequations}\label{eq:p_T_formula}
 \begin{align*}
   \text{(dual filter)}\qquad p_t(z) &= \sum_{x\in\bS} \rho_t^{(L)}(x) C(x,z),\quad
                               t=1,2,\hdots,T, \quad
                                       z\in\bO, 
   \\
     \text{(transformer)}\qquad \ln p_t(z) &=  \sum_{x\in\bS} \sigma_t^{(L)}(x) C^\xfer(x,z) + \text{(constant)},\quad t=1,2,\hdots,T, \quad z\in\bO. 
 \end{align*}
\end{subequations}
\end{enumerate}

\subsection{Layer operation in a transformer}
Because each layer in a transformer is structurally identical, for the
purposes of describing the mapping, it suffices to consider a generic layer whose input
and output are denoted by,
\begin{equation*}
\sigma := [\sigma_1, \sigma_2,\hdots, \sigma_T]_{d\times T} \in  {\cal M}(\bS)^T,\quad \sigma^+:=
[\sigma^+_1, \sigma^+_2,\hdots, \sigma^+_T]_{d\times T} \in {\cal M}(\bS)^T,
\end{equation*}
respectively, and the layer mapping is denoted by
  \begin{equation*}
  {\cal N}^\xfer:{\cal M}(\bS)^T \mapsto {\cal M}(\bS)^T\quad
  \text{such that}\quad \sigma^+ = {\cal N}^\xfer(\sigma).
\end{equation*}
For the first layer, the input is given by the embedding plus
positional encoding.  For each subsequent layer, the input is defined
by the output of the preceding layer.

Operationally, a layer is comprised of multiple attention heads.  A single attention head is an input-output map of the form
  \begin{equation*}
  [\sigma_1, \sigma_2,\hdots, \sigma_T]_{d\times T} \mapsto
  [o_1^h,o_2^h,\hdots,o_T^h]_{d_V \times T}.
  \end{equation*}
  where the output vector $o_t^h \in\Re^{d_V}$ has
  dimension $d_V = \frac{d}{n_{\text{head}}}$, with $n_{\text{head}}$
  denoting the number of heads.
  The operations in a single attention head are defined by three matrices
  \begin{equation*}
  W_V^h \in \Re^{d_V \times d}, \;\;W_Q^h \in \Re^{d_K \times d}, \;\;W_K^h \in \Re^{d_K \times d},
  \end{equation*}
  where typically $d_K=d_V$. The matrices are constant (fixed) during
  the inference phase of the transformer operation.  

The mathematical operation defining the {\em causal self-attention} is as follows:
  \begin{equation}\label{eq:transf_output}
  o_t^h = W_V^h \left( \sum_{s=1}^t \alpha(s;t,h) \sigma_s\right),\quad t=1,2,\hdots, T,
  \end{equation}
  where $\{\alpha(s;t,h):s=1,2,\hdots,t\}$ is referred to as the
  attention weight vector.  For each fixed $(t,h)\in \{1,2,\hdots,T\}\times\{1,2,\hdots,
  {n_{\text{head}}}\}$, it is
  a probability vector.  That is,
\begin{equation*}
  \alpha(s;t,h)\geq 0,\quad s=1,2,\hdots,t,\quad \text{and}\quad 
  \sum_{s=1}^t \alpha(s;t,h)= 1.
  \end{equation*}

  The attention weights are computed as follows:  For each fixed $1\leq t\leq T$, 
  define \begin{align*}
  \text{(query)} \qquad q_t &= W_Q^h \sigma_t, \\
           \text{(key)} \qquad k_s &= W_K^h \sigma_s,\quad s=1,2,\hdots, t.
\end{align*}
Then
\begin{equation*}
\text{(attention weight)} \qquad \alpha(s;t,h) = \frac{\exp({\frac{q_t^\transpose k_s}{\sqrt{d_K}}})}{\sum_{\tau=1}^t \exp({\frac{q_t^\transpose k_\tau}{\sqrt{d_K}}})},\quad s=1,2,\hdots, t.
\end{equation*}

\begin{remark}[Attention as a nonlinear predictor]\label{rem:RPE}
Two remarks on the interpretation of attention as a nonlinear predictor
are as follows:
  \begin{enumerate}
 \item In the terminology adopted in this paper, the dependence of the
attention weights $\alpha_s^t$ upon the data (which varies between sample paths of
observations), makes attention a nonlinear predictor.  If the weights
were deterministic (same for all sample paths), attention will be an
example of a linear predictor (see Remark~\ref{remark:connection_to_transformer}).
\item 
  In the kernelized relative position encoding (RPE),
  \begin{equation*}
  \alpha_s^t = \text{softmax}(\frac{1}{\sqrt{d_K}}q_t^\transpose k_s +
  b(t-s)),\quad s=1,2,\hdots,t,
  \end{equation*}
    where the function $b(\cdot)$ may be pre-specified~\citep{press2021train} or learned~\citep{chi2022kerple}. 
    By itself, without the $q_t^\transpose k_s$ term, the resulting predictor is linear.
  \end{enumerate}
  \end{remark}

  \begin{remark}[Properties of attention
    operation]\label{rem:symmetry} Two salient properties of 
    attention are as follows:
    \begin{enumerate}
    \item Attention operation~\eqref{eq:transf_output} is the \underline{only} mechanism by
    which the input data $e_s$ at other time indices
    ($s=1,2,\hdots,t-1$) influences the output at time $t$.   
All other operations in the transformer are applied independently at
each fixed $t$.
\item 
      Attention operation~\eqref{eq:transf_output} exhibits a symmetry:  
  permuting the inputs $\{e_1,e_2,\hdots,e_{t-1}\}$ (while keeping $e_t$ fixed)  
  yields the identical output $y_t^h$ at time $t$.  
  \end{enumerate}
\end{remark}


From~\eqref{eq:transf_output}, the
layer output is obtained as 
\begin{equation*}
\sigma^+_t = W_O \; \text{concat}(o_t^1, \hdots, o_t^{n_{\text{head}}}),\quad t=1,2,\hdots,T,
\end{equation*}
where $W_O \in \Re^{d\times d}$.

A potential issue during the training phase of a transformer is that the output may `blow up'.  For this reason, several miscellaneous operations are implemented.  All of these operations are
carried out independently for each fixed $t$.  Because these
operations are not especially pertinent to establishing the
correspondence with the dual filter, they are described in
Appendix~\ref{sec:xfer}.

\begin{table}[t]
  \centering
  \begin{tabularx}{\textwidth}{l l l}
    \toprule
    \textbf{Concept} & \textbf{Probabilistic model} & \textbf{Surrogates in a transformer} \\
    \midrule
    State space & \( \bS = \{1,2,\ldots,d\} \) & Dimension $d$ of embedding vector \\
    Observation space & \( \bO = \{0,1,2,\ldots,m\} \) & Vocabulary \\
    Discrete-time & \( \mathbb{T} = \{0,1,2,\ldots,T\} \) & $T$ is context length \\
    Observation process & \( Z= [Z_1, Z_2, \ldots, Z_T] \;\text{is}\; \bO\text{-valued}\) & Token-ized prompt \\
    Latent process & \( X = [X_0, X_1, \ldots, X_T]\;\text{is}\; \bS\text{-valued}\) & Internal latent state in language\\
    Observation model & \( C(x,z)=\sP(Z_{t+1} = z \mid X_t = x),x
                        \in\bS,z\in \bO\) & Embedding matrix \( C^\xfer(x, z) \) \\
    \bottomrule
  \end{tabularx}
  \vspace*{-10pt}
  \caption{Probabilistic model and its surrogates in a transformer.}
  \label{tab:correspondence_model}
\end{table}

\begin{table}[h]
  \centering
  \begin{tabularx}{\textwidth}{l l l}
    \toprule
    \textbf{Concept} & \textbf{Inference architecture} & \textbf{Surrogates in a transformer} \\
    \midrule
    Layer operation & \\
    Layer mapping & $\clN^\dfltr:\clP(\bS)^T \mapsto \clP(\bS)^T $ & ${\cal N}^\xfer:{\cal M}(\bS)^T \mapsto {\cal M}(\bS)^T$ \\
    Layer $\ell$ input & \( \rho^{(\ell)} \) & \( \sigma^{(\ell)}\) \\
    Layer $\ell$ output & \( \rho^{(\ell+1)} =
                          \clN^\dfltr(\rho^{(\ell)}) \) & \(
                                                          \sigma^{(\ell+1)}
                                                          = {\cal
                                                          N}^\xfer(\sigma^{(\ell)})
                                                          \) \\
     \midrule
    Final layer output & \\
    Cond. measure & $\pi=[\pi_1,\pi_2,\hdots,\pi_T]
                          \;\text{is}\; \clP(\bS)\text{-valued}$ & $\sigma^L=[\sigma^L_1,\sigma^L_2,\hdots,\sigma^L_T]
                          \;\text{is}\; \clM(\bS)\text{-valued}$ \\
    Fixed-point & \( \pi = \clN^\dfltr(\pi) \) & unknown \\
    Prediction & \( p_t(z)= \sum_{x}\pi_t(x)C(x,z),\;z\in\bO\) & \( \ln p_t(z) = \sum_{x}\sigma_t^{(L)} (x)C^\xfer(x,z)+\text{(const)}\) \\
    \bottomrule
  \end{tabularx}
  \vspace*{-10pt}
  \caption{Inference architecture and its surrogates in a transformer.}
  \label{tab:correspondence_solution}
\end{table}

\subsection{Correspondence between the layer operations}
\label{sec:correspondence_layer_ops}

The layer operations in the dual filter and the transformer can be
viewed abstractly as maps between sequences of
measures. \Fig{fig:transformer_objective} illustrates this
comparison for the two maps,
\begin{align*}
  \text{(dual filter)}\qquad {\cal N}^\dfltr&:\clP(\bS)^T \mapsto \clP(\bS)^T,\\
  \text{(transformer)}\qquad
{\cal N}^\xfer&:{\cal M}(\bS)^T \mapsto {\cal M}(\bS)^T.
\end{align*}
There are two structural similarities between these maps:
\begin{enumerate}
\item For both of these maps, the input and also the output at time $t$ is a $d\times 1$ vector, for $t=1,2,\hdots,T$.
\item The final layer output at time $t$ is used to approximate the prediction $p_t$ at time $t$.
\end{enumerate}
For the dual filter, these properties follow from the probabilistic modeling which is closely inspired by the surrogates in the transformer (see Table~\ref{tab:correspondence_model}).  
Note that there is no surrogate in the transformer for the state transition matrix $A$ which is used here to model the latent process $X$ as a Markov chain.

\begin{figure}[t!]
  \centering
  \begin{tikzpicture}[
  transformer/.style={draw, thick, minimum width=5cm, minimum height=1.5cm},
  inputarrow/.style={thick},
  outputarrow/.style={thick}
  ]
\begin{scope}[shift={(-7, 0)}]
  \node[transformer] (transformer) at (0,0) {$\clN^\dfltr$};

  \foreach \i/\x in {1/-2, 2/-1, T/2} {
    \node at (\x, -2) (z\i) {$\rho_\i$};
    \draw[inputarrow] (z\i) -- (\x, -0.75);
  }

  \node at (0.5, -2.1) (dots) {$\ldots$};

  \foreach \i/\x in {1/-2, 2/-1, T/2} {
    \node at (\x, 2) (z+\i) {$\rho_\i^{+}$};
    \draw[outputarrow] (z+\i) -- (\x, 0.75);
  }

  \node at (0.5, 1.9) (dots) {$\ldots$};

  \node at (-4, -2) {(input)};
  \node at (-4, 2) {(output)};

  \node at (0, -3) {Dual filter layer};
\end{scope}

\begin{scope}
  \node[transformer] (transformer) at (0,0) {$\clN^\xfer$};

  \foreach \i/\x in {1/-2, 2/-1, T/2} {
    \node at (\x, -2) (z\i) {$\sigma_\i$};
    \draw[inputarrow] (z\i) -- (\x, -0.75);
  }

  \node at (0.5, -2.1) (dots) {$\ldots$};

  \foreach \i/\x in {1/-2, 2/-1, T/2} {
    \node at (\x, 2) (z+\i) {$\sigma_\i^{+}$};
    \draw[outputarrow] (z+\i) -- (\x, 0.75);
  }

  \node at (0.5, 1.9) (dots) {$\ldots$};

  \node at (0, -3) {Transformer self-attention layer};

\end{scope}

\end{tikzpicture}
  \vspace{-10pt}
  \caption{
Structural correspondence between the dual filter layer and a
self-attention layer in a decoder-only transformer. 
(left) The proposed layer map
$\clN^\dfltr:\rho \in \clP(\bS)^T \mapsto \rho^+ \in \clP(\bS)^T$.
(right) A transformer layer
$\clN^\xfer:\sigma \in \clM(\bS)^T \mapsto \sigma^+ \in \clM(\bS)^T$.
In both cases, the transformation maps a sequence of $d$-dimensional
vectors to another sequence of $d$-dimensional vectors of the same
length.
}
  \label{fig:transformer_objective}
  \vspace*{-20pt}
\end{figure}

It is important to distinguish between the $\bS$-valued latent
process $X$ and the Markov assumption imposed on it. The
$\clP(\bS)$-valued conditional measure $\pi$ and the nonlinear
predictor~\eqref{eq:nonlin_predictor_rep} are well-defined even
without assuming that $X$ is Markov (see
Remark~\ref{rem:non_Markovian} in
Appendix~\ref{proof:prop:existence_nonlin_predictor_rep}). The
correspondence in Table~\ref{tab:correspondence_solution} emphasizes
the layer mapping and its role in computing $\pi$ as a fixed point.

The Markov assumption is not a modeling claim about transformers. Its
sole purpose here is analytic: it enables an explicit expression for
the control $U$ in~\eqref{eq:nonlin_predictor_rep}, and hence an
explicit form of the layer map ${\cal N}^\dfltr$. In this sense, the
Markov model serves as a tractable baseline from which one can
investigate principled modifications of the transition structure
(see~\cite{alaa2019attentive,tang2021probabilistic}) that move the
architecture closer to modern attention-based models.

\subsection{Comparison with prior modeling work}
\label{rem:IPS}

At the core of the dual filter is the mapping $\clN^\dfltr:\clP(\bS)^T\to \clP(\bS)^T$ which
defines a transport on the space of probability measures.  This
is contrasted with the mathematical modeling of a transformer described
in~\citep{geshkovski2023mathematical,geshkovski2024measure,abella2024asymptotic,adu2024approximate,castin2025unified}. 
For this purpose, it is useful to first recall the input-output operation of
a single layer (repeated from~\eqref{eq:layer_xfer_dual} above):
\begin{align*}
   \text{(dual filter)}\qquad
  [\rho_1^{(\ell)},\rho_2^{(\ell)},\hdots,\rho_T^{(\ell)}]_{d\times T}
  \mapsto  [\rho_1^{(\ell+1)},\rho_2^{(\ell+1)},\hdots,\rho_T^{(\ell+1)}]_{d\times T},\quad \ell=1,2,\hdots,L-1,
   \\
     \text{(transformer)}\qquad
  [\sigma_1^{(\ell)},\sigma_2^{(\ell)},\hdots,\sigma_T^{(\ell)}]_{d\times
  T}
  \mapsto  [\sigma_1^{(\ell+1)},\sigma_2^{(\ell+1)},\hdots,\sigma_T^{(\ell+1)}]_{d\times T},\quad \ell=1,2,\hdots,L-1.
\end{align*}
The viewpoint espoused in these earlier studies is to regard 
$\sigma^\ell=\{\sigma_t^{(\ell)}:1\leq t\leq T\}$ as an ensemble.
These are suitably normalized such that each ensemble member
(particle) $\sigma_t^{(\ell)}$ is an element of the sphere 
$S^{d-1}:=\{s\in\Re^d:|s|=1\}$ for $t=1,2,\hdots,T$, and the
ensemble defines an empirical measure in $\clP(S^{d-1})$.  The
objective of the prior work is to study the nonlinear mapping
$\sigma^\ell \mapsto \sigma^{\ell+1}$ as a transport on
$\clP(S^{d-1})$.  For tractability, several simplifying
assumptions are necessary. These include assuming an exchangeability
property of the ensemble members (this property does not hold with
causal masking), ignoring the effect of positional encoding, and
taking a continuous approximation of the discrete layers in order to
derive a continuity equation for the transport. 

Summarizing, both our work and the prior studies provide mathematical
frameworks to model and understand a transformer as a transport on an
appropriate 
space of probability measures. Notably, the spaces are very different,
$\clP(\bS)^T$ for us and $\clP(S^{d-1})$ for the prior work. 
Moreover, the approaches differ in
focus.  The
prior work, because it explicitly models the attention mechanism as
it is implemented, is more faithful to the actual architecture of the
transformer. In contrast, our work models the functional goal of
the transformer---namely to predict the next token---rather than its
detailed implementation.

\section{Numerics with the dual filter}
\label{sec:numerics}

The numerical experiments are designed to validate the dual filter on
a known HMM, where the parameters are chosen to mimic certain models
of transformers.  Specifically, we adopt parameter
settings following the character-level configuration from Karpathy's nanoGPT
implementation~\citep{Karpathy2022}: $d = 384$, $m = 65$, and $T =
256$. 
The HMM parameters $(\mu, A, C)$ are configured as follows:
\begin{itemize}
  \item The prior $\mu$ is set to the uniform probability vector: $
\mu(x) = \frac{1}{d}$ for $x\in\bS$.
  \item The transition matrix $A$ is a convex
    combination of two components: a cyclic permutation matrix 
    $A^\text{(circ)}$ (encoding a deterministic transition $0\mapsto 1 \mapsto 2 \mapsto
    \hdots \mapsto (d-1) \mapsto 0$) and a randomly sampled 
    stochastic matrix $A^\text{(stoch)}$,
  \begin{equation*}
    A = \alpha A^\text{(circ)} + (1-\alpha) A^\text{(stoch)},\quad \text{where} \quad
    A^\text{(circ)}(x,x') = \begin{cases}
      1, & x'= x+1 \mod d \\
      0, & \text{o.w.}
    \end{cases},\quad x,x'\in\bS
  \end{equation*}
  where $\alpha\in (0,1)$ is a homotopy parameter. 
\item The emission matrix $C$ is randomly sampled. See Algorithm~\ref{alg:sparse_stochastic} for details.
  \item In both $A^{\text{(stoch)}}$ and $C$, each row is sampled
    independently: row-entries are drawn i.i.d. from a standard Normal
    distribution and then normalized using the softmax operation to
    form a valid probability vector. 
  \end{itemize}
Spectral properties of the transition matrix $A$ as a function of the
homotopy parameter $\alpha$ are illustrated
in~\Fig{fig:eigenvalue_results}.

\begin{figure}[t]
  \centering
  \includegraphics[width=\textwidth]{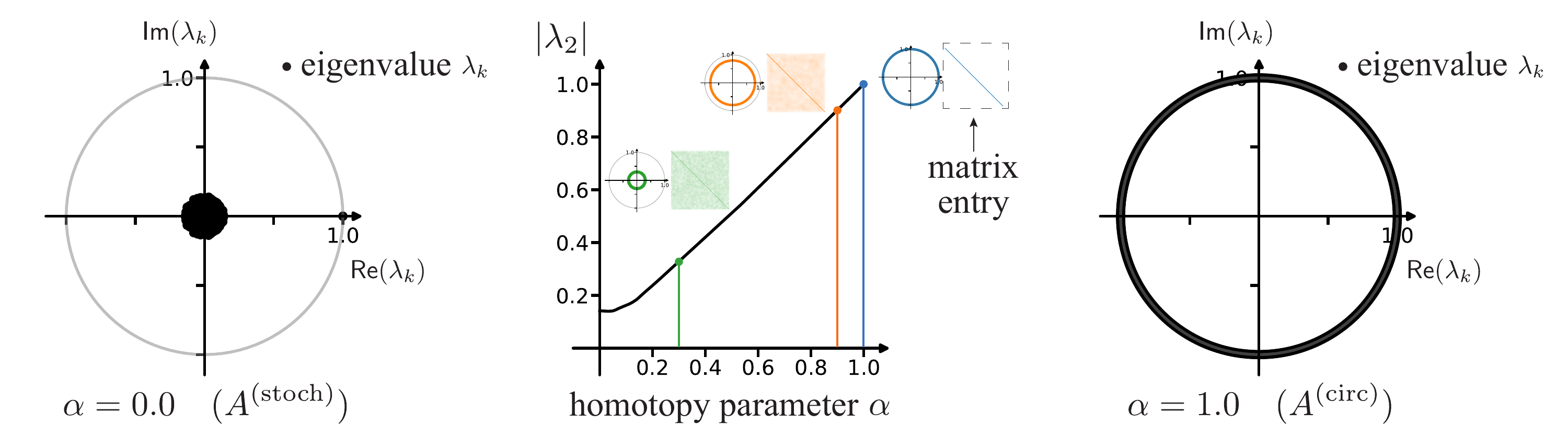}
  \caption{
Eigenvalues of the transition matrix $A$ as a function of the homotopy parameter $\alpha$.
(Middle): Plot of the second eigenvalue magnitude $|\lambda_2|$ as a
function of $\alpha$.
(Left): Eigenvalue spectrum for $A = A^{\text{(stoch)}}$ ($\alpha = 0$).
(Right): Eigenvalue spectrum for $A = A^{\text{(circ)}}$ ($\alpha = 1$).
In the middle plot, three representative values of $\alpha$---0.3 (green), 0.9 (orange), and 1.0 (blue)---are highlighted.
Insets show the corresponding eigenvalue spectra and matrix
structures; darker shades indicate higher matrix entry values.  
  }
  \label{fig:eigenvalue_results}
\end{figure}

\subsection{Dual filter algorithm}

An observation sequence $\{z_1,z_2,\hdots,z_T\}$ is a single sample path generated from the HMM defined by the parameters above with $\alpha=1$. 
The ground truth is obtained by simulating the nonlinear filter (forward algorithm) to compute the conditional probability $p_t(z)$ at each time step $t \in \mathbb{T}$ and $z\in\bO$.  
This is denoted by $p_t^{\text{(forward)}}(z)$ and compared with the conditional probability computed using the two dual filter algorithms:
\begin{enumerate}
\item \Fig{fig:one_shot_algorithm_results} depicts the comparison with
  the single-shot algorithm.
\item \Fig{fig:iterative_algorithm_results} depicts the comparison with
  the iterative algorithm. 
\end{enumerate}
The following error metric is used to help illustrate the convergence,
\begin{equation*}
  \varepsilon_t^{(\ell)}\coloneqq\varepsilon(p_t^{(\ell)};p_t^\text{(forward)}),\quad 
  \varepsilon(p';p)\coloneqq \max_{z\in\bO} \left\lvert
    p'(z)-p(z)\right\rvert,\quad \ell=1,2,\hdots,L.
\end{equation*}
For the single-shot algorithm, only a single iteration is necessary,
and therefore, $L=1$.  In what follows, all computations
are carried out with the single-shot algorithm.  

\begin{figure}[t]
  \centering
  \includegraphics[width=\textwidth]{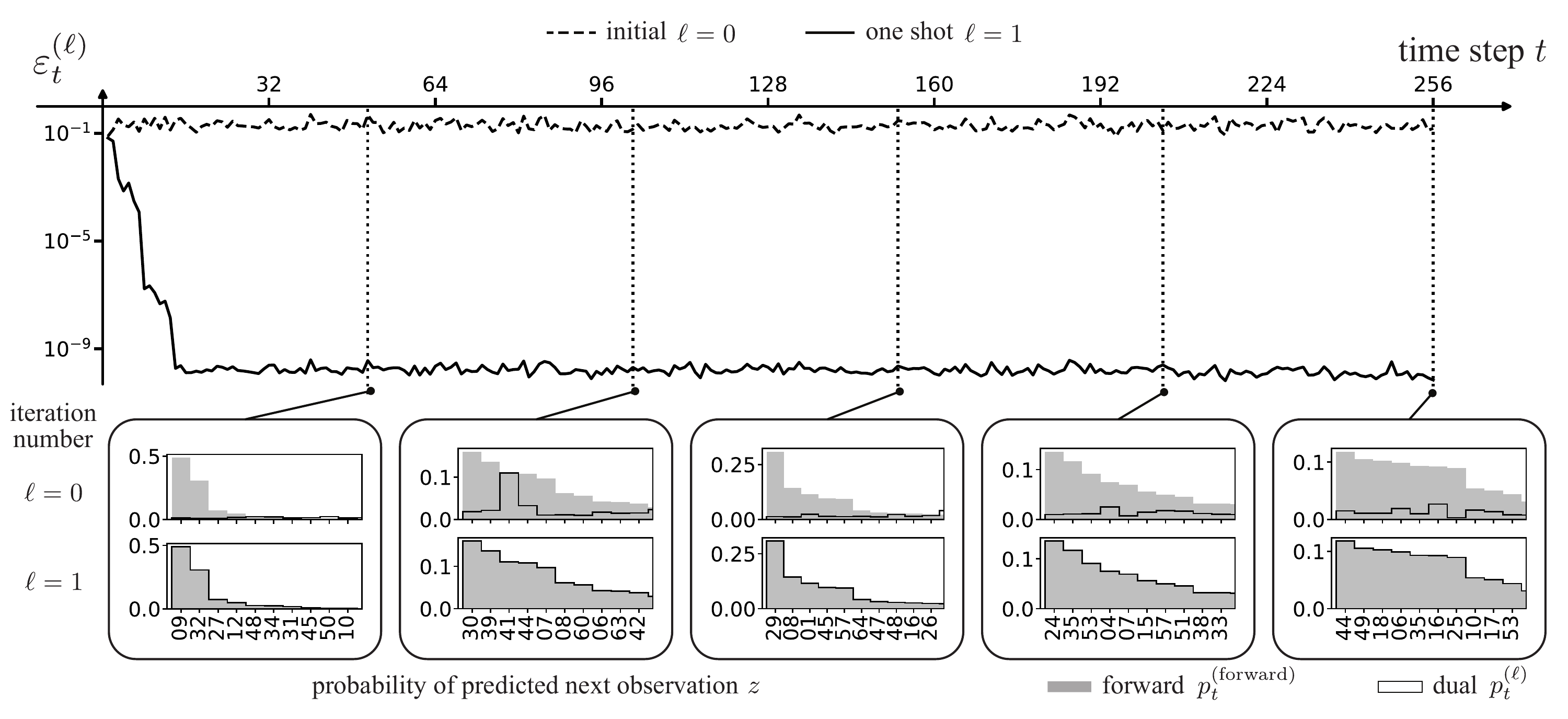}
  \vspace*{-25pt}
  \caption{
Comparison with the single-shot algorithm.
(Top): Time traces of the error $\{\varepsilon_t^{(\ell)} : 1 \leq t
  \leq T\}$ for $\ell = 0, 1$. The dashed line corresponds to the
initial error ($\ell = 0$), while the solid line shows the error after
one iteration ($\ell = 1$). 
(Bottom): Top-ten conditional probabilities $\{p_t^{(\ell)}(z_i) : i = 1,
  \ldots, 10\}$ for five representative time points. For each $t$,
  these are obtained by sorting the conditional probability vector and
  selecting the top ten.   The gray shading indicates the
ground truth (from the nonlinear filter), and the solid line shows the
result from the dual filter.   
  }
  \label{fig:one_shot_algorithm_results}
  \vspace*{-20pt}
\end{figure}

\begin{figure}[t]
  \centering
  \includegraphics[width=\textwidth]{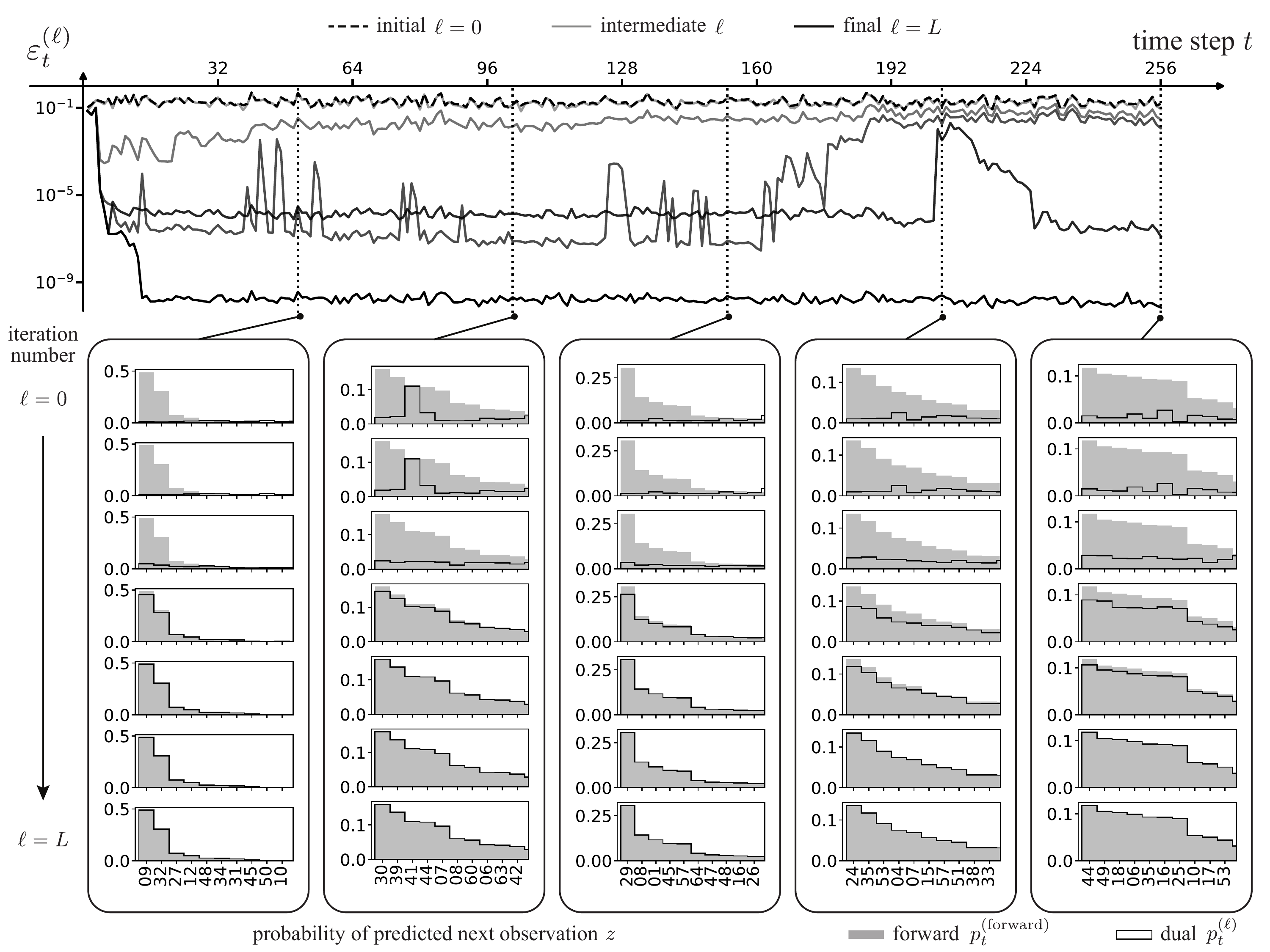}
  \vspace*{-25pt}
  \caption{
    Comparison with the iterative algorithm: (Top): Time traces of the
    error $\{\varepsilon_t^{(\ell)} : 1 \leq t \leq T\}$ for  
    $\ell=1,2,\hdots,L$ (with $L=6$).  The dashed line corresponds to the
initial error ($\ell = 0$), while the solid lines show the error after 
    subsequent iterations, with darker shades used as $\ell$
    increases. 
    (Bottom): Top-ten conditional probabilities $\{p_t^{(\ell)}(z_i) : i = 1,
  \ldots, 10\}$ for five representative time points. For each $t$,
  these are obtained by sorting the conditional probability vector and
  selecting the top ten.  The gray shading indicates the
ground truth (from the nonlinear filter), and the solid line shows the
result from the dual filter.  
  }
  \label{fig:iterative_algorithm_results}
  \vspace*{-25pt}
\end{figure}

\subsection{Optimal control input}

\begin{figure}[t]
  \centering
  \includegraphics[width=\textwidth]{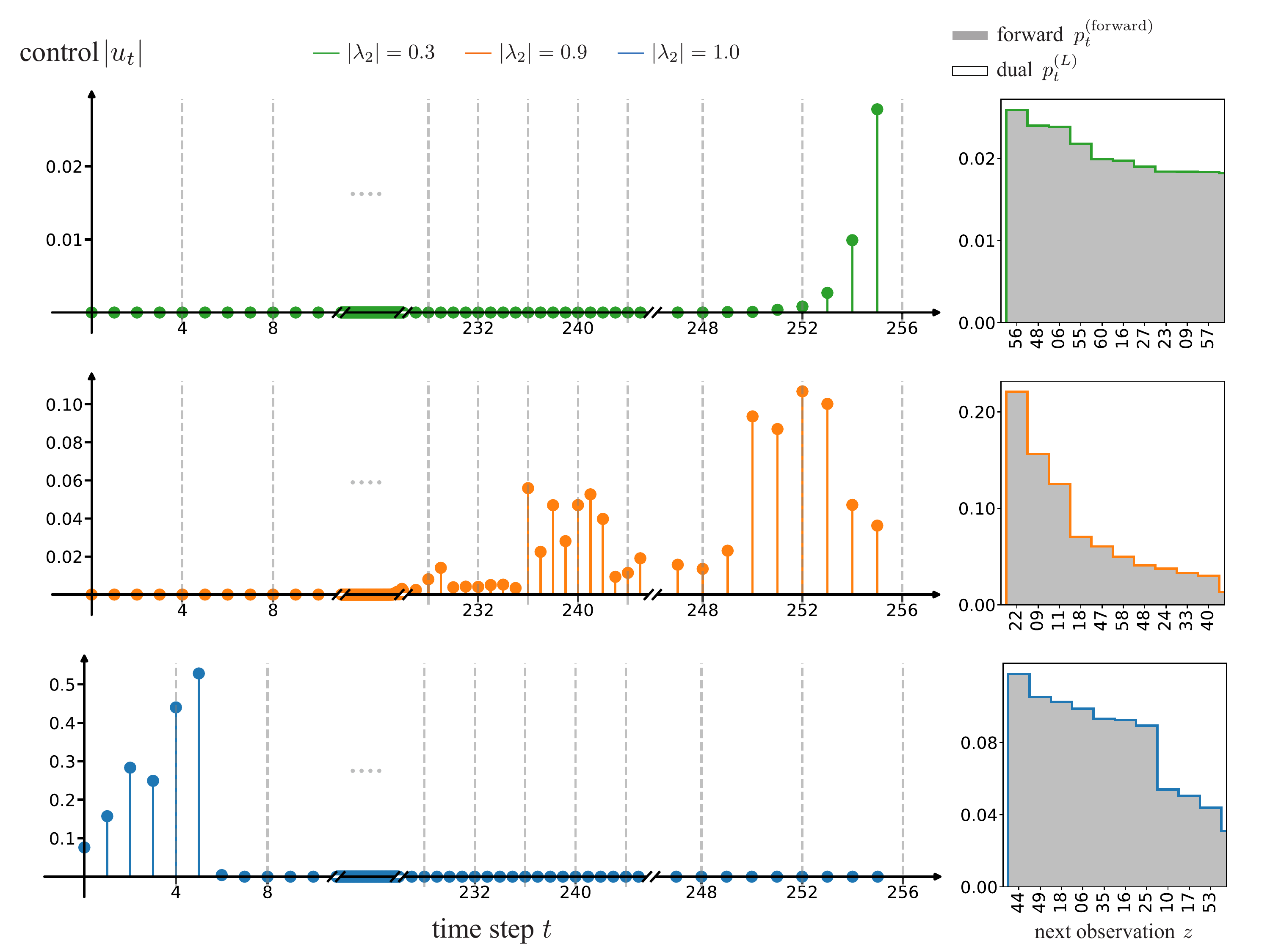}
  \vspace*{-25pt}
  \caption{
    Optimal control inputs as a function of $|\lambda_2|$. 
    (Top): $|\lambda_2| = 0.3$ (green),
    (Middle): $|\lambda_2| = 0.9$ (orange),
    (Bottom): $|\lambda_2| = 1.0$ (blue).
    These three cases correspond to different values of the homotopy parameter $\alpha$. 
    Note that the x-axis (time $t$) is plotted on a non-linear scale.
    The right-hand panels show the top ten conditional probabilities
    $\{p_T^{(\ell)}(z_i): i = 1, \ldots, 10\}$ at the terminal time $t =
    T$. 
    These are obtained by sorting the conditional probability vector and selecting the top ten. 
  }
  \label{fig:control_results}
  \vspace*{-25pt}
\end{figure}

\begin{figure}[t]
  \centering
  \includegraphics[width=\textwidth]{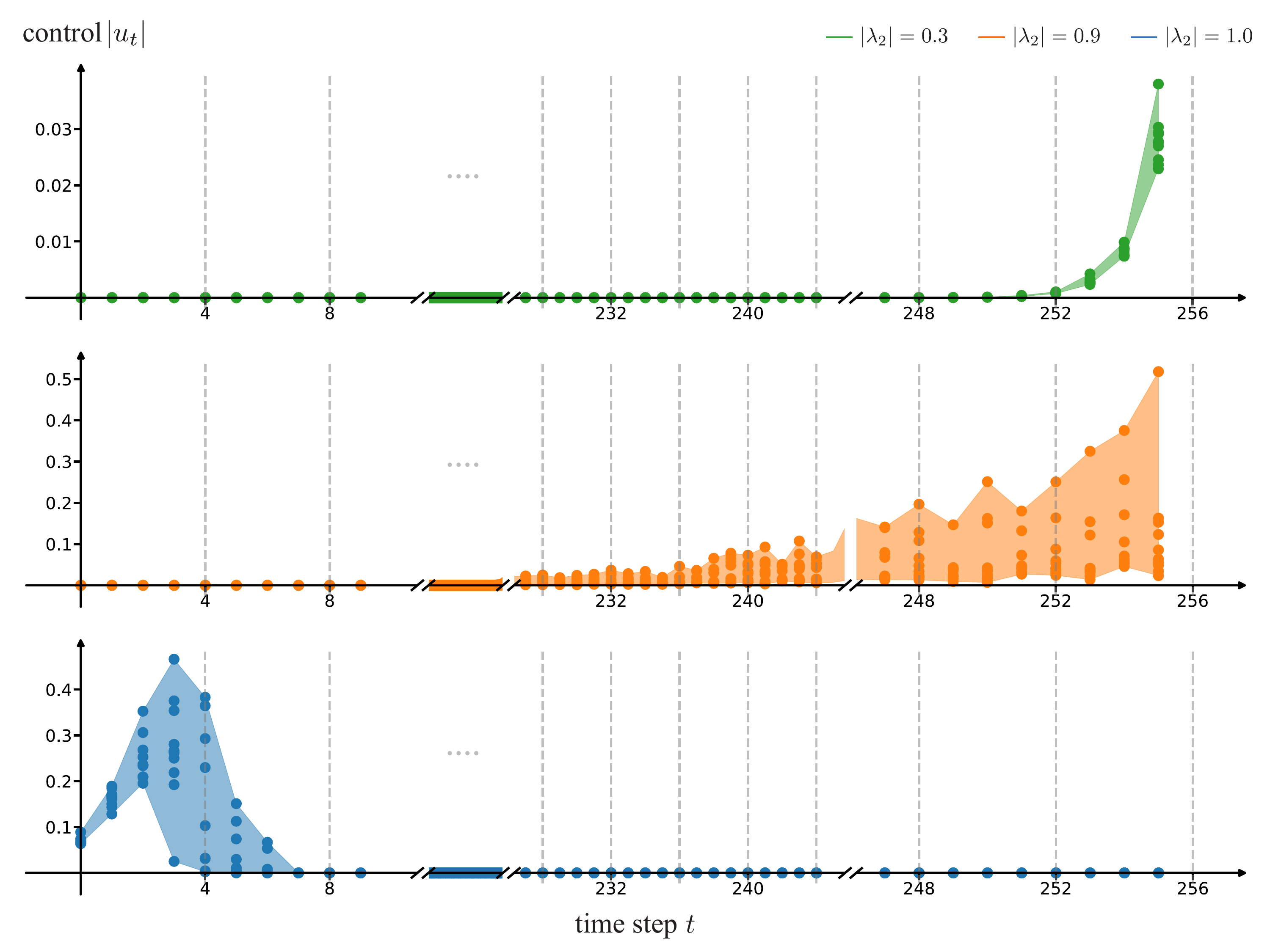}
  \vspace*{-25pt}
  \caption{
    Optimal control inputs across 10 sample paths: 
    (Top): $|\lambda_2| = 0.3$ (green),
    (Middle): $|\lambda_2| = 0.9$ (orange),
    (Bottom): $|\lambda_2| = 1.0$ (blue).  
    The shaded region shows the range of control values (minimum to maximum) across the 10 realizations. 
    Note that the scale for the $x$-axis (denoting time $t$) is not linear.  
  }
  \label{fig:controls_results}
  \vspace*{-25pt}
\end{figure}

Representation~\eqref{eq:nonlin_predictor_rep} is helpful for visualizing and understanding the short- and long-term correlations.
To illustrate this, \Fig{fig:control_results} shows the optimal control input for three different values of the homotopy parameter $\alpha$:
\begin{enumerate}
  \item For a small value of $\alpha$ such that $|\lambda_2| = 0.3$, the Markovian dynamics mix rapidly. 
  As a result, only the most recent observations contribute meaningfully to prediction at the terminal time $T$. 
  The control is nonzero only over the most recent few time steps.

  \item For $\alpha = 1$, the dynamics are deterministic. 
  In this case, the control is nonzero during the initial phase ($t \leq 5$) when the state is uncertain. 
  At $t = 6$, after the first six observations, the state becomes known (up to numerical precision), and the control input drops to zero (again, up to numerical precision).
  
  \item For an intermediate value of $\alpha$ such that $|\lambda_2| = 0.9$, the control input remains nonzero over an extended time horizon, reflecting the presence of longer-range correlations.
  Among the three cases, we believe this setting is most representative of the behavior observed in transformer-based predictions. 
\end{enumerate}

While \Fig{fig:control_results} shows the control input for a single sample path of the output, \Fig{fig:controls_results} depicts control inputs computed for ten independent sample paths. 
For each fixed value of $\alpha$, ten sample paths of the output sequence are generated. 
For each sample path, the single-shot dual filter is simulated to compute the corresponding control input.

For $|\lambda_2|=0.3$, the control inputs are nearly identical across all ten sample paths, suggesting that the predictor behaves approximately linearly.
In contrast, for the other two cases, where the spectral gap is smaller, the control inputs exhibit greater variability.  
Nevertheless, the qualitative structure of the control---such as the time window over which it is nonzero---is consistent across sample paths. This consistency is because of the relatively simple structure of the transition matrix.  
For a more complicated multi-scale-type choice of the transition matrix $A$, this is expected to change.   

\subsection{Comparison with transformer}
\label{sec:comparison_transformer}

While the scope of numerics is limited to the
validation of the dual filter algorithm in model-based settings, a
small set of numerical experiments were also carried out using
nanoGPT~\citep{karpathy2024nanogpt}.
For these experiments, nanoGPT is first trained using the data generated
from an HMM.  Once trained, the model is evaluated for the next-token
prediction task.  A summary of these numerical evaluations is as
follows: 
\begin{enumerate}
\item For model choices where the prediction $p_T$ is supported on a
  relatively small subset of the vocabulary $\bO$, the transformer
  exhibits good performance comparable to the dual filter.
\item For other choices, including for the model discussed as part of
  \Fig{fig:iterative_algorithm_results}, the nanoGPT performance is
  comparatively poor.
\end{enumerate}
The details are included in Appendix~\ref{sec:app_transformer}.  

A direct empirical comparison of the dual filter (or a learning
variant) against transformer models on general ML tasks (like language
modeling) requires the development of a learning framework that is
outside the scope of the current foundational paper.   Some of these
learning-related tasks are planned as part of the future work and
briefly described next.

\section{Discussion and directions for future work}
\label{sec:conc}

In this paper, we developed theory and algorithms for a nonlinear
predictor based on the representation
in~\eqref{eq:nonlin_predictor_rep}.  We identify two uses of this
representation, which motivate two main directions for future research.

\newP{1. Analysis} A key outcome of this paper is the fixed-point
representation of the conditional measure (posterior) $\pi$.  The
representation invites work on contraction analysis of the mapping 
$\clN^\dfltr$ as well as for the analysis of filter stability. 


\newP{2. Learning} The objective here is to develop algorithms to
learn the layer transformation $\clN^\dfltr$.  In its basic form, this
may be accomplished through learning a parametrized model for
$(\mu,A,C)$.      
There are lessons to be drawn from the reinforcement learning (RL) literature in the setting of partially observed Markov decision processes (POMDPs).  
In such problems, a key challenge is that the model $(\mu,A,C)$ of the hidden Markov process is not known, making it difficult to represent and compute the nonlinear filter (or belief state) directly.
To solve this problem, two complementary approaches are commonly considered:
\begin{enumerate}
  \item A generative model approach, where a parametrized model (for $(\mu,A,C)$) is learned and subsequently used to construct the filter (e.g., using the forward algorithm described in \Sec{sec:intro}).
  \item A history-based approach, where representations are learned directly based on the history---past observations and past actions in the settings of POMDP. 
\end{enumerate}

Since the 1990s, empirical evidence suggests that the history-based approach is preferable in RL settings~\citep{mccallum1996reinforcement}.  
This has led to the development of several frameworks, namely, predictive state representation~\citep{littman2001predictive,rudary2003nonlinear}, which is itself based on the observable operator models (OOM) in un-controlled settings~\citep{jaeger2000observable}, and more recently the approximate information state (AIS)~\citep{subramanian2022approximate,kao2022common}.
It will be of interest to relate~\eqref{eq:nonlin_predictor_rep} to
these history-based approaches. 
The theoretical framework presented in this paper
provides a baseline for what an inference architecture can achieve when
the data is generated from an HMM. Of course, it will be interesting
to extend the framework to a more general class of non-Markovian
correlations found in the real-world data. A more fruitful avenue may
be to understand the significance of differences, specifically,
\begin{itemize}
\item the use of positional encoding to encode the temporal structure
  in a transformer; and
\item the advantages such encodings provide for learning and
  prediction tasks. 
\end{itemize}
A related goal is the use of the representation~\eqref{eq:nonlin_predictor_rep} for POMDPs. Empirical studies that use transformer architectures for control have appeared in~\cite{chen2021decision,zhang2024decision,guo2024controlagent,ziemann2024state}.

For both analysis and learning, the recent paper of~\cite{yuksel2025another} is foundational.  
The paper describes conditions---related to filter stability~\citep{van2009observability,chigansky2009intrinsic}---under which purely data-driven approximations are meaningful for control.
It is a goal of the ongoing research to better understand the role that these conditions play for the dual filter.

\pagebreak
  
\acks{%
  This work is supported in part by the AFOSR award FA9550-23-1-0060 and the NSF award 2336137.  
  Prashant Mehta acknowledges several useful conversations on the topics of transformers and large language models with Dr.~Alberto Speranzon. 
  The original research reported in this paper builds upon prior work on duality theory carried out in collaboration with Dr. Jin-Won Kim. 
  These past collaborations are gratefully acknowledged.
}

\appendix

\section{Existence proofs}
\label{appdx:existence}

The existence theorems rely on the following proposition from
linear algebra.

\begin{proposition}\label{prop:linear_algebra}
  Let $s:\bO\to \Re$. Then there exists unique $(s,\tilde{s}) \in \Re
  \times \Re^m$ such that the following decomposition holds:
  \begin{equation*}
  s(z) = \bar{s} +\tilde{s}^\tp e(z) ,\quad z\in\bO.
\end{equation*}
Explicitly, 
\begin{equation*}
\bar{s}:= \frac{1}{m+1} \sum_{z\in\bO} s(z),\quad \text{and} \quad 
\tilde{s} (i) = (s(i) - \bar{s}),\quad
i=1,2,\hdots,m.
\end{equation*}
\end{proposition}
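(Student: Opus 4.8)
The plan is to treat the claimed identity as a nonsingular linear system in the unknowns $(\bar s,\tilde s)\in\Re\times\Re^m$ and solve it by evaluating at the $m+1$ points of $\bO$. Concretely, consider the linear map $\Phi:\Re\times\Re^m\to\Re^{\bO}$ (the space of real functions on $\bO$) given by $\Phi(\bar s,\tilde s)(z)=\bar s+\tilde s^\tp e(z)$. Since $\dim(\Re\times\Re^m)=m+1=|\bO|=\dim\Re^{\bO}$, it suffices to prove that $\Phi$ is injective; existence of the decomposition then follows automatically, and the explicit formulas drop out of the same back-substitution. Injectivity is equivalent to linear independence of the $m+1$ functions $z\mapsto 1$ and $z\mapsto e(z)(i)$, $i=1,\dots,m$, on $\bO$.

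First I would evaluate the identity at $z=i$ for each $i\in\{1,\dots,m\}$. Because $e(i)$ is the $i$-th standard basis vector of $\Re^m$, this gives $s(i)=\bar s+\tilde s(i)$, hence necessarily $\tilde s(i)=s(i)-\bar s$. Next I would evaluate at $z=0$: since $e(0)=-\sum_{i=1}^m e(i)=(-1,\dots,-1)^\tp$, this yields $s(0)=\bar s-\sum_{i=1}^m\tilde s(i)$. Substituting the expression just obtained for $\tilde s(i)$ turns this into $s(0)=\bar s-\sum_{i=1}^m\bigl(s(i)-\bar s\bigr)=(m+1)\bar s-\sum_{i=1}^m s(i)$, so that $(m+1)\bar s=\sum_{z\in\bO}s(z)$, i.e. $\bar s=\frac{1}{m+1}\sum_{z\in\bO}s(z)$. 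This chain of implications shows that any valid decomposition must have exactly the stated coefficients, which is precisely uniqueness (equivalently, injectivity of $\Phi$).

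For existence I would verify directly that the candidate $(\bar s,\tilde s)$ defined by the displayed formulas reproduces $s$. For $i\in\{1,\dots,m\}$ one has $\bar s+\tilde s^\tp e(i)=\bar s+(s(i)-\bar s)=s(i)$, and for $z=0$ one has $\bar s+\tilde s^\tp e(0)=\bar s-\sum_{i=1}^m(s(i)-\bar s)=(m+1)\bar s-\sum_{i=1}^m s(i)=\sum_{z\in\bO}s(z)-\sum_{i=1}^m s(i)=s(0)$. Since $\bO=\{0,1,\dots,m\}$, these two cases exhaust all $z\in\bO$, which completes the argument.

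I do not anticipate a genuine obstacle: the statement is elementary finite-dimensional linear algebra, and the only point requiring care is the bookkeeping that distinguishes the index $0$ from the indices $1,\dots,m$, together with the defining relation $e(0)=-\sum_{i=1}^m e(i)$ — this is exactly what makes the associated $(m+1)\times(m+1)$ coefficient matrix invertible and hence gives well-posedness of the representation in~\eqref{eq:nonlin_predictor_rep}.
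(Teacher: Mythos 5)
Your proposal is correct and follows essentially the same route as the paper: existence is verified by plugging the explicit formulas into the two cases $z\in\{1,\dots,m\}$ and $z=0$ (using $e(0)=-\sum_{i=1}^m e(i)$), and uniqueness comes from the same elementary evaluations, which you phrase as forcing the coefficients while the paper phrases it as triviality of the homogeneous solution — an equivalent computation. The dimension-count/injectivity framing is a cosmetic repackaging rather than a genuinely different argument.
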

\begin{proof}
  We have 
  \begin{equation*}
\left( \bar{s} +\tilde{s}^\tp e(z) \right)   = \begin{cases}
\left( \bar{s} + ( s(z) -\bar{s} )\right) = s(z) , &  z=1,2,\hdots,m,\\[5pt]
   \left( \bar{s} - \tilde{s}^\tp \ones \right) = s(0),
     & z=0,
     \end{cases}
 \end{equation*}
 where the last step follows because
 \begin{equation*}
   \tilde{s}^\tp \ones = \sum_{i=1}^m \tilde{s}(i)  = \sum_{i=1}^m
     \left(s(i) - \bar{s} \right) = - m \bar{s}  + \sum_{i=1}^m s(i),\end{equation*}
and therefore,  \begin{equation*}
\bar{s} - \tilde{s}^\tp \ones = \bar{s} - \left(- m \bar{s} +
  \sum_{i=1}^m s(i) \right) = (m+1) \bar{s} - \sum_{i=1}^m s(i) = s(0).
\end{equation*}
The decomposition is unique because $\bar{s} + \tilde{s}^\tp e(z)
\equiv 0$ implies
\begin{align*}
  \bar{s} + \tilde{s}^\tp e(i) = \bar{s} + \tilde{s}(i) &= 0, \quad i=1,2,\hdots,m, \\
  \bar{s} + \tilde{s}^\tp e(0) = \bar{s} - \sum_{i=1}^m \tilde{s}(i) &= 0.
\end{align*}
Summing the first of these equations over $i$,
\begin{equation*}
m\bar{s} + \sum_{i=1}^m \tilde{s}(i) = (m+1) \bar{s} =0,
\end{equation*}
which then also implies $\tilde{s}(i) = -\bar{s} =0$ for
$i=1,2,\hdots,m$.         
\end{proof}

\begin{example}[m=1] 
  Let $s:\{0,1\} \to \Re$.  
  Denote $s^+ := s(1)$ and $s^{-} := s(0)$. 
  Then
  \begin{equation*}
    s(z) = \bar{s} +  \tilde{s} \, e(z),\quad z\in\{0,1\},
  \end{equation*}
  where $\bar{s}:=\half (s^+ + s^-)$ and $\tilde{s}:= \half (s^+ - s^-)$ (recall $e(1)=1$ and $e(0)=-1$).
\end{example}

\begin{remark}
  There is nothing special about the choice of $\{e(1),e(2),\hdots,e(m)\}$, chosen in this paper to be the canonical basis.
  One could instead choose these vectors to be any basis of $\Re^m$, and set $e(0)=-e(1)-e(2)-\hdots-e(m)$ as before.
  In~\cite{fukasawa2025backward}, such a structure is referred to as a lattice. 
  See~\citep{cohen2010general} for a general theory of BS$\Delta$E.
\end{remark}

\subsection{Well-posedness of
  representation~\eqref{eq:nonlin_predictor_rep} (Proof
  of~\Prop{prop:existence_nonlin_predictor_rep})}
\label{proof:prop:existence_nonlin_predictor_rep}

\begin{proof}[of \Prop{prop:existence_nonlin_predictor_rep}]
We begin by proving a result where the representation is shown to hold
for {\em any} $S_T \in \clZ_T$.  From Doob-Dynkin lemma, there is a
deterministic function $s:\bO^T \to \Re$ such that
\begin{equation*}
S_T = s(Z_1,\hdots,Z_{T-1},Z_T).
\end{equation*}
Set
\begin{equation*}
S(z) := s(Z_1,\hdots,Z_{T-1},z),\quad z\in\bO.
\end{equation*}
From~\Prop{prop:linear_algebra},
\begin{equation*}
S_T = S(Z_T) = S_{T-1} - (U_{T-1})^\tp e(Z_T),
\end{equation*}
where
\begin{equation*}
  S_{T-1} = \frac{1}{m+1} \sum_{z\in \bO} S(z),\quad 
 U_{T-1}(i):= - (S(i) - S_{T-1}),\quad
i=1,2,\hdots,m.
\end{equation*}
Uniqueness is from the uniqueness of the decomposition. The proof is completed through induction by
repeating the procedure for $S_{T-1}\in\clZ_{T-1}$.

A direct application of the above result to justify the
representation~\eqref{eq:nonlin_predictor_rep} for $\pi_T(F)$ is
complicated by a subtle issue: The conditional expectation $\pi_T(F)$
is meaningfully defined only for sample paths $Z=z$ with 
$\sP([Z=z])>0$.  Note here that because $|\bO|=m+1$ and $T$ are
both finite, there are only finitely many---specifically $(m+1)^T$---sample paths.
Thus, $\sP([Z=z])$ is a well-defined object for each sample path, 
although it may be zero depending on the HMM parameters $(\mu,A,C)$.

There are two ways to address this issue: 
\begin{enumerate}
\item Assume $\underline{c}:=\min\{C(x,z)
  \,:\,x\in\bS,\,z\in\bO\} >0$.  Then
  $\sP([Z=z])\geq\underline{c}^T > 0$ for all $z\in\bO^T$,
and the existence of a unique $U$ follows directly from the earlier result.
\item Adopt the convention $\frac{0}{0}=0$ to define (or extend) the conditional measure for sample
paths $Z=z$ with $\sP([Z=z])=0$. Then again, a particular
selection of $U$ follows from the above result.
\end{enumerate}
In the second case, however, there
may be other choices of $U$ such that the
representation~\eqref{eq:nonlin_predictor_rep} holds: Any two choices
will yield a representation that coincides on the set $\{z\in \bO^{T}:
\sP(Z=z)>0\}$ but may differ on the set $\{z\in \bO^{T}:
\sP(Z=z)=0\}$.
\end{proof}


\begin{remark}[Extension to non-Markovian settings]\label{rem:non_Markovian}
  To prove~\Prop{prop:existence_nonlin_predictor_rep}, we did not make
  use of 
  the Markov property of the latent process $X$.  While the focus of
  the present paper is on the HMM, the
  representation~\eqref{eq:nonlin_predictor_rep} holds also in more
  general non-Markovian settings.  The Markovian assumption is
  introduced solely to obtain an explicit formula for $U$.
\end{remark}

\begin{example}[m=1] 
Set $
S_T^{+} = s(Z_1,\hdots,Z_{T-1},1)$ and $S_T^{-} =
s(Z_1,\hdots,Z_{T-1},0)$. 
Then $S_T^{+}, S_T^{-}\in \clZ_{T-1}$ and
\begin{equation*}
S_T = S_{T-1} - U_{T-1} e(Z_T),
\end{equation*}
where $
S_{T-1}=\half\left( S_T^{+} + S_{T}^-\right)$ and $U_{T-1}= -\half\left( S_T^{+} - S_{T}^-\right)$. 
\end{example}

\subsection{Well-posedness of
  BS$\Delta$E~\eqref{eq:dual_BSDE} (Proof of~\Prop{prop:existence_BSDE})}
\label{proof:prop:existence_BSDE}

\begin{proof}[of \Prop{prop:existence_BSDE}]
 Fix $x\in \bS$.  Then at time $t=T$, the BS$\Delta$E is
  \begin{equation*}
  (AF)(x) = Y_{T-1}(x) - c^\tp(x) (U_{T-1} + V_{T-1}(x)) + (V_{T-1}(x))^\tp
  e(Z_T),\quad x\in \bS.
  \end{equation*}
  Because $A$ is deterministic $(AF)(x)\in\clZ_T$ and therefore there is a
deterministic function $s: \bO^T\times \bS \to \Re$ such that
\begin{equation*}
  (AF)(x) = s(Z_1,\hdots,Z_{T-1},Z_T;x),\quad x\in\bS.
\end{equation*}
Set
\begin{equation*}
S(z;x) := s(Z_1,\hdots,Z_{T-1},z \;\; ; x),\quad z\in\bO,\quad x\in\bS.
\end{equation*}
From~\Prop{prop:linear_algebra}, there exists unique
$\bar{S}_{T-1}(x), \tilde{S}_{T-1}(x) \in \clZ_{T-1}$ such that
\begin{equation*}
(AF)(x) = S(Z_T;x) = \bar{S}_{T-1}(x) + (\tilde{S}_{T-1}(x))^\tp
e(Z_T),\quad x\in \bS.
\end{equation*}
Set
\begin{align*}
  V_{T-1}(x) &=  \tilde{S}_{T-1}(x),\quad x\in\bS,\\
  Y_{T-1}(x) &=\bar{S}_{T-1}(x)  + c^\tp(x) (U_{T-1} + V_{T-1}(x)) ,\quad x\in\bS.
\end{align*}
Uniqueness follows from the uniqueness of decomposition.  Because
$Y_{T-1}\in\clZ_{T-1}$, the proof is completed through induction.  
\end{proof}

\begin{example}[m=1]
For each $x\in\bS$, set $
S_T^{+}(x) = s(Z_1,\hdots,Z_{T-1},1;x)$ and $S_T^{-}(x) =
s(Z_1,\hdots,Z_{T-1},0;x)$. 
Then $S_T^{+}(x), S_T^{-}(x)\in \clZ_{T-1}$ and
\begin{align*}
  V_{T-1}(x) &=  \half\left(S_T^{+} (x)- S_{T}^-(x) \right),\quad x\in\bS,\\
  Y_{T-1}(x) &= \half\left(S_T^{+}(x) + S_{T}^-(x)\right) + c(x)
               (U_{T-1} + V_{T-1}(x)),\quad x\in\bS.
\end{align*}
\end{example}

\section{Proof of \Prop{prop:B_W_Gamma_R}}
\label{appdx:formula_for_R}

Based on the graphical model for the HMM (see \Fig{fig:hmm}), $X_{t+1} \perp\!\!\!\!\perp Z_{t+1} \mid \clF_t$.
Therefore,
\begin{equation*}
\E (B_{t+1}(f) \mid \clF_{t}
\vee \clZ_{t+1})  = \E (f(X_{t+1}) \mid \clF_{t}
\vee \clZ_{t+1}) - (Af)(X_t) = \E (f(X_{t+1}) \mid \clF_{t}) - (Af)(X_t) = 0,
\end{equation*}
and
\begin{align*}
\E(|B_{t+1}(f)|^2 \mid \clF_{t}
  \vee \clZ_{t+1}) & = \E(|f(X_{t+1}) - (Af)(X_t)|^2 \mid \clF_{t}) \\&= \E(|f(X_{t+1}) |^2 \mid \clF_{t})  - |(Af)(X_t)|^2  =
(\Gamma f)(X_t),
\end{align*}
for $t=0,1,2,\hdots, T-1$.  This completes the proof for the two formulae related to
the process $B$.  

Express the vector-valued functions $e:\bO\to\Re^m$ as
\begin{equation*}
e(z) = \begin{bmatrix} g_1(z) \\ g_2(z) \\ \vdots \\
  g_m(z) \end{bmatrix}_{m\times 1},\quad \text{where}\quad g_i(z)
= \begin{cases} -1, & z=0 \\ 1, & z=i \\ 0, & \text{o.w.}\end{cases},
\quad i=1,2,\hdots,m,
\end{equation*}
where note
\begin{equation*}
(C g_i)(x) = c(x),\quad x\in\bS,\quad i=1,2,\hdots, m.
\end{equation*}
Therefore,
\begin{equation*}
W_{t+1}(i) = g_i(Z_{t+1}) - (Cg_i)(X_t),\quad t=0,1,2,\hdots,T-1,\quad i=1,2,\hdots, m,
\end{equation*}
and because $\sP(Z_{t+1}=z \mid X_t=x) = C(x,z)$ for $x\in\bS$ and $z\in\bO$,
\begin{equation*}
\E (W_{t+1}(i) \mid \clF_t) = \sum_{z\in \bO} C(X_t,z) g_i(z) -
(Cg_i)(X_t) = 0, \quad t=0,1,2,\hdots,T-1,\quad i=1,2,\hdots, m.
\end{equation*}
This shows that $W$ is a martingale increment process.  It remains to
derive the formula for the $m\times m$ matrix $R$.  In order to
determine the $(i,j)$-entry of the matrix $R$, we consider
$\E(W_{t+1}(i) W_{t+1}(j) \mid \clF_t)$.  From the definition of $W$, 
\begin{align*}
  g_i(Z_{t+1}) &= (Cg_i)(X_t) + W_{t+1}(i) ,\quad
  t=0,1,2,\hdots,T-1,\quad i=1,2,\hdots, m, \\
  g_j(Z_{t+1}) &= (Cg_j)(X_t) + W_{t+1}(j) ,\quad t=0,1,2,\hdots,T-1,\quad j=1,2,\hdots, m.
\end{align*}
Then because $W$ is a martingale increment process,
\begin{align*}
\E (g_i(Z_{t+1})  g_j(Z_{t+1}) \mid \clF_t) & = (Cg_i)(X_t) (Cg_j)(X_t)
                                              + \E(W_{t+1}(i) W_{t+1}(j) \mid \clF_t)\\
  & = c(X_t) c(X_t)
  + \E(W_{t+1}(i) W_{t+1}(j) \mid \clF_t).
\end{align*}
Now, the left-hand side
\begin{align*}
  \E (g_i(Z_{t+1})  g_j(Z_{t+1}) \mid \clF_t) & = \sum_{z \in \bO}
                                                g_i(z) g_j(z)
                                                C(X_t,z) \\
  & = g_i(0)g_j(0) C(X_t,0)+\sum_{z =1}^m
                                                g_i(z) g_j(z)
                                                C(X_t,z) \\
  & = C(X_t,0) + \delta_{ij} C(X_t,i),
\end{align*}
where $\{\delta_{ij}:1\leq i\leq m,1\leq j\leq m\}$ is the Kronecker
$\delta$.  Combining,
\begin{equation*}
\E(W_{t+1}(i) W_{t+1}(j) \mid \clF_t)= C(X_t,0) + \delta_{ij} C(X_t,i)
-c(X_t) c(X_t),\quad 0\leq t\leq T-1,\quad 1\leq i,j\leq m.
\end{equation*}
Expressed in the matrix form,
\begin{equation*}
\E (W_{t+1} W_{t+1}^\tp \mid \clF_t) = R(X_t),\quad 0\leq t\leq T-1.
\end{equation*}
The meaning of $R(x)$ is as
 follows: 
 \begin{equation*}
 u^\tp R(x) u = C(x,0) (-1^\tp u)^2 + \sum_{i=1}^m C(x,i) (u(i))^2 -
 \left( C(x,0) (-1^\tp u) + \sum_{i=1}^m C(x,i) u(i) \right)^2,\;\;
 x\in \bS, \;\;u\in\Re^m,
\end{equation*}
where $1^\tp u = \sum_{i=1}^m u(i)$.  Note that for each fixed
$x\in\bS$, $C(x,\cdot)$ is a $1\times m$ probability vector.  
Therefore, $u^\tp R(x) u$ is the variance of $\begin{bmatrix} 
  (-\ones^\tp u) \\ u\end{bmatrix}\in \Re^{m+1}$, with respect to
the probability vector $C(x,\cdot)$. 

\section{Proof of \Thm{thm:duality-principle} (Duality principle)}
\label{appdx:duality-principle}

We prove a result for a more general class of estimators of the form
\begin{equation*}
		S_T = c_0- \sum_{t=0}^{T-1} U_t^\tp e(Z_{t+1}),
\end{equation*}
 where $U\in\clU$ and $c_0$ is a deterministic constant (note that in
 the statement of \Thm{thm:duality-principle}, $c_0=\mu(Y_0)$).  
For any such estimator, we show 
\begin{align}\label{eq:duality_formula_more_general}
		\E\big(|F(X_T)-S_T|^2\big) = \sJ_T(U;F) + (\mu(Y_0) - c_0)^2.
\end{align}
This more general formula is useful for another proof (see
Appendix~\ref{appdx:nonlinear_predictor_optimal_control}). 

For the HMM, we have defined two martingale increment processes (see \Sec{sec:prelim_mg}):
\begin{align*}
  B_{t+1}(f) &= f(X_{t+1}) - (Af)(X_t),\quad 0\leq t\leq T-1,\\
  W_{t+1} &= e(Z_{t+1})- c(X_t) ,\quad 0\leq t\leq T-1.
\end{align*}
Taking $f=Y_{t+1}$,
\begin{equation*}
 B_{t+1}(Y_{t+1}) = Y_{t+1}(X_{t+1}) - (A Y_{t+1})(X_t) ,\quad 0\leq t\leq T-1.
\end{equation*}
Since $Y$ solves BS$\Delta$E~\eqref{eq:dual_BSDE},
\begin{equation*}
(AY_{t+1})(x) = Y_{t}(x) - c^\tp(x) (U_t + V_t(x)) + V_t^\tp(x) e(Z_{t+1}),\quad
\forall x\in \bS ,\quad 0\leq t\leq T-1,
\end{equation*}
and therefore at $x=X_t$,
\begin{equation*}
(AY_{t+1})(X_t) = Y_{t}(X_t) -  c^\tp(X_t) (U_t + V_t(X_t)) + V_t^\tp(X_t) e(Z_{t+1}) ,\quad 0\leq t\leq T-1.
\end{equation*}
Because $W_{t+1} = e(Z_{t+1}) - c(X_t)$, we have
\begin{align*}
  c^\tp(X_t) U_t &= U_t^\tp e(Z_{t+1}) - U_t^\tp W_{t+1},\quad 0\leq t\leq T-1, \\
  c^\tp(X_t) V_t(X_t) &= V_t^\tp(X_t) e(Z_{t+1}) - V_t^\tp(X_t) W_{t+1} ,\quad 0\leq t\leq T-1,
\end{align*}
and thus,
\begin{equation*}
(AY_{t+1})(X_t) = Y_{t}(X_t) - U_t^\tp e(Z_{t+1}) + (U_t + V_t(X_t))^\tp W_{t+1},\quad 0\leq t\leq T-1.
\end{equation*}
Therefore,
\begin{align*}
  Y_{t+1}(X_{t+1}) & = (A Y_{t+1})(X_t) + B_{t+1}(Y_{t+1}) ,\quad 0\leq t\leq T-1 \\
  &= Y_{t}(X_t) - U_t^\tp e(Z_{t+1}) + (U_t + V_t(X_t))^\tp W_{t+1} + B_{t+1}(Y_{t+1}) ,\quad 0\leq t\leq T-1.
\end{align*}
Summing over $t=0,1,2,\hdots,T-1$, using the form
of the estimator $S_T$ and because $Y_T=F$,
\begin{equation*}
F(X_T) - S_T = (Y_0(X_0) -c_0) +\sum_{t=0}^{T-1} N_{t+1},
\end{equation*}
where
\begin{equation*}
N_{t+1} := (U_t + V_t(X_t))^\tp W_{t+1} + B_{t+1}(Y_{t+1}),\quad t=0,1,\hdots,T-1,
\end{equation*}
is a sum of two martingale increment processes.

Formula~\eqref{eq:duality_formula_more_general} is
obtained from squaring both sides and taking expectations.  Formulae
for the non-zero terms are as follows (for this purpose, denote $\clG_{t}:=\clF_{t-1}\vee \clZ_{t}$ for $t=1,2,\hdots, T$):
\begin{align*}
  \E  \left(|Y_0(X_0) - \mu(Y_0)|^2 \right) & = \var(Y_0(X_0)), \\
  \E \left( ((U_t^\tp + V_t^\tp(X_t)) W_{t+1})^2 \right)& = \E \left( (U_t^\tp + V_t^\tp(X_t))
                                     \E(W_{t+1} W_{t+1}^\tp|\clF_t)
                                                          (U_t + V_t(X_t))\right), \\
                                 &=\E \left( (U_t^\tp + V_t^\tp(X_t))
                                     R(X_t)
                                                          (U_t + V_t(X_t))\right) \\
\E \left( (B_{t+1}(Y_{t+1}))^2 \right)  &= \E \left(
                                          \E((B_{t+1}(Y_{t+1}))^2
                                          |\clG_{t+1})\right) =
                                          \E((\Gamma Y_{t+1})(X_t)).
\end{align*}
The cross-terms are zero because
\begin{align*}
                                                                     \E
                                                                     \left((Y_0(X_0)
                                                                     -
                                                                     \mu(Y_0))
                                                                     (U_t^\tp
                                                                     +
                                                                     V_t^\tp(X_t))
                                                                     \E(W_{t+1}|\clF_t)
                                                                     \right)
                                                                     &=0, \\
     \E  \left((Y_0(X_0) - \mu(Y_0)) \E(B_{t+1}(Y_{t+1}) |\clG_{t+1}
     )\right) &= 0,\\
    \E \left((U_t^\tp+V_t^\tp (X_t)) W_{t+1} \E(B_{t+1}(Y_{t+1})
  |\clG_{t+1})\right) &=0,
\end{align*}
and for $\tau > t$,
\begin{align*}
 \E\left((U_t^\tp + V_t^\tp(X_t)) W_{t+1} (U_\tau + V_\tau(X_\tau)) \E(W_{\tau+1}
  |\clF_\tau)\right) &= 0 ,\\
  \E \left(B_{t+1}(Y_{t+1}) \E((U_\tau^\tp+V_\tau^\tp(X_\tau))
  W_{\tau+1}|\clF_{\tau})\right) &=0 ,
  \\
  \E \left( B_{t+1}(Y_{t+1})
  \E(B_{\tau+1}(Y_{\tau+1})|\clG_{\tau+1})\right) &= 0, \\
      \E \left((U_t^\tp+V_t^\tp(X_t)) W_{t+1} \E(B_{\tau+1}(Y_{\tau+1})
  |\clG_{\tau+1})\right) &=0.
\end{align*}

\section{Proof of \Prop{prop:nonlinear_predictor_optimal_control}}
\label{appdx:nonlinear_predictor_optimal_control}

From the existence result described
in~\Prop{prop:existence_nonlin_predictor_rep}, there exists a 
$c^*\in\Re$ and $U^*=\{U_t^*:0\leq t\leq T-1\}\in\clU$ such that
  \begin{equation*}
   \pi_T(F) = c^*- \sum_{t=0}^{T-1} (U_t^*)^\tp e(Z_{t+1}),\quad \sP\text{-a.s.}
 \end{equation*}  
 From the duality principle (see the formula~\eqref{eq:duality_formula_more_general} shown in
  Appendix~\ref{appdx:duality-principle}),
  \begin{equation*}
\E (|F(X_T) - \pi_T(F)|^2) = \sJ_T(U^*;F) + (c^* - \mu(Y_0^*))^2,
\end{equation*}
where $Y_0^*$ is obtained from solving the BS$\Delta$E~\eqref{eq:dual_BSDE} with control
$U=U^*$.

We show that $U^*$ is an optimal control.  Suppose there exists a
$\tilde{U}\in \clU$  such that 
\begin{equation*}
\sJ_T(U^*;F)  \geq \sJ_T(\tilde{U};F).
\end{equation*}
Then set $\tilde{S} = \mu(\tilde{Y}_0) - \sum_{t=0}^{T-1}
  \tilde{U}_t^\tp e(Z_{t+1})$ where $\tilde{Y}_0$ is obtained from solving the BS$\Delta$E~\eqref{eq:dual_BSDE} with control
$U=\tilde{U}$.  Then
  using the duality principle,
  \begin{equation*}
\sJ_T(\tilde{U};F) = \E (|F(X_T) - \tilde{S}|^2) \geq \E (|F(X_T) - \pi_T(F)|^2),
\end{equation*}
where the inequality is from the MMSE property of the conditional expectation.

Combining,
\begin{align*}
  \E (|F(X_T) - \pi_T(F)|^2) &= \sJ_T(U^*;F) + (c^* - \mu(Y_0^*))^2 \\
                             & \geq  \sJ_T(\tilde{U};F) + (c^* - \mu(Y_0^*))^2 \\
& = \E (|F(X_T) - \tilde{S}|^2) + (c^* - \mu(Y_0^*))^2
  \\
  & \geq \E (|F(X_T) - \pi_T(F)|^2) + (c^* - \mu(Y_0^*))^2 .
\end{align*}   
It then follows that $c^* = \mu(Y_0^*)$ and all the inequalities
  are in fact equalities.  In particular, 
  \begin{equation*}
\sJ_T(\tilde{U};F) = \sJ_T(U^*;F)  = \E (|F(X_T) - \pi_T(F)|^2) = \text{MMSE}.
\end{equation*}
This proves existence---both $\tilde{U}$ and $U^*$ are optimal controls
that attain the optimal value given by MMSE.  Moreover,
\begin{equation*}
\E (|F(X_T) - \tilde{S}|^2) = \E (|F(X_T) - \pi_T(F)|^2) \implies
\tilde{S} = \pi_T(F),\quad \sP\text{-a.s.}.
\end{equation*}
because of the uniqueness property of the conditional expectation.

\begin{remark}\label{rem:uniqueness_of_U}
To conclude uniqueness (that is, $U^*=\tilde{U}$, $\sP$-a.s.) requires
additional assumption on the model.  For example, a sufficient
condition for the same is to assume $C(x,z)>0$ for all $x\in\bS$ and $z\in\bO$.
Then the proof of \Prop{prop:existence_nonlin_predictor_rep} shows
that $U^*$ is unique (see
Appendix~\ref{proof:prop:existence_nonlin_predictor_rep}).  Because
$U^*$ is an optimal control input, it follows that the optimal
control input is unique. 
\end{remark}

\section{Proof of  \Thm{thm:optimal-solution}}
\label{appdx:proof_of_optimal-solution}

The formula for the optimal control is easiest to see from the consideration of the OCP~\eqref{eq:dual-optimal-control} for $T=1$. 

\subsection{Formula for the optimal control for $T=1$}
With $T=1$, the OCP~\eqref{eq:dual-optimal-control} is
\begin{align*}
\min_{U_0\in \Re^m} \quad \sJ_1(U_0;F)  &=
\mu(Y_0^2) - \mu(Y_0)^2 + 
                                        \E \Big( l (F,V_0,U_0\,;X_0)
                                        \Big), \\
\text{subject to} \quad Y_0(x) &= (AF)(x) + c^\tp(x) (U_0 + V_0(x)) -
                                 V_0^\tp(x) e(Z_1),\quad x\in\bS.
\end{align*}
Because $F(x)\in \clZ_1$, there exists the deterministic $s:\bS\times
\bO\to\Re$ such that
\begin{equation*}
F(x) = s(x,Z_1),\quad x\in\bS.
\end{equation*}
Define
\begin{equation*}
f(x):=\frac{1}{m+1} \sum_{z\in\bO} s(x,z),\quad \tilde{s}_i(x):=
s(x,i) - f(x),\quad i=1,2,\hdots,m,\quad x\in\bS.
\end{equation*}
Then using \Prop{prop:linear_algebra}, 
\begin{equation*}
s(x,z) = f(x) + \tilde{s}(x) e(z),\quad z\in \bO,\quad x\in \bS,
\end{equation*}
where $\tilde{s}(\cdot)=\begin{bmatrix} \tilde{s}_1(\cdot) &\tilde{s}_2(\cdot)
  &\hdots &\tilde{s}_m(\cdot)  \end{bmatrix}_{d\times m}$. Therefore,
\begin{equation*}
F(x) = s(x,Z_1) = f(x) + \tilde{s}(x) e(Z_1),\quad x\in \bS.
\end{equation*}
Based on the decomposition above,
\begin{align*}
  (AF)(x) &= (Af)(x) + (A\tilde{s})(x) e(Z_1),\quad x\in \bS, \\
  &= Y_0(x) - c^\tp(x) (U_0 + V_0(x)) + V_0^\tp(x) e(Z_1),\quad x\in\bS,
\end{align*}
which gives
\begin{align*}
  V_0(x) &= (A\tilde{s})^\tp(x) ,\quad x\in\bS, \\
  Y_0(x) &= (Af)(x) + c^\tp(x) (U_0 + V_0(x)),\quad x\in\bS.
\end{align*}
Based on this, the OCP reduces to a standard (deterministic) linear
quadratic (LQ) problem
\begin{align*}
\min_{U_0\in \Re^m} \quad \sJ_1(U_0;F)  &=
\mu(Y_0^2) - \mu(Y_0)^2 + 
                                         \sum_{x} \mu(x) (U_0+V_0(x))^\tp
                                          R(x) (U_0 + V_0(x))
                                          + \E (\Gamma F(X_0)), \\
\text{subject to} \quad Y_0(x) &= (Af)(x) + c^\tp(x) (U_0 + V_0(x)),\quad x\in\bS.
\end{align*}
where note $\E (\Gamma F(X_0))$ is not affected by the control
$U_0$. The LQ problem is readily solved to obtain the formula for the optimal
control,
\begin{equation*}
U_0^\opt = \phi(Y_0^\opt,V_0;\mu),
\end{equation*}
where
\begin{equation*}
Y_0^\opt(x) = (Af)(x) + c^\tp(x) (U_0^\opt + V_0(x)),\quad x\in\bS.
\end{equation*}
Let $U_0 = U_0^\opt + \tilde{U}_0$ then
\begin{equation*}
Y_0(x) = Y_0^\opt(x) + c^\tp(x) \tilde{U}_0 ,\quad x\in\bS,
\end{equation*}
A standard completion-of-square argument is used to show that
\begin{align*}
 \sJ_1(U_0;F) &=  \sJ_1(U_0^\opt;F) +  \mu((c^\tp \tilde{U}_0)^2) - \mu(c^\tp \tilde{U}_0)^2 + 
                                         \sum_{x} \mu(x) \tilde{U}_0^\tp
                R(x) \tilde{U}_0 \nonumber \\
  &=\sJ_1(U_0^\opt;F) + \langle \tilde{U}_0,\tilde{U_0}\rangle_{p_0},\label{eq:last_step_formula}
\end{align*}
where $p_0=\mu (C)$.  Because the calculations are entirely identical
also for the general case, these are included in
\Sec{sec:calc_for_extra} at the end of this proof. 


From 
\Prop{prop:nonlinear_predictor_optimal_control}, 
\begin{equation*}
\sJ_1(U_0^\opt;F) = \text{MMSE} = \E( |F(X_1)- \pi_1(F)|^2) = \E (\pi_1(F^2)- \pi_1(F)^2),
\end{equation*}
where the last equality is from the use of the tower property.
Summarizing, 
\begin{equation}\label{eq:base_case}
\sJ_1(U_0;F) = \E (\pi_1(F^2)- \pi_1(F)^2) + \langle U_0-U_0^\opt, U_0-U_0^\opt \rangle_{p_0}.
\end{equation}

\subsection{Proof of \Thm{thm:optimal-solution}}

\newP{Notation} 
For a function $f\in \Re^d$, and $t\in\mathbb{T}$, denote $
\clV_t(f) := \pi_t(f^2) - \pi_t(f)^2
$. Note $\clV_t(f)$
represents the conditional variance of the random variable $f(X_t)$ 
because $\clV_t(f)=\E(|f(X_t)-\pi_t(f)|^2 \mid \clZ_t)$.  

\medskip

\begin{proof}[of  \Thm{thm:optimal-solution}]
Define
\begin{align*}
\sJ_1(U_0;Y_1)  &:=
\mu(Y_0^2) - \mu(Y_0)^2 + 
                                        \E \Big( l (Y_1,V_0,U_0\,;X_0)
                \Big), \\
\sJ_{t+1}(U_0,\hdots,U_{t};Y_{t+1}) &:=
\sJ_{t}(U_0,\hdots,U_{t-1};Y_{t}) + \E (l(Y_{t+1},V_t,U_t;X_t) ),\quad t=1,2,\hdots,T-1.
\end{align*}
Note that at the terminal time, this gives the optimal control objective.

The proof of~\eqref{eq:optimal_control_value} is by induction based on
essentially repeating the calculations described for
$T=1$ in the preceding subsection.  Suppose it has already been shown that
\begin{equation*}
\sJ_{t}(U_0,\hdots,U_{t-1};Y_{t}) = \E \left( \sum_{s=0}^{t-1}
\langle \tilde{U}_s,\tilde{U}_s \rangle_{p_s} \right)+ \E ( \clV_t(Y_t)).
\end{equation*}
Our task is to show that 
\begin{equation*}
\sJ_{t+1}(U_0,\hdots,U_{t};Y_{t+1}) = \E \left( \sum_{s=0}^{t}
 \langle \tilde{U}_s,\tilde{U}_s \rangle_{p_s} \right) + \E ( \clV_{t+1}(Y_{t+1})),
\end{equation*}
where $\tilde{U}_s={U}_s - U_s^\opt$ for $0\leq s\leq t$.  At the
terminal time, this is the desired formula~\eqref{eq:optimal_control_value}.  

The base case ($t=1$) is proved in~\eqref{eq:base_case}.  Using the
induction hypothesis, we have
\begin{align*}
\sJ_{t+1}(U_0,\hdots,U_{t};Y_{t+1}) &=
                                      \sJ_{t}(U_0,\hdots,U_{t-1};Y_{t}) + \E (l(Y_{t+1},V_t,U_t;X_t) )\\
  &= \E \left( \sum_{s=0}^{t-1}
 \langle \tilde{U}_s,\tilde{U}_s \rangle_{p_s} \right) + \E ( \clV_t(Y_t)) + \E (l(Y_{t+1},V_t,U_t;X_t) )
\end{align*}
Now,
\begin{equation*}
  \E ( \left(\clV_t(Y_t) + l(Y_{t+1},V_t,U_t;X_t) \right) \mid \clZ_t ) = \clV_t(Y_t)
  + \sum_{x\in\bS} \pi_t(x)
(U_t + V_t(x))^\tp R(x) (U_t + V_t(x)) +
  \pi_t(\Gamma Y_{t+1}) ,
\end{equation*}
where the $(Y_t,V_t,U_t)\in\clZ_t$ are related to $Y_{t+1}\in\clZ_{t+1}$ via
the BS$\Delta$E,
\begin{equation*}
Y_t(x) = (AY_{t+1})(x) + c^\tp(x) (U_t + V_t(x)) - V_t^\tp(x)e(Z_{t+1}),\quad x\in\bS.
\end{equation*}
From the theory for BS$\Delta$E described in
Appendix~\ref{proof:prop:existence_BSDE}, for any given $Y_{t+1}\in
\clZ_{t+1}$, there exists a unique such $V_t\in \clZ_t$ such that
above holds.  Set
\begin{equation*}
U_t^\opt = \phi(Y_t^\opt,V_t;\pi_t),
\end{equation*}
where
\begin{equation*}
Y_t^\opt(x) = (AY_{t+1})(x) + c^\tp(x) (U_t^\opt + V_t(x)) - V_t^\tp(x)e(Z_{t+1}),\quad x\in\bS.
\end{equation*}
Let $U_t = U_t^\opt + \tilde{U}_t$ then
\begin{equation*}
Y_t(x) = Y_t^\opt(x) + c^\tp(x) \tilde{U}_t ,\quad x\in\bS,
\end{equation*}
A completion-of-square argument then gives (the calculations for the
same are included in \Sec{sec:calc_for_extra} at the end of this proof),
\begin{align*}
\clV_t(Y_t) & + \sum_{x\in\bS} \pi_t(x)
(U_t + V_t(x))^\tp R(x) (U_t + V_t(x)) 
\\
& =  \langle \tilde{U}_t,\tilde{U}_t \rangle_{p_t}  +\clV_t(Y_t^\opt) + \sum_{x\in\bS} \pi_t(x)
(U_t^\opt + V_t(x))^\tp R(x) (U_t^\opt + V_t(x)),
\end{align*}
and thus, upon adding $\pi_t(\Gamma Y_{t+1})$ to both sides,
\begin{align*}
  \clV_t(Y_t) + \E \big( l (Y_{t+1},V_t,U_t\,;X_t)|\clZ_t\big) =
  \langle \tilde{U}_t,\tilde{U}_t \rangle_{p_t} + \clV_t(Y_t^\opt) + \E \big( l (Y_{t+1},V_t,U_t^\opt\,;X_t)|\clZ_t\big).
\end{align*}
Therefore,
\begin{align*}
\sJ_{t+1}(U_0,\hdots,U_{t};Y_{t+1}) &=
                                      \sJ_{t}(U_0,\hdots,U_{t-1};Y_{t}) + \E (l(Y_{t+1},V_t,U_t;X_t) )\\
  &= \E \left( \sum_{s=0}^{t-1}
 \langle \tilde{U}_s,\tilde{U}_s \rangle_{p_s}  \right) + \E ( \clV_t(Y_t)) + \E
    (l(Y_{t+1},V_t,U_t;X_t) )\\
  &= \E \left( \sum_{s=0}^{t}
 \langle \tilde{U}_s,\tilde{U}_s \rangle_{p_s} \right) + \E(\clV_t(Y_t^\opt)) + \E \big( l
    (Y_{t+1},V_t,U_t^\opt\,;X_t) \big).
\end{align*}
From~\Prop{prop:nonlinear_predictor_optimal_control}, 
\begin{equation*}
\sJ_{t+1}(U_0^\opt,\hdots,U_t^\opt;Y_{t+1})  = \E(\clV_{t+1}(Y_{t+1})),
\end{equation*}
and we have established the induction formula for $t+1$.  Also
from~\Prop{prop:nonlinear_predictor_optimal_control},
\begin{equation*}
\pi_{t+1}(Y_{t+1}) = \mu (Y_0^\opt) -
                   \sum_{s=0}^{t} (U_s^\opt)^\tp e(Z_{s+1}),
                 \end{equation*}
                 which proves~\eqref{eq:estimator-t}.
\end{proof}

\subsection{Details of the completion-of-square calculation}
\label{sec:calc_for_extra}

Let $U_t = U_t^\opt + \tilde{U}_t$ and $Y_t = Y_t^\opt + \tilde{Y}_t$
where $\tilde{Y}_t(x) = c^\tp (x) \tilde{U}_t$.  Then
\begin{align*}
 \clV_t(Y_t) & +  \sum_{x\in\bS} \pi_t(x)
(U_t + V_t(x))^\tp R(x) (U_t + V_t(x))  = \left(
   \clV_t(\tilde{Y}_t) + \sum_{x\in\bS} \pi_t(x)
(\tilde{U}_t)^\tp R(x) \tilde{U}_t\right) 
\\
& \qquad +  \left( \clV_t(Y_t^\opt) + \sum_{x\in\bS} \pi_t(x)
  (U_t^\opt + V_t(x))^\tp R(x) (U_t^\opt + V_t(x)) \right) + \text{(cross-term)} ,
\end{align*}
where the cross-term is given by,
\begin{align*}
\text{(cross-term)}  &= 2\left( \pi_t( Y_t^\opt c^\tp) \tilde{U_t} - \pi_t( Y_t^\opt) \pi_t(
  c^\tp) \tilde{U_t}  + \sum_{x\in\bS} \pi_t(x) 
  (U_t^\opt + V_t(x))^\tp R(x) \tilde{U}_t \right)\\
  & = 2 \, \tilde{U_t}^\tp \left( \pi_t((c-\pi_t(c))Y_t^\opt) + \pi_t(R)
    U_t^\opt + \pi_t(R V_t) \right) = 0.
\end{align*}
It remains to show that
\begin{equation*}
\clV_t(\tilde{Y}_t) + \sum_{x\in\bS} \pi_t(x) (\tilde{U}_t)^\tp R(x)
\tilde{U}_t = \langle \tilde{U}_t,\tilde{U}_t \rangle_{p_t}.
\end{equation*}
Because $\tilde{Y}_t(x) = c^\tp (x) \tilde{U}_t$,
\begin{equation*}
\clV_t(\tilde{Y}_t) + \sum_{x\in\bS} \pi_t(x) (\tilde{U}_t)^\tp R(x)
\tilde{U}_t = \tilde{U}_t^\tp \pi_t (c c^\tp) \tilde{U}_t-
\left(\pi_t(c)^\tp \tilde{U}_t\right)^2 +  \tilde{U}_t^\tp (\pi_t(R)) \tilde{U}_t.
\end{equation*}
From the definition of $R(x)$,
\begin{align*}
  R(x) + c(x) c^\tp(x) &= \text{diag}(c(x)) + C(x,0) (I + \ones
                         \ones^\tp),\quad x\in\bS\\
  \therefore,\quad \pi_t(R+c c^\tp) &=\text{diag}(\pi_t(c)) + p_t(0) (I + \ones
                                      \ones^\tp).
\end{align*}
Then
\begin{align*}
\tilde{U}_t^\tp (\pi_t(R+c c^\tp)) \tilde{U}_t &=
                                                                     \sum_{i=1}^m
                                                                     (
                                                                     (\pi_t(c))(i)
                                                                     +
                                                                     p_t(0))
                                                                     (\tilde{U}_t(i))^2+
                                                                     p_t(0)
                                                                     (\sum_{i=1}^m
                                                                     \tilde{U}_t(i))^2
  \\
 &= \sum_{i=1}^m  p_t(i) (\tilde{U}_t(i))^2+ p_t(0) (\sum_{i=1}^m \tilde{U}_t(i))^2.    
\end{align*}
Finally,
\begin{equation*}
\pi_t(c)^\tp \tilde{U}_t = \sum_{i=1}^m
(\pi_t(c))(i) \tilde{U}_t(i)) = \sum_{i=1}^m p_t(i) \tilde{U}_t(i)) +
p_t(0) (-\sum_{i=1}^m \tilde{U}_t(i)).
\end{equation*}
and the result follows.

\section{Proof of~\Prop{prop:optimal_control_formula_T=1} (Example with $m=1$ and $T=1$)}
\label{sec:simple_case}

\begin{figure}[h]
\begin{center}
    \begin{tikzpicture}[node distance=2cm, auto]
        \node (X0) [thick, circle, draw] {\(X_0\)};
        \node (X1) [thick ,circle, draw, right=of X0] {\(X_1\)};
        \node (Z1) [thick, rectangle, draw, above=of X1] {\(e(Z_1)\)}; 
        
        \draw[thick,->] (X0) -- (X1);
        \draw[thick,->] (X0) -- (Z1);
        
        \draw[thick,->] ([yshift=-1cm]X0.west) -- ([yshift=-1cm]X1.east); 
        \node[below=0.8cm of X0] {\(t=0\)};
        \node[below=0.8cm of X1] {\(t=1\)};
        \draw ([yshift=-0.7cm]X0.south) -- ++(0,0.2); 
        \draw ([yshift=-0.7cm]X1.south) -- ++(0,0.2); 
    \end{tikzpicture}
  \end{center}
  \vspace*{-20pt}
  \caption{Graphical model for the HMM for $T=1$.}\label{fig:HMM_T=1}
  \end{figure}

For $T=1$, there are only three random variables $(X_0,X_1,e(Z_1)) \in \bS\times \bS \times
\{-1,1\}$ whose relationship is depicted in \Fig{fig:HMM_T=1}. Our
interest is to compute the conditional expectation
$\pi_1(F)=\E(F(X_1)|\clZ_1)$ for $F\in\clZ_1$.  The formula for the
same is given by,
\begin{equation}\label{eq:one_step_filter_formula}
\pi_{1}(F) = \begin{cases} \frac{\mu (c^+ (A F))}{\mu 
    (c^+)}, & \text{if} \; e(Z_{1})=1, \\[5pt]
  \frac{\mu (c^- (A F))}{\mu
    (c^-)}, & \text{if} \; e(Z_{1})=-1. \end{cases} 
\end{equation}
The goal is to derive this formula from solving the OCP~\eqref{eq:dual-optimal-control} with $T=1$.

With $T=1$, the OCP~\eqref{eq:dual-optimal-control} is an example of a
single-stage OCP as follows:
\begin{align*}
&\min_{u_0\in\Re} \quad \eta(u_0):=\mu(y_0^2) - \mu(y_0)^2  + \mu((u_0+v_0)^2r) +
  \E((\Gamma F)(X_0)) \\
&\text{subject to:}\quad y_0(x) = (AF)(x) + c(x) (u_0 + v_0(x)) - v_0(x)e(Z_1),\quad x\in\bS,
\end{align*}
where we use the lower case notation for $u_0,y_0,v_0$ to stress the
fact that these are all deterministic (the cost
$\eta(u_0)=\sJ_1(U_0;F)$ for $U_0=u_0$).  The two random variables 
$F(\cdot)$ and $Z_1$ are denoted by capital letters. 
The third term in the objective, $\E((\Gamma F)(X_0))$, is
not affected by the choice of the decision variable $u_0$.

The single-stage OCP is to choose a real number $u_0$ to minimize
the quadratic objective, subject to a linear
constraint.  The only reason that the
solution is not entirely elementary is because the constraint is random.   

Let
\[
  F(x) = \begin{cases} F^+(x), & \text{if}\; e(Z_1)=1, \\
    F^-(x), & \text{if}\; e(Z_1)=-1. \end{cases}
\]
Then the constraint is given by two deterministic equations
\begin{align*}
  y_0(x) &= (AF^+)(x) + c(x) (u_0 + v_0(x)) - v_0(x) ,\quad x\in\bS,\\
  y_0(x) &= (AF^-)(x) + c(x) (u_0 + v_0(x))  + v_0(x) ,\quad x\in\bS,
\end{align*}
which are readily solved to obtain formulae for the solution $(y_0,v_0)$ as
follows:
\begin{align*}
  v_0(x) &= (A\tilde{f})(x), \quad x\in\bS,\\
  y_0(x) &= (Af)(x) + c(x) (u_0 + v_0(x)), \quad x\in\bS,
\end{align*}
where $f(x) = \frac{F^+(x)+F^-(x)}{2}$ and $\tilde f(x) =
\frac{F^+(x)-F^-(x)}{2}$. For the particular case where $F$ is
deterministic, $v_0=0$. 

 To solve the optimization problem, we consider two cases as follows:
\begin{itemize}
\item The case where $\mu(r)\neq 0$ (equivalently, $1-\mu(c)^2 \neq 0$).
\item The case where  $\mu(r) = 0$ (equivalently, $1-\mu(c)^2 = 0$). 
\end{itemize}

\newP{$\bullet$ Case where $\mu(r)\neq 0$} Substitute the solution
for $y_0$ into the quadratic objective, upon taking the
derivative and setting it to zero, the formula for optimal control is
given by,
\begin{align*}
  u_0 = u_0^\opt & :=
        \frac{-1}{\mu(r)} \left( \mu(y_0(c-\mu(c))) + \mu
  (v_0 r) \right) =  \frac{-1}{1-\mu(c)^2} (R_1+R_2),
\end{align*}
where $R_1=\left(\mu((Af)c) - \mu(Af)\mu(c) \right)$ and
$R_2=\left(\mu(v_0) - \mu(v_0 c) \mu(c) \right)$.  This is the
formula~\eqref{eq:optimal_control_formula_T=1}.  
Moreover, a completion-of-square argument gives 
\[
\eta(u_0) =  \eta (u_0^\opt) + (1-\mu(c)^2) (u_0-u_0^\opt)^2,
\]
which shows that the optimal control is unique.

The resulting estimator is given by
\begin{align*}
  S_1 &= \mu(Af) + \mu(c(u_0+v_0)) + \frac{R_1+R_2}{1-\mu(c)^2} \, e(Z_1) \\
  &= \begin{cases}
  \mu(Af) + \mu(c v_0) + \frac{1-\mu(c)}{1-\mu(c)^2} (R_1+R_2), &
  \text{if}\;\; e(Z_1)=1, \\[5pt]
     \mu(Af) + \mu(c v_0) - \frac{1+\mu(c)}{1-\mu(c)^2} (R_1+R_2), &
  \text{if}\;\; e(Z_1)=-1.
    \end{cases}
\end{align*}
Using the identities $\mu(c^+- c^-)=\mu(c)$ and $\mu(c^++c^-)=1$, upon
simplifying, 
\begin{align*}
&\mu(Af) + \frac{R_1}{2\mu(c^+)} = \frac{\mu(c^+(Af))}{\mu(c^+)}
  , \quad \mu(c v_0) + \frac{R_2}{2\mu(c^+)}  =  \frac{\mu(c^+
  v_0)}{\mu(c^+)}, \\
& \mu(Af) - \frac{R_1}{2\mu(c^-)} = \frac{\mu(c^-(Af))}{\mu(c^-)}
  , \quad \mu(c v_0) - \frac{R_2}{2\mu(c^-)}  =  \frac{-\mu(c^-
  v_0)}{\mu(c^-)}.
\end{align*}
Upon combining the terms,
\[
S_1 = \begin{cases}
   \frac{\mu(c^+(Af+v_0))}{\mu(c^+)}, &
  \text{if}\;\; e(Z_1)=1, \\[5pt]
    \frac{\mu(c^-(Af-v_0))}{\mu(c^-)}, &
  \text{if}\;\; e(Z_1)=-1 .
    \end{cases}
\]
This coincides with the formula~\eqref{eq:one_step_filter_formula}
because $Af + v_0 = AF^+$ and $Af - v_0 = AF^-$.  From the duality
principle,
\[
\eta (u_0^\opt) =       \E (|F(X_1) - \pi_1(F)|^2).
\]

\newP{$\bullet$ Case where $\mu(r) = 0$} Set the control and
the estimator as 
\[
u_0 = 0,\quad S_1 = \mu(y_0).
\]
Because $1-\mu(c)^2 = 4 \mu(c^+) \mu(c^-)$, $1-\mu(c)^2 = 0$ iff
either $\mu(c^+) =0$ or $\mu(c^-)=0$.  We have
\begin{align*}
\mu(c^+) &= 0 \implies c(x) = c^+(x) - c^{-}(x) = -1,
  \;\;\forall\;x\in\text{supp}(\mu),\\
\mu(c^-) &= 0 \implies c(x) = c^+(x) - c^{-}(x) = +1,
  \;\;\forall\;x\in\text{supp}(\mu).
\end{align*}
Using the formula for $y_0$, this gives
\begin{align*}
\mu(c^+) &= 0 \implies S_1 = \mu(Af - v_0) = \mu(AF^{-}),\\
\mu(c^-) &= 0 \implies S_1 = \mu(Af + v_0) = \mu(AF^{+}).
\end{align*}
Because $\mu(c^\pm) = \sP(Z_1=\pm1)$,
\[
S_1 = \pi_1(F),\quad \sP\text{-a.s.}
\]

\section{Proof of \Prop{prop:dual_filter}}
\label{appdx:prop_dual_filter}

For $T=1$, the result follows from noting that with $Y_1=f$ (where $f$
is deterministic) the optimal BS$\Delta$E reduces to a BDE
\begin{equation*}
y_0 = (Af)(x) + c^\tp(x)  \phi(y_0,0;\mu),\quad (\because,\;\;V_0=0).
\end{equation*}
From \Thm{thm:optimal-solution},
\begin{equation*}
\pi_1(f) = \mu(y_0) - \phi(y_0,0;\mu)^\tp e(Z_1),\quad
\sP\text{-a.s.}, \quad f\in\Re^d
\end{equation*}
and therefore,
\begin{equation*}
\pi_1^{(z_1)}(f) = \mu(y_0) - \phi(y_0,0;\mu)^\tp e(z_1),\quad f\in\Re^d
\end{equation*}
Reduction to $m=1$ is
most readily seen from noting the formula for the conditional measure,
\begin{equation*}
\pi_1^{(z_1)}(f) = \frac{\pi(C(:,z_1) (Af))}{\pi(C(:,z_1))},\quad f\in \Re^d.
\end{equation*}
Because the right-hand side only depends upon $C(:,z_1)$, for the
purpose of computing $\pi_1^{(z_1)}(f)$, we can 
consider the fixed-point equation for a binary-valued HMM.


For $T=2$, repeating
the argument above,
\begin{equation*}
\pi_2^{(z_1,z_2)}(f) = \pi_1^{(z_1)}(y_1) -
\phi(y_1,0;\pi_1^{(z_1)})^\tp e(z_2),\quad (\because,\;\;
\pi_1^{(z_1)} \;\text{is deterministic}),
\end{equation*}
and therefore,
\begin{equation*}
\pi_2^{(z_1,z_2)}(f) = \mu(y_0) - \phi(y_0,0;\mu)^\tp e(z_1) -
\phi(y_1,0;\pi_1^{(z_1)})^\tp e(z_2).
\end{equation*}
The general case follows by induction.

\section{Pseudocodes for dual filter algorithms}
\label{appdx:algorithms}

\begin{algorithm}
  \caption{Dual filter $\clN^\dfltr$ (iterative)}
  \label{alg:dual_filter_iterative}
  \begin{algorithmic}[1]
  \REQUIRE HMM parameters $(A,C)$, observation sequence $z
  =[z_1,z_2,\hdots,z_T]\in\bO^T$, measure
  $\rho=\{\rho_0,\rho_1,\hdots,\rho_{T-1}\}\in \Re^{d\times T}$
  \ENSURE Measure $\rho^+ = \{\rho_1^+,\hdots,\rho_{T-1}^+,\rho_{T}^+\}\in \Re^{d\times T}$
  
  \STATE $T \gets$ length of $z$
  \STATE $f \gets I_d$ \hfill \% Eq.~\eqref{eq:opt_BDE_b} (set f to identity matrix)
  
  \FOR{$t = T$ to $1$} 
      \STATE $c \gets 2 \cdot C(:,z_{t}) - 1$ \hfill \% Set $c= c^+-c^-$
      \STATE $u \gets \text{compute\_optimal\_control}(\rho_{t-1}, {A}
      f, 0, c)$ \hfill \% Eq.~\eqref{eq:opt_BDE_a2} (Algorithm~\ref{alg:compute_u})
      \STATE $f \gets A f + c \cdot u$ \hfill \% Eq.~\eqref{eq:opt_BDE_a}
      \STATE $u_{t-1} \gets u$
      \STATE $y_{t-1} \gets f$
  \ENDFOR
  
  \STATE $s \gets  \rho_0 \cdot y_0$
  
  \FOR{$t = 1$ to $T$}
  \STATE $s \gets  s - u_{t-1}$
  \STATE $\rho^+_t \gets s \cdot y_t^{\dagger}$ \hfill \% Eq.~\eqref{eq:opt_BDE_c}
  \STATE $\rho^+_t \gets \text{normalize\_distribution}(\rho_t^+)$
  \ENDFOR
  \RETURN $\rho^+$
  
  \end{algorithmic}
\end{algorithm}

\begin{algorithm}
  \caption{Dual filter $\clN^\dfltr$ (single-shot)}
  \label{alg:dual_filter_single_shot}
  \begin{algorithmic}[1]
  \REQUIRE HMM parameters $(A,C)$, initial measure $\mu$, observation sequence $z
  =[z_1,z_2,\hdots,z_T]\in\bO^T$
  \ENSURE Measure $\rho^+ = \{\rho_1^+,\rho_2^+,\hdots,\rho_{T}^+\}\in \Re^{d\times T}$
  
  \STATE $T \gets$ length of $z$
  
  \FOR{$t = 1$ to $T$}
    \STATE $f \gets I_d$ \hfill \% Eq.~\eqref{eq:opt_BDE_b} (set f to identity matrix)
    \STATE $s \gets 0$
    \FOR{$\tau = t$ to $1$}
      \STATE $c \gets 2 \cdot C(:,z_{\tau}) - 1$ \hfill \% Set $c= c^+-c^-$
      \STATE $u \gets \text{compute\_optimal\_control}(\rho^+_{\tau-1}, {A}f, 0, c)$ \hfill \% Eq.~\eqref{eq:opt_BDE_a2} (Algorithm~\ref{alg:compute_u})
      \STATE $f \gets A f + c \cdot u$ \hfill \% Eq.~\eqref{eq:opt_BDE_a}
      \STATE $s \gets s-u$
    \ENDFOR
    \STATE $s \gets  s+\mu \cdot f$
    \STATE $\rho^+_t \gets \text{normalize\_distribution}(s)$
  \ENDFOR
  
  \RETURN $\rho^+$
  
  \end{algorithmic}
\end{algorithm}

\begin{algorithm}
\caption{Computation of control input $u$}
\label{alg:compute_u}
\begin{algorithmic}[1]
\REQUIRE Measure $\rho$, functions $g$, $v$, and $c$ (all are elements
of $\Re^d$)
\ENSURE Control $u\in\Re$
\IF{\(\rho(c)^2= 1\)}
    \STATE $u \gets 0$
\ELSE
    \STATE $u \gets -\frac{1}{1 - \rho(c)^2} \cdot \big( (\rho(g \cdot c) - \rho(g) \rho(c)) + (\rho(v) - \rho(v \cdot c) \rho(c)) \big)$
\ENDIF
\RETURN $u$

\end{algorithmic}
\end{algorithm}

\begin{algorithm}
  \caption{Initialization of $\rho$}
  \label{alg:initialize_rho}
  \begin{algorithmic}[1]
  \REQUIRE HMM parameters $(\mu,C)$, obs.~sequence $z
  =[z_1,z_2,\hdots,z_T]$
  \ENSURE Measure $\rho = \{\mu,\rho_1,\hdots,\rho_{T-1},\rho_T\}$
  \STATE $\rho_0 \gets \mu$
  
  \FOR{$t = 1$ to $T$}
  \STATE $\rho_t \gets C(:,z_t)$
  \STATE $\rho_t \gets \text{normalize\_distribution}(\rho_t)$
  \ENDFOR
  
  \RETURN $\rho$
  
  \end{algorithmic}
  \end{algorithm}

\begin{algorithm}
\caption{Projection and normalization of measure}
\label{alg:normalize_rho}
\begin{algorithmic}[1]
\REQUIRE Signed measure $\sigma$
\ENSURE Probability measure $\rho$

\STATE $\rho \gets \max (\sigma,0)$ \hfill \% clip negative values to 0
\STATE $\rho \gets \rho / \text{sum}(\rho) $\hfill \% normalize
  
\RETURN $\rho$

\end{algorithmic}
\end{algorithm}

\begin{algorithm}
  \caption{Nonlinear prediction $p$}
  \label{alg:predict_p}
  \begin{algorithmic}[1]
  \REQUIRE HMM parameters $C$, measure $\rho =
  \{\rho_0,\rho_1,\hdots,\rho_{T-1},\rho_T\}\in\Re^{d\times (T+1)}$
  \ENSURE Measure $p = \{p_1,\hdots,p_{T}\}\in\Re^{m\times T}$
  
  \FOR{$t = 1$ to $T$}
  \STATE $p_{t} \gets \rho_{t} \cdot C$
  \ENDFOR
  
  \RETURN $p$
  
  \end{algorithmic}
\end{algorithm}

  \begin{algorithm}
    \caption{Sampling a random matrix}
    \label{alg:sparse_stochastic}
    \begin{algorithmic}[1]
    \REQUIRE row dim $d$, column dim $m$, temperature $\tau$
    \ENSURE Random matrix $M\in\Re^{d\times m}$
    \FOR{$i = 1$ to $d$}
    \STATE $M(i,:) \gets \text{softmax}(\text{randn}(m)/\tau)$
    \ENDFOR
    \RETURN $M$
    
    \end{algorithmic}
    \end{algorithm}

    \newpage
    
\section{Additional operations in a transformer}
\label{sec:xfer}

  This section is based on~\citep[Ch.,~10]{jm3}.  Fix $t$. The output $y_t$ from concatenating the output of multiple heads is subject to the following operations:
  \begin{enumerate}
  \item Residual connection which means
    \begin{equation*}
 y_t\mapsto y_t + e_t.
    \end{equation*}
  \item Layer normalization which means
    \begin{equation*}
  y_t\mapsto \text{diag}(\gamma) \frac{y_t-\text{mean}(y_t)}{\text{std}(y_t)} + \beta.
  \end{equation*}
  where $\gamma\in\Re^d$ and $\beta\in\Re^d$ are learnable parameters (referred to as gain and offset).
\item Feedforward neural network
  \begin{equation*}
  y_t\mapsto \text{FFN}(y_t).
  \end{equation*}
\end{enumerate}

\subsection{Summary of operations in a single layer}

 Fix time $t$.  The following operations define a single layer in a transformer:
 \begin{align*}
   y_t & = \text{MultiHeadAttention}(e_t,[e_1,e_2,\hdots,e_{t-1}])\\
   y_t & = y_t + e_t \\
   y_t &= \text{LayerNorm}(y_t) \\
   y_t & = y_t + \text{FFN}(y_t) \\
   y_t & = \text{LayerNorm}(y_t)
 \end{align*}
 The learnable parameters in the MultiHeadAttention are the matrices $W_V,W_K,W_Q,W_O$.  For each of the two LayerNorm operations, the learnable parameters are the gains $\gamma$ and the offset $\beta$.  Additionally, the weights of the FFN are also learned.

\section{Numerical Experiments with Transformer}
\label{sec:app_transformer}

\begin{table}[t]
  \centering
  \begin{tabularx}{0.55\textwidth}{l l l}
    \toprule
    \textbf{Parameter} & \textbf{Transformer} & \textbf{Dual filter} \\
    \midrule
    Context length & $T=256$ & $256$ \\
    Vocabulary size & $m=65$ & $65$ \\
    State (embedding) dim & $d=384$  & $384$ \\
    Number of layers & $L=6$ & 6 \\
    Number of heads & $n_{\text{head}}=6$ & N/A \\
    FFN dimension & 1536 & N/A\\
    Output temperature & 1 & N/A\\
    \bottomrule
  \end{tabularx}
    \hfill
  \begin{tabularx}{0.42\textwidth}{l l}
    \toprule
    \textbf{Training Parameter} & \textbf{nanoGPT} \\
    \midrule
    Learning rate & $10^{-3}$ \\
    Minimum learning rate & $10^{-4}$ \\
    Learning rate decay & cosine \\
    Learning rate decay steps & 2000 \\
    Optimizer & AdamW \\
    Batch size & 10 \\
    Dropout & 0.1 \\
    \bottomrule
  \end{tabularx}
  \caption{Summary of model parameters for the numerical experiments. The left table lists the architectural parameters, while the right table lists the training parameters specific to the nanoGPT transformer.}
  \label{tab:model_parameters}
  \vspace*{-25pt}
\end{table}

The numerical experiments described in \Sec{sec:numerics} were
repeated using a nanoGPT transformer~\citep{karpathy2024nanogpt}.  The
observations are generated from an HMM.  The HMM parameters are the same
as described in \Sec{sec:numerics}, specifically, $d = 384$, $m = 65$, and $T =
256$, and other parameters are as follows:
\begin{itemize}
\item  The prior $\mu$ is set to the uniform probability vector: $
  \mu(x) = \frac{1}{d}$ for $x\in\bS$.
\item The transition matrix $A=A^\text{(circ)}$.
\item The emission matrix $C$ is randomly sampled. See
  Algorithm~\ref{alg:sparse_stochastic} for details.
\end{itemize}
For this model, the numerical results using the dual filter are given 
in~\Fig{fig:one_shot_algorithm_results}
and~\Fig{fig:iterative_algorithm_results}. 

As a first step, the nanoGPT is trained using the observation data
sampled from the $\text{HMM}(\mu,A,C)$.  The nanoGPT hyper-parameters were carefully tuned to
ensure optimal performance (see Table~\ref{tab:model_parameters}).  Note that the dual filter theory is
entirely model-based.  So, there is no training step to obtain the
numerical results reported in \Sec{sec:numerics}.

As a second step, the trained nanoGPT model is used for inference and
its prediction performance numerically evaluated.  For this purpose,
the observations $z=\{z_1,z_2,\hdots,z_T\}$ are generated from the
same $\text{HMM}(\mu,A,C)$ used for training.  The inference results were
mixed, depending upon the randomly sampled $C$.  There are two cases as follows:
\begin{itemize}
\item \textbf{Favorable case}. See left panel of
  \Fig{fig:transformer_results}.  Here, nanoGPT produces
  predictions $p_t$ that are good approximations of the ground truth
  computed from the model-based 
  nonlinear filter.  Quantitatively, the error
  $\varepsilon_t^{(\ell)}$ starts near $1$ at the first layer and
  progressively decays with each subsequent layer, reaching values in
  the range $10^{-1}$ to $10^{-4}$ at the final layer ($\ell = L$). 
The top-ten conditional probabilities closely match the ground
truth for all time points, indicating successful
learning and inference. 
\item \textbf{Unfavorable case}. See right panel of
  \Fig{fig:transformer_results}.  Here, the nanoGPT predictions are
  not accurate.  
The error $\varepsilon_t^{(\ell)}$ remains high throughout all layers and does not exhibit the progressive decay seen in the favorable case.
The top-ten conditional probabilities deviate substantially from the
ground truth, reflecting a failure to learn an accurate predictive
distribution. 
\end{itemize}
Notably, the results described in~\Fig{fig:one_shot_algorithm_results}
and~\Fig{fig:iterative_algorithm_results} are for the unfavorable
case.

The numerical results suggest that while the transformer can learn to
approximate the nonlinear filter in some cases, its performance is
sensitive to the choice of the emission matrix $C$. An empirical
observation is that in the favorable case, the conditional probability
$p_t$ is supported on a relatively small subset of the vocabulary
$\bO$ --- suggesting that the transformer performs well when the
prediction task is, in some sense, low-complexity. Without a suitable
theoretical framework, it is difficult to make this precise.




\begin{figure}[t]
  \centering
  \begin{minipage}{0.48\textwidth}
    \centering
    \includegraphics[width=\textwidth]{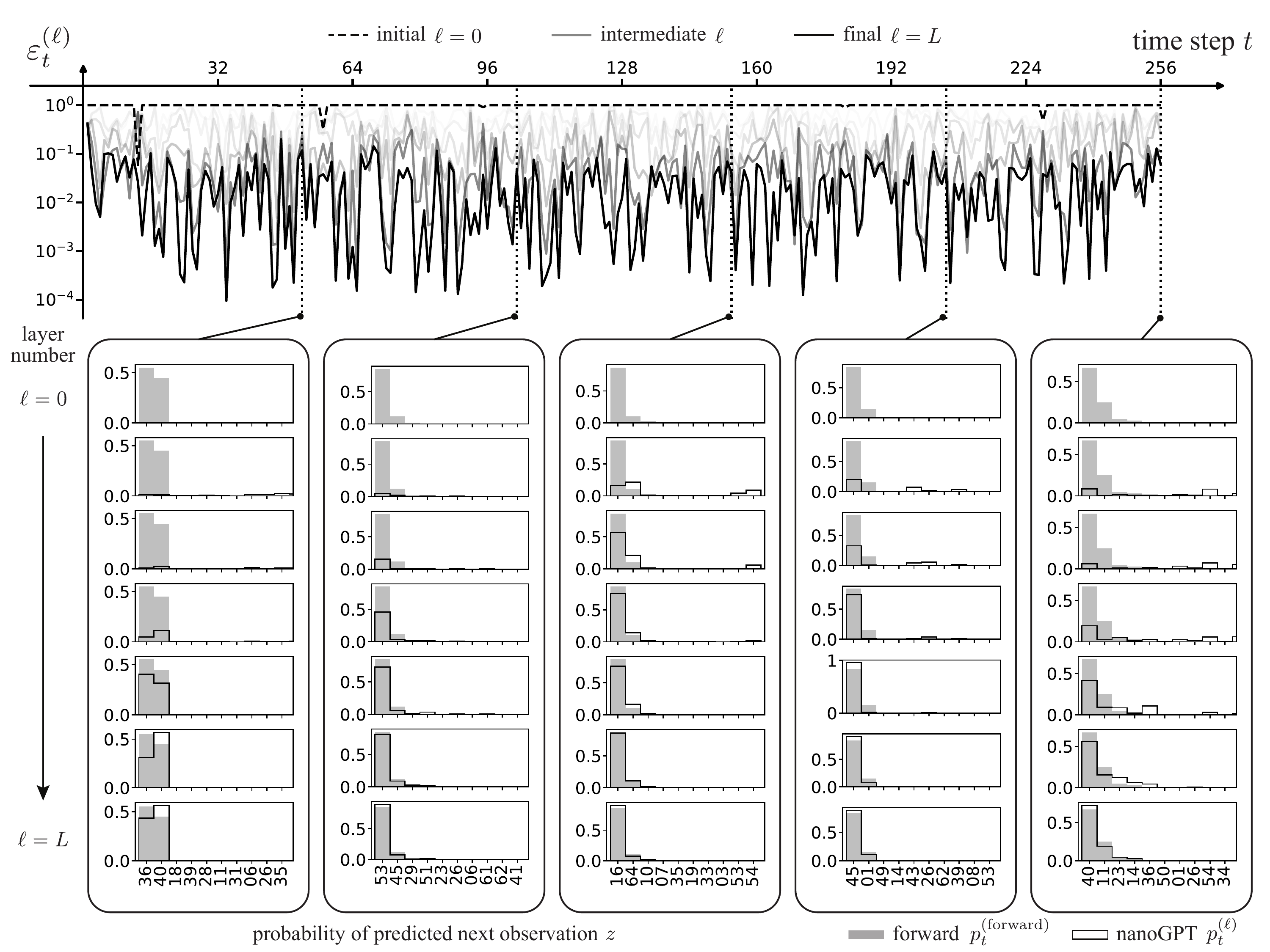}
  \end{minipage}
  \hfill
  \begin{minipage}{0.48\textwidth}
    \centering
    \includegraphics[width=\textwidth]{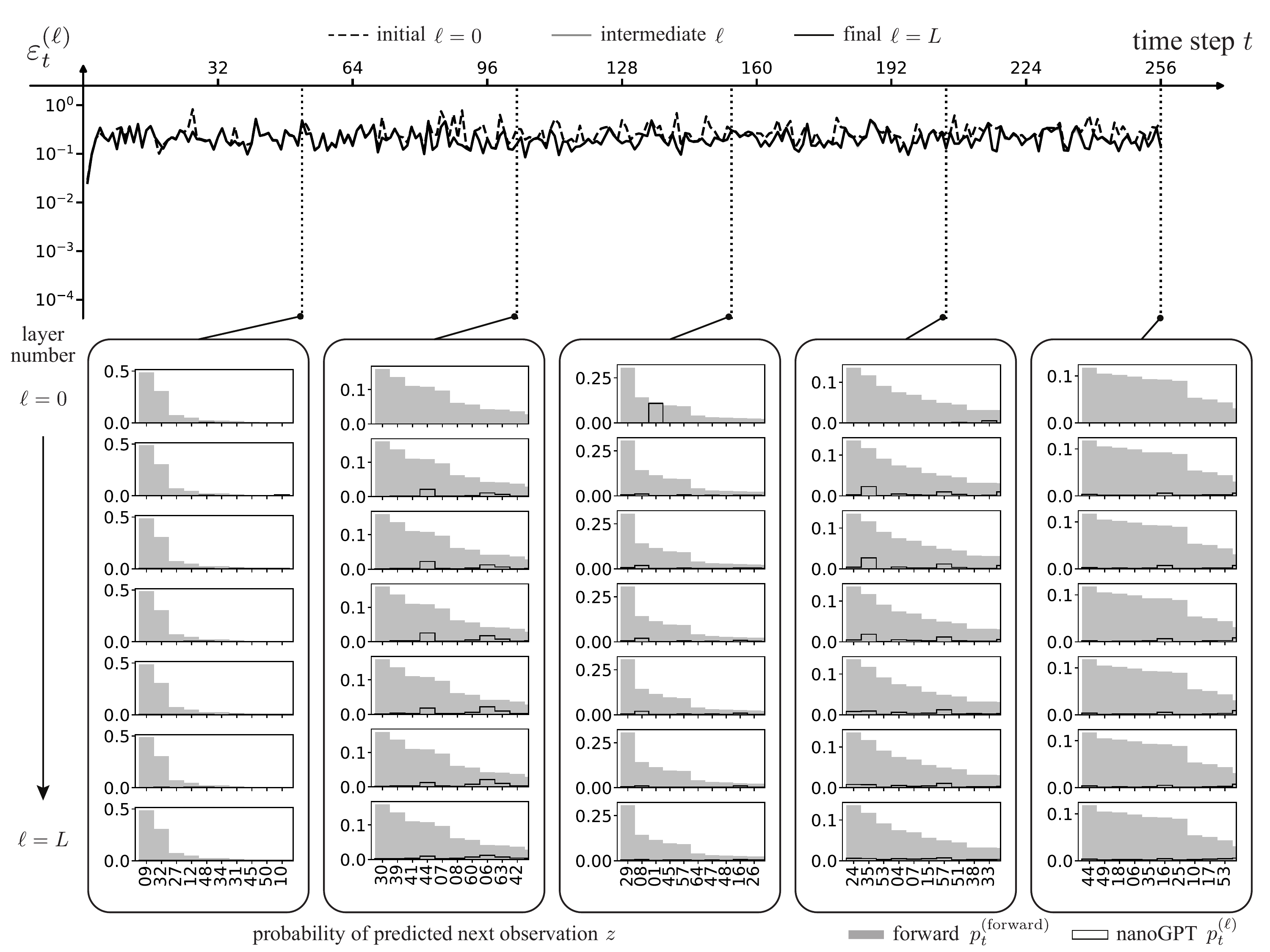}
  \end{minipage}
  \caption{
Comparison of transformer predictions against the ground truth from
the nonlinear filter, for two choices of the randomly sampled emission
matrix $C$. 
    \textbf{(Left)} Favorable case: the error decays from near $1$ to the range $10^{-1}$--$10^{-4}$ by the final layer, and the predicted conditional probability closely matches the ground truth.
    \textbf{(Right)} Unfavorable case (same HMM instance as
    \Fig{fig:iterative_algorithm_results}): the error remains high
    across all layers and the predicted conditional probability does
    not converge to the ground truth. Each panel shows: (Top) time traces of the per-step error $\{\varepsilon_t^{(\ell)} : 1 \leq t \leq T\}$ for layers $\ell=1,2,\hdots,L$ (with $L=6$); the dashed line is the error immediately after embedding ($\ell = 0$), and solid lines show subsequent layers with darker shades used as $\ell$ increases.
    (Bottom) top-ten conditional probabilities $\{p_t^{(\ell)}(z_i)\}$ at five representative time points, sorted in decreasing order; the gray shading is the ground truth and the solid line is the transformer output.
  }
  \vspace*{-25pt}
  \label{fig:transformer_results}
\end{figure}

\bibliography{bibfiles/_master_bib_jin,bibfiles/jin_papers,bibfiles/extrabib,bibfiles/estimator_controller,bibfiles/transformer,bibfiles/transformer_math}

\end{document}